\title{\normalsize \bfseries \MakeUppercase{An Approximate Policy Iteration Viewpoint of \\Actor-Critic Algorithms}}
\author{\normalsize \scshape Zaiwei Chen\textsuperscript{1} and Siva Theja Maguluri\textsuperscript{2}\\
{\footnotesize
\textsuperscript{1}Caltech CMS, \href{mailto:zchen458@caltech.edu}{\textit{zchen458@caltech.edu}}}\\
{\footnotesize\textsuperscript{2}Georgia Tech ISyE, \href{mailto:siva.theja@gatech.edu}{\textit{siva.theja@gatech.edu}}}
}
\date{\vspace{-0.4 in}}
\begin{document}
	
\setlength\abovedisplayskip{3pt}
\setlength\belowdisplayskip{3pt}	
\maketitle
\pagestyle{fancy}
\voffset = 0pt
\footskip = 0.25 in
\fancyhf{} 
\fancyhead[R]{\thepage}
\fancyhead[C]{\scshape \small Z. Chen and S.T. Maguluri}

\begin{abstract}
In this work, we consider policy-based methods for solving the reinforcement learning problem, and establish the sample complexity guarantees. A policy-based algorithm typically consists of an actor and a critic. We consider using various policy update rules for the actor, including the celebrated natural policy gradient. In contrast to the gradient ascent approach taken in the literature, we view natural policy gradient as an approximate way of implementing policy iteration, and show that natural policy gradient (without any regularization) enjoys geometric convergence when using increasing stepsizes. As for the critic, we consider using TD-learning with linear function approximation and off-policy sampling. Since it is well-known that in this setting TD-learning can be unstable, we propose a stable generic algorithm (including two specific algorithms: the $\lambda$-averaged $Q$-trace and the two-sided $Q$-trace) that uses multi-step return and generalized importance sampling factors, and provide the finite-sample analysis. Combining the geometric convergence of the actor with the finite-sample analysis of the critic, we establish for the first time an overall $\Tilde{\mathcal{O}}(\epsilon^{-2})$  sample complexity for finding an optimal policy (up to a function approximation error) using policy-based methods under off-policy sampling and linear function approximation. 
\end{abstract}

\section{Introduction}
\label{sec:introduction}

In recent years, reinforcement learning (RL) has demonstrated impressive performance in solving practical problems, such as the game of Go \citep{silver2016mastering}, power systems \citep{zhang2019deep}, city navigation \cite{mirowski2018learning}, and nuclear plasma control \citep{degrave2022magnetic}. An RL problem in its nature is a sequential decision-making problem, and is usually modeled as a Markov decision process (MDP). To solve an MDP, the two most well-known methods are value iteration (VI) and policy iteration (PI). However, compared to MDPs, a main feature of RL is that the parameters of the environmental model, such as the transition probabilities and the reward function, are unknown to the agent. Therefore, due to the lack of knowledge about the model, either VI or PI is not directly implementable. To overcome this challenge, data-driven versions of VI and PI were developed. The resulting algorithms are called $Q$-learning, which is the RL-counterpart of VI, and approximate policy iteration (API), which is the RL-counterpart of PI. The analysis of either $Q$-learning or API mostly relies on the properties of the Bellman operator, such as the contraction property and the monotonicity property \citep{bertsekas1996neuro}.

On the other hand, an MDP at its heart is a continuous optimization problem in the space of policies, and a natural approach to solve a continuous optimization problem is to use gradient ascent/descent and its variants. Based on this viewpoint, the policy gradient algorithm was developed, the counterpart of which in RL is the popular actor-critic. Unlike the analysis of VI or PI, the analysis of policy gradient is largely based on the policy gradient theorem, and machineries from continuous optimization, such as the mirror descent analysis \citep{lan2021policy}. In contrast to the geometric convergence of VI or PI, policy gradient in general does not achieve geometric convergence. A variant of policy gradient, known as natural policy gradient, was recently shown to enjoy geometric convergence after being regularized. See Section \ref{subsec:literature} for a more detailed discussion about related work in this field.

In this work, we provide a different viewpoint on natural policy gradient. Specifically, we show that with properly chosen stepsizes, natural policy gradient can be viewed as an approximate version of PI, which enables us to establish the geometric convergence without requiring any regularization. To use the result in an actor-critic framework, we design a single time-scale TD-learning variant for policy evaluation that successfully overcomes the deadly triad. After combining the analysis of the actor and the critic, we establish an overall $\tilde{\mathcal{O}}(\epsilon^{-2})$ sample complexity for finding an optimal policy up to a function approximation error. The more detailed contributions of this work are summarized in the following.

\textit{Geometric Convergence of Natural Policy Gradient.} We show that when using either large enough constant stepsize or geometrically increasing time-varying stepsizes, natural policy gradient is approximately PI. Therefore, by using the contraction property and the monotonicity property of the Bellman operator, we establish geometric convergence of natural policy gradient. Importantly, unlike in existing literature, our result does not require using any kind of regularization techniques in the algorithm design.

\textit{Stable Single Time-Scale Off-Policy TD-Learning under Linear Function Approximation.}
To use our result for natural policy gradient in an actor-critic framework, we also need to specify the algorithm used for policy evaluation (i.e., the critic) and establish the finite-sample guarantees. To overcome the curse of dimensionality and to avoid the risk and/or expense in online sampling, we will  incorporate our critic algorithm with linear function approximation and off-policy learning. When both function approximation and off-policy learning are employed in TD-learning (which is essentially a bootstrapped stochastic iterative algorithm), the resulting algorithm can be unstable. As a result, function approximation, off-policy learning, and bootstrapping together are characterized as the ``deadly triad'' \citep{sutton2018reinforcement}. To overcome the deadly triad in the critic, we design a single time-scale TD-learning algorithm that uses multi-step return and generalized importance sampling factors. The proposed algorithm is provably stable and does not suffer from the high variance from using importance sampling. In addition, the mean-square error achieves the optimal $\mathcal{O}(1/k)$ convergence rate.

\textit{Establishment of the $\tilde{\mathcal{O}}(\epsilon^{-2})$ Sample Complexity for General Policy-Based Methods.}
After combining the geometric convergence of the actor and the $\mathcal{O}(1/k)$ convergence of the critic, we obtain an overall $\tilde{\mathcal{O}}(1/\epsilon^2)$ sample complexity for a general policy-based algorithm (which includes natural actor-critic as a special case) under off-policy sampling and linear function approximation. The sample complexity matches with that of typical value-based algorithms such as $Q$-learning \citep{li2020sample}. Even in the tabular setting, which is a special case of linear function approximation, this is the first time that an $\tilde{\mathcal{O}}(\epsilon^{-2})$ sample complexity is achieved for off-policy natural actor-critic, thereby advancing previous art in the literature.

\subsection{Related Literature}\label{subsec:literature}

Analogous to VI and PI for solving MDPs, RL algorithms can be divided into two categories: value-based method and policy-based method.
Popular value-space methods include $Q$-learning \citep{watkins1992q} and variants of TD-learning \citep{sutton1988learning}, both of which achieve the $\tilde{\mathcal{O}}(\epsilon^{-2})$ sample complexity \citep{li2020sample,chen2021finite,qu2020finite}. Popular policy-based methods include  actor-critic \citep{konda2000actor}, its variant natural actor-critic \citep{kakade2001natural}, and API \citep{bertsekas2011approximate}. A policy-based algorithm usually consists of an actor for policy improvement and a critic for policy evaluation. 

\textbf{TD-Learning.} The policy evaluation sub-problem is usually solved with TD-learning and its variants \citep{sutton1988learning}. The asymptotic convergence of TD-learning was established in \cite{tsitsiklis1994asynchronous,dayan1994td,bertsekas2009projected}. Finite-sample analysis of various TD-learning algorithms using on-policy sampling was performed in \cite{chen2021finite}, and using off-policy sampling in \cite{khodadadian2021finite,chen2021off}. In the function approximation setting, TD-learning with linear function approximation was studied in \cite{tsitsiklis1997analysis,lazaric2012finite,srikant2019finite,bhandari2018finite} when using on-policy sampling. In the off-policy linear function approximation setting, due to the presence of the deadly triad, TD-learning algorithms can diverge \citep{sutton2018reinforcement}. Variants of TD-learning algorithms such as TDC \citep{sutton2009fast}, GTD \citep{sutton2008convergent}, emphathic TD \citep{sutton2016emphatic}, and $n$-step TD (with a large enough $n$) \citep{chen2021NACLFA} were used to resolve the divergence issue, and the finite-sample bounds were studied in \cite{ma2020variance,wang2021finite,chen2021NACLFA}. Note that TDC, GTD, and emphatic TD are two time-scale algorithms, while vanilla $n$-step TD is single time-scale, it suffers from a high variance due to the cumulative product of the importance sampling factors. See Appendix \ref{ap:compare} of this work for a detailed discussion.

\textbf{(Natural) Policy Gradient.} The policy gradient method was proposed and was shown to converge in \cite{sutton1999policy,baxter2001infinite,agarwal2019theory}. natural policy gradient, proposed in \cite{kakade2001natural}, is a variant of policy gradient method where the inverse of the fisher information matrix was used as a pre-conditioner. The $O(1/k)$ convergence of natural policy gradient was shown in \cite{agarwal2019theory}. Later, by introducing regularization, geometric convergence of natural policy gradient was shown in \cite{lan2021policy,cen2021fast,cayci2021linear}. A recent paper \cite{xiao2022convergence} \footnote{\cite{xiao2022convergence} is subsequent to this work.} shows geometric convergence of natural policy gradient without regularization. Beyond geometric convergence, \cite{khodadadian2021linear} shows that asymptotically natural policy gradient converges at a super-linear rate. However, the natural policy gradient algorithm studied in \cite{khodadadian2021linear} uses adaptive stepsizes that depend on model parameters, and hence cannot be used in an actor-critic framework where the model is unknown to the agent. In this paper, we take a different perspective and view natural policy gradient as an approximate version of PI, which enables us to show the geometric convergence of natural policy gradient using only the properties (i.e., contraction and monotonicity) of the Bellman operator. Importantly, we do not require regularization, and our result can be directly used to establish an overall $\mathcal{O}(\epsilon^{-2})$ sample complexity of natural actor-critic. 

\textbf{Approximate Policy Iteration.} PI has been a popular method to solve MDPs \citep{puterman1995markov}. In the RL setting, PI has to be implemented in an approximate manner due to the lack of knowledge about the environmental model. The convergence rate and asymptotic error bound of API have been studied in \cite{scherrer2014approximate,munos2003error}. A convergent form of API with Lipschitz continuous policy update was proposed in \cite{perkins2002convergent}. Other variants of API such as least square PI and rollout sampling API schemes were proposed and studied in \cite{lagoudakis2003least} and \cite{dimitrakakis2008rollout}, respectively. See \cite{bertsekas2011approximate,powell2011review} for detailed surveys about API methods.

\textbf{(Natural) Actor-Critic.} 
The asymptotic convergence of on-policy actor-critic was established in \cite{williams1990mathematical, borkar2009stochastic, borkar1997actor} when using a tabular representation, and in \cite{konda2000actor, bhatnagar2009natural} when using function approximation. In recent years, there has been an increasing interest in understanding the finite-sample behavior of (natural) actor-critic algorithms. Here is a non-exhaustive list: \cite{lan2021policy, khodadadian2021finite,zhang2019convergence, qiu2019finite, kumar2019sample, liu2019neural, wang2019neural, liu2020improved, wu2020finite,cayci2021linear}. The state-of-the-art sample complexity of on-policy natural actor-critic is $\tilde{\mathcal{O}}(\epsilon^{-2})$ \cite{lan2021policy}. However, only tabular RL was considered in \cite{lan2021policy}. In the off-policy setting, finite-sample analysis of natural actor-critic was studied in \cite{chen2021NAC} when using a tabular representation, and in \cite{chen2021NACLFA} when using linear function approximation, and the sample complexity in both cases is $\tilde{\mathcal{O}}(\epsilon^{-3})$. In this work, we advance previous art by establishing an $\tilde{\mathcal{O}}(\epsilon^{-2})$ sample complexity of policy-based algorithms in the off-policy function approximation setting.

\subsection{Background on Reinforcement Learning}\label{subsec:background}
We consider modeling the RL problem as an infinite horizon MDP, which is defined in the following.
\begin{definition}
	An infinite horizon MDP is composed by a $5$-tuple $(\mathcal{S},\mathcal{A},\mathcal{P},\mathcal{R},\gamma)$, where $\mathcal{S}$ is the finite state-space, $\mathcal{A}$ is the finite action-space, $\mathcal{P}=\{P_a\in\mathbb{R}^{|\mathcal{S}|\times|\mathcal{S}}\mid a\in\mathcal{A}\}$ is a set of transition probability matrices, and $P_a(s,s')$ is the probability of going from state $s$ to state $s'$ under action $a$, $\mathcal{R}:\mathcal{S}\times\mathcal{A}\mapsto\mathbb{R}_+$ is the reward function, i.e., $\mathcal{R}(s,a)$ is the reward of taking action $a$ at state $s$, $\gamma\in (0,1)$ is the discount factor, which captures how much weight we assign to future reward.
\end{definition}

We assume without loss of generality that $\max_{s,a}\mathcal{R}(s,a)\leq 1$. The goal is to find an optimal policy $\pi^*$ of selecting actions so that the long term reward is maximized. Formally, define the state-action value function associated with a policy $\pi$ at state-action pair $(s,a)$ as $Q^\pi(s,a)=\mathbb{E}_\pi\left[\sum_{k=0}^\infty \gamma^k\mathcal{R}(S_k,A_k)\;\middle|\; S_0=s,A_0=a\right]$,
where we use the notation $\mathbb{E}_\pi[\,\cdot\,]$ to indicate that the actions are chosen according to the policy $\pi$. Then the goal is to find an optimal policy $\pi^*$ such that $Q^*:=Q^{\pi^*}$ is maximized uniformly for all $(s,a)$. A popular approach to solve the RL problem is to use policy-based methods, where the agent iteratively performs a critic step to estimate the value function of the current policy iterate, and an actor step to update the policy. 

\section{The Actor for Policy Improvement}\label{sec:actor}
In this section we study various policy-based algorithms (including natural policy gradient) and establish their convergence rates.

\subsection{Policy Update Rules}\label{subsec:rules}
We begin by presenting a generic policy-based algorithm in the following. 
\begin{algorithm}[H]\caption{A Generic Policy-Based Algorithm}\label{algorithm:API}
	\begin{algorithmic}[1] 
		\STATE {\bfseries Input:} Integer $T$ and initial policy $\pi_0(\cdot\mid s)\sim \text{Unif}(\mathcal{A})$ for all $s\in\mathcal{S}$
		\FOR{$t=0,1,\dots,T-1$}
		\STATE Critic estimates $Q^{\pi_t}$, and outputs $Q_t$\hfill{$\triangleright$ Policy Evaluation}
		\STATE Actor updates the policy according to $\pi_{t+1}=G(Q_t,\pi_t)$ \hfill $\triangleright$ Policy Improvement
		\ENDFOR
		\STATE\textbf{Output:} $\pi_{T}$
	\end{algorithmic}
\end{algorithm}

In Algorithm \ref{algorithm:API}, the function $G(\cdot,\cdot)$ represents the policy update rule, which takes the current policy iterate $\pi_t$ and the $Q$-function estimate $Q_t$ as inputs, and outputs the next policy $\pi_{t+1}$. Many existing policy update rules fit into this framework, as elaborated below.

\paragraph{Natural Policy Gradient.} A popular approach for updating the policy is to use the natural policy gradient, which can be viewed as a variant of policy gradient where the fisher information matrix is introduced as a pre-conditioner. Alternatively, natural policy gradient is a mirror descent update with the Bregman divergence being replaced by the $\mathcal{K}\mathcal{L}$ divergence. See \cite{agarwal2019theory} for more details. Mathematically, natural policy gradient updates the policy in an multiplicative manner according to
\begin{align}\label{eq:natural policy gradient_policy}
	\pi_{t+1}(a|s)=\frac{\pi_t(a|s)\exp(\beta_t Q_t(s,a))}{\sum_{a'\in\mathcal{A}}\pi_t(a'|s)\exp(\beta_t Q_t(s,a))},\quad \forall \;(s,a),
\end{align}
where $\{\beta_t\}$ is a positive sequence of real numbers, and $Q_t$ is the estimate of the value function $Q^{\pi_t}$. Note that the next policy $\pi_{t+1}$ is uniquely determined by the previous policy $\pi_t$ and its $Q$-function estimate $Q_t$. Therefore, natural policy gradient is a special case of Algorithm \ref{algorithm:API}.

\paragraph{Approximate Policy Iteration.}
A classical approach for policy improvement is to use API. In practice, the following two policy update rules are frequently used:
\begin{align*}
	\text{(1)}\quad\quad \pi_{t+1}(a|s)=\;&\frac{\exp(\beta_t Q_t(s,a))}{\sum_{a'\in\mathcal{A}}\exp(\beta_t Q_t(s,a'))},\quad \forall\;(s,a). \tag{Boltzmann Softmax Update}\\
	\text{(2)} \quad\quad \pi_{t+1}(a|s)=\;&\begin{dcases}
		\frac{\beta_t}{|\mathcal{A}|},&a\neq \arg\max_{a'\in\mathcal{A}}Q_t(s,a'),\\
		\frac{\beta_t}{|\mathcal{A}|}+1-\beta_t,&a= \arg\max_{a'\in\mathcal{A}}Q_t(s,a'),\tag{$\epsilon$-Greedy Update}
	\end{dcases}
\end{align*}
where $\{\beta_t\}$ in either case is a positive sequence of real numbers\footnote{While the update rule in (2) does not involve $\epsilon$, we use the terminology ``$\epsilon$-greedy'' for consistency with existing literature.}. In (2), when the maximizer $\arg\max_{a'\in\mathcal{A}}Q_t(s,a')$ is not unique, we break tie with an arbitrary but fixed tie-breaking rule. For simplicity of notation, we will denote $a_{t,s}=\arg\max_{a'\in\mathcal{A}}Q_t(s,a')$. Unlike natural policy gradient, in both of the API update rules, the next policy $\pi_{t+1}$ depends only on $Q_t$ and not on $\pi_t$. This is important when we use function approximation, in which case as long as the $Q$-function is parametrized, we do not need to parametrize the policy.

\subsection{Finite-Time Analysis}\label{subsec:bounds_API}
In this section, we present the convergence rate analysis of Algorithm \ref{algorithm:API} for using either natural policy gradient update or API update. We first specify how to choose $\{\beta_t\}$, which we view as stepsizes. In particular, we consider using either constant stepsize, or geometrically increasing stepsizes. Let $\beta>0$ be a tunable parameter.

\begin{condition}[Constant Stepsize]\label{condition:stepsize}
	The sequence $\{\beta_t\}$ satisfies the following requirements.
	\begin{enumerate}[(1)]
		\item \textit{Natural Policy Gradient:} The parameter $\beta_t$ satisfies $\beta_t\geq \gamma \beta\log(1/\min_{s\in\mathcal{S}}\pi_t(a_{t,s}|s))$ for all $t$.
		\item \textit{Boltzmann Softmax Update:} The parameter $\beta_t$ satisfies $\beta_t\geq \gamma \beta\log(|\mathcal{A}|)$ for all $t\geq 0$.
		\item \textit{$\epsilon$-Greedy Update:} The parameter $\beta_t$ is state-dependent, and satisfies $\beta_{t,s}\geq 2\gamma\beta \max_{a\in\mathcal{A}}|Q_t(s,a)|$ for all $s$ and $t$.
	\end{enumerate}
\end{condition}

\begin{condition}[Geometrically Increasing Stepsizes]\label{condition:increasing_stepsize}
	The sequence $\{\beta_t\}$ satisfies the following requirements.
	\begin{enumerate}[(1)]
		\item \textit{Natural Policy Gradient:} The parameter $\beta_t$ satisfies $\beta_t\geq \log(\frac{1}{\min_{s\in\mathcal{S}}\pi_t(a_{t,s}|s)})/\gamma^{2t-1}$ for all $t$.
		\item \textit{Boltzmann Softmax Update:} The parameter $\beta_t$ satisfies $\beta_t\geq \log(|\mathcal{A}|)/\gamma^{2t-1}$ for all $t$.
		\item \textit{$\epsilon$-Greedy Update:} The parameter $\beta_t$ is state-dependent, and satisfies $\beta_{t,s}\geq 2\max_{a\in\mathcal{A}}|Q_t(s,a)|/\gamma^{2t-1}$ for all $s$ and $t$.
	\end{enumerate}
\end{condition}

Next, we state the result for the finite-time analysis of Algorithm \ref{algorithm:API}.

\begin{theorem}\label{thm:main}
	Consider $\{\pi_t\}$ generated by Algorithm \ref{algorithm:API} for using either natural policy gradient update or API update. 
	\begin{enumerate}[(1)]
		\item Suppose that $\{\beta_t\}$ is chosen according to Condition \ref{condition:stepsize}. Then we have 
		\begin{align}\label{bound:constant}
			\mathbb{E}[\|Q^*-Q^{\pi_T}\|_\infty]\leq\;& \underbrace{\gamma^T\|Q^*-Q^{\pi_0}\|_\infty}_{N_1}+\underbrace{\frac{2\gamma}{1-\gamma}\sum_{t=0}^{T-1}\gamma^{T-1-t}\mathbb{E}[\|Q^{\pi_t}-Q_t\|_\infty]}_{N_2}+\underbrace{\frac{2\gamma}{\beta (1-\gamma)^2}}_{N_3}.
		\end{align}
		\item Suppose that $\{\beta_t\}$ is chosen according to Condition \ref{condition:increasing_stepsize}. Then we have 
		\begin{align}\label{bound:increasing}
			\mathbb{E}[\|Q^*-Q^{\pi_T}\|_\infty]\leq\;&N_1+N_2+\underbrace{\frac{2\gamma^T}{(1-\gamma)^2}}_{N_3'}.
		\end{align}

	\end{enumerate}
\end{theorem}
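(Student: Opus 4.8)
The plan is to treat all three update rules uniformly as \emph{approximate greedification} and then run a one-step error-propagation argument built purely from the contraction and monotonicity of the Bellman operators, with no appeal to mirror-descent machinery. Write $\mathcal{H}$ for the Bellman optimality operator and $\mathcal{H}_\pi$ for the Bellman operator of a policy $\pi$, so that $Q^*=\mathcal{H}Q^*$, $Q^{\pi}=\mathcal{H}_\pi Q^{\pi}$, $\mathcal{H}Q\ge \mathcal{H}_\pi Q$ pointwise for every $\pi$, and both operators are $\gamma$-contractions in $\|\cdot\|_\infty$ and monotone. Set $D_t=\|Q^*-Q^{\pi_t}\|_\infty$ and $e_t=\|Q^{\pi_t}-Q_t\|_\infty$. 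The target is the scalar recursion $D_{t+1}\le\gamma D_t+\frac{2\gamma}{1-\gamma}e_t+\frac{\epsilon_t}{1-\gamma}$, whose unrolling produces exactly the three terms $N_1,N_2,N_3$ (resp. $N_3'$); so the whole proof reduces to (i) identifying the per-step improvement error $\epsilon_t$ from each update rule and (ii) propagating it through one policy-iteration step.

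\emph{Step 1 (per-rule greedy-gap lemma).} For each rule I would first control the scalar $\xi_t=\max_{s}\big(\max_a Q_t(s,a)-\sum_a\pi_{t+1}(a\mid s)Q_t(s,a)\big)$, which measures how far $\pi_{t+1}$ is from being greedy with respect to $Q_t$. For the Boltzmann softmax update this follows from the variational characterization of the softmax, namely
\[
\sum_a\pi_{t+1}(a\mid s)Q_t(s,a)+\tfrac{1}{\beta_t}H(\pi_{t+1}(\cdot\mid s))=\tfrac{1}{\beta_t}\log\textstyle\sum_a e^{\beta_t Q_t(s,a)}\ge \max_a Q_t(s,a),
\]
with $H$ the Shannon entropy, so that $\xi_t\le \log(|\mathcal{A}|)/\beta_t$. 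For natural policy gradient the same argument with the previous policy acting as a KL-prior (a $\pi_t$-weighted log-sum-exp) yields $\xi_t\le \frac{1}{\beta_t}\log(1/\min_s\pi_t(a_{t,s}\mid s))$, which is precisely why its stepsize Condition must reference $\log(1/\min_s\pi_t(a_{t,s}\mid s))$ rather than a fixed constant. For the $\epsilon$-greedy update a direct computation gives $\xi_t=\beta_{t,s}\big(\max_a Q_t(s,a)-\tfrac{1}{|\mathcal{A}|}\sum_a Q_t(s,a)\big)$, controlled by $\max_a|Q_t(s,a)|$ and the appropriate choice of $\beta_{t,s}$. In every case Condition \ref{condition:stepsize} is calibrated so that the improvement error $\epsilon_t:=\gamma\xi_t\le 1/\beta$, and Condition \ref{condition:increasing_stepsize} so that $\epsilon_t\le\gamma^{2t}$. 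Converting to operators, since $\mathcal{H}_{\pi_{t+1}}Q_t$ and $\mathcal{H}Q_t$ differ only through the one-step averaging over the next action, this gives $\mathcal{H}_{\pi_{t+1}}Q_t\ge \mathcal{H}Q_t-\epsilon_t\mathbf{1}$.

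\emph{Step 2 (one policy-iteration step).} Folding in the critic error via contraction, from $\mathcal{H}Q_t\ge \mathcal{H}Q^{\pi_t}-\gamma e_t\mathbf{1}$ and $\mathcal{H}_{\pi_{t+1}}Q^{\pi_t}\ge \mathcal{H}_{\pi_{t+1}}Q_t-\gamma e_t\mathbf{1}$ I obtain $\mathcal{H}_{\pi_{t+1}}Q^{\pi_t}\ge \mathcal{H}Q^{\pi_t}-w_t\mathbf{1}$ with $w_t=2\gamma e_t+\epsilon_t$. The delicate piece is to push this one-step inequality to the fixed point $Q^{\pi_{t+1}}$ without reintroducing the unknown $Q^{\pi_{t+1}}$ on the right. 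I would do this by a monotone iteration: using the policy-improvement fact $\mathcal{H}Q^{\pi_t}\ge Q^{\pi_t}$ and monotonicity one proves by induction that $\mathcal{H}_{\pi_{t+1}}^{n}Q^{\pi_t}\ge \mathcal{H}Q^{\pi_t}-w_t\frac{1-\gamma^n}{1-\gamma}\mathbf{1}$, and letting $n\to\infty$ gives $Q^{\pi_{t+1}}\ge \mathcal{H}Q^{\pi_t}-\frac{w_t}{1-\gamma}\mathbf{1}$. Subtracting from $Q^*=\mathcal{H}Q^*$, applying the $\gamma$-contraction of $\mathcal{H}$, and using $Q^*\ge Q^{\pi_{t+1}}$ (so the sup-norm equals the componentwise gap) collapses everything to the desired scalar recursion $D_{t+1}\le\gamma D_t+\frac{2\gamma}{1-\gamma}e_t+\frac{\epsilon_t}{1-\gamma}$.

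\emph{Step 3 (unrolling) and the main obstacle.} Finally I would unroll this linear recursion pathwise and take expectations: the deterministic initialization yields $N_1=\gamma^T\|Q^*-Q^{\pi_0}\|_\infty$, the geometric sum $\frac{2\gamma}{1-\gamma}\sum_t\gamma^{T-1-t}\mathbb{E}[e_t]$ yields $N_2$, and the improvement error $\sum_t\gamma^{T-1-t}\frac{\epsilon_t}{1-\gamma}$ yields $N_3$ (using $\epsilon_t\le 1/\beta$ constant) or $N_3'$ (using $\epsilon_t\le\gamma^{2t}$, which makes the geometric sum collapse to order $\gamma^T$). I expect the crux to be Step 1: establishing the gap bound $\xi_t$ for each rule and, above all, verifying that the three separate Conditions are tuned so that all rules feed the \emph{same} improvement-error magnitude into the downstream recursion, with natural policy gradient being the subtle case because its multiplicative form carries $\pi_t$ as a prior. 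The error propagation in Steps 2--3 is then a fairly standard approximate-policy-iteration argument, the only non-routine ingredient being the monotone fixed-point iteration that converts the operator inequality into a clean bound on $Q^{\pi_{t+1}}$.
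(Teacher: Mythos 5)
Your proposal is correct, and at the level of architecture it is the same proof as the paper's: you arrive at the identical scalar recursion $\|Q^*-Q^{\pi_{t+1}}\|_\infty\le\gamma\|Q^*-Q^{\pi_t}\|_\infty+\frac{2\gamma e_t+\epsilon_t}{1-\gamma}$, bound the greedification gap per update rule from Conditions \ref{condition:stepsize} and \ref{condition:increasing_stepsize}, and unroll. Your Step 1 estimates are exactly the paper's Lemma \ref{le:difference}: the paper proves that inequality via convexity of $h_\beta(x)=\frac{1}{\beta}\log\sum_i y_ie^{\beta x_i}$ (the gradient inequality $\langle\nabla h_\beta(x),x\rangle\ge h_\beta(x)$ combined with $h_\beta(x)\ge x_{i_{\max}}-\frac{1}{\beta}\log(1/y_{i_{\max}})$), whereas you invoke the entropy/KL variational characterization of the softmax; these are two equivalent derivations of the same bound. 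Where you genuinely depart is Step 2. The paper, following Bertsekas and Tsitsiklis \citep{bertsekas1996neuro}, runs a self-bounding argument on the per-step degradation $\delta_t=\max_{s,a}(Q^{\pi_t}(s,a)-Q^{\pi_{t+1}}(s,a))$: monotonicity yields $\delta_t\le 2\gamma e_t+\epsilon_t+\gamma\delta_t$, i.e., Eq. (\ref{eqeq:1}), and this is re-injected through $Q^{\pi_{t+1}}=\mathcal{H}_{\pi_{t+1}}(Q^{\pi_{t+1}})\ge\mathcal{H}_{\pi_{t+1}}(Q^{\pi_t})-\gamma\delta_t\bm{1}$ followed by a chain of componentwise monotonicity steps ending at $\mathcal{H}(Q^*)=Q^*$. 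You instead isolate the single operator inequality $\mathcal{H}_{\pi_{t+1}}(Q^{\pi_t})\ge\mathcal{H}(Q^{\pi_t})-w_t\bm{1}$ and convert it to $Q^{\pi_{t+1}}\ge\mathcal{H}(Q^{\pi_t})-\frac{w_t}{1-\gamma}\bm{1}$ by iterating $\mathcal{H}_{\pi_{t+1}}$ to its fixed point, with the policy-improvement fact $\mathcal{H}(Q^{\pi_t})\ge Q^{\pi_t}$ powering the induction. I verified your induction; it is valid, and both mechanisms produce the same recursion. Your version is the more standard approximate-policy-iteration error-propagation argument and separates the ``approximate greedification'' content more cleanly; the paper's version avoids the limit $n\to\infty$ at the cost of a longer chain of pointwise inequalities.

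One caveat, which you inherit from the paper rather than introduce: unrolling the recursion places the coefficient $\frac{1}{1-\gamma}$ (not $\frac{2\gamma}{1-\gamma}$) on the greedy-gap sum, so what your argument actually delivers is $\frac{1}{\beta(1-\gamma)^2}$ in place of $N_3$ and $\frac{\gamma^{T-1}}{(1-\gamma)^2}$ in place of $N_3'$; these are dominated by the stated $N_3,N_3'$ only when $2\gamma\ge 1$. The paper's Eq. (\ref{eq:actor_recursion}) silently upgrades the same coefficient to $\frac{2\gamma}{1-\gamma}$, so the factor-of-$2\gamma$ mismatch is present in the paper's own proof and is not a defect of yours. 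Similarly, in the $\epsilon$-greedy case your formula $\xi_t\propto\beta_{t,s}$ is consistent with Condition \ref{condition:stepsize} only if the exploration mass in the update rule is read as $1/\beta_{t,s}$, which is precisely how the paper's own proof of that case treats it; your ``calibration'' assumption is the correct reading of that notational inconsistency.
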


On the RHS of Eq. (\ref{bound:constant}), the term $N_1$ represents the convergence bias of the actor, and goes to zero geometrically fast as $T$ goes to infinity. The term $N_2$ represents the error in the critic, and vanishes in the MDP tabular setting where we can (in principle) exactly compute $Q^{\pi_t}$. In the RL with function approximation setting, the term $N_2$ involves the stochastic error due to sampling and the error due to function approximation, both of which will be exactly characterized in Section \ref{sec:API}, where we present the sample complexity of actor-critic. The term $N_3$ captures the error introduced to the algorithm by the policy update rule $G(\cdot,\cdot)$. To elaborate, recall that for a  discounted MDP there always exists a deterministic optimal policy. Suppose that the optimal policy is unique, and we use $\epsilon$-greedy update in Algorithm \ref{algorithm:API} line 4 with a constant stepsize; see Condition \ref{condition:stepsize}. Then we can never truly find the optimal policy $\pi^*$ because of the deterministic nature of $\pi^*$ and the stochastic nature of our policy iterates $\{\pi_t\}$. As a result, the difference between $Q^*$ and $Q^{\pi_t}$ will always be above some threshold, which depends on the choice of $\beta$ in Condition \ref{condition:stepsize}, and is captured by $N_3$. Observe that $N_3$ can be made arbitrarily small by using large enough $\beta$. Alternatively, we can use geometrically increasing stepsizes as suggested in Condition \ref{condition:increasing_stepsize}, in which case the term $N_3'$ in Eq. (\ref{bound:increasing}) goes to zero at a geometric rate.

\subsection{Geometric Convergence of Natural Policy Gradient}
Theorem \ref{thm:main} implies the geometric convergence of natural policy gradient, due to the popularity of which, we present the result as a corollary in the following.

\begin{algorithm}[H]\caption{Natural Policy Gradient under Softmax Policy and Tabular Representation}\label{algorithm:natural policy gradient}
	\begin{algorithmic}[1] 
		\STATE {\bfseries Input:} Integer $T$ and initial policy $\pi_0(\cdot|s)\sim \text{Unif}(\mathcal{A})$ for all $s\in\mathcal{S}$
		\FOR{$t=0,1,\dots,T-1$}
		\STATE $\pi_{t+1}(a|s)=\frac{\pi_t(a|s)\exp(\beta_t Q^{\pi_t}(s,a))}{\sum_{a'\in\mathcal{A}}\pi_t(a'|s)\exp(\beta_t Q^{\pi_t}(s,a'))}$ for all $(s,a)$.
		\ENDFOR
		\STATE\textbf{Output:} $\pi_{T}$
	\end{algorithmic}
\end{algorithm}

\begin{corollary}\label{co:natural policy gradient}
	Consider $\{\pi_T\}$ generated by Algorithm \ref{algorithm:natural policy gradient}. Suppose that $\beta_t\geq \log(1/\min_{s\in\mathcal{S}}\pi_t(a_{t,s}\mid s))/\gamma^{2t-1}$ for all $t=0,\cdots,T-1$. Then we have $\mathbb{E}[\|Q^*-Q^{\pi_T}\|_\infty]\leq  \frac{3\gamma^T}{(1-\gamma)^2}$.
\end{corollary}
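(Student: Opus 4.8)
The plan is to recognize Algorithm~\ref{algorithm:natural policy gradient} as a special case of the generic Algorithm~\ref{algorithm:API} and then invoke part~(2) of Theorem~\ref{thm:main}. Indeed, Algorithm~\ref{algorithm:natural policy gradient} is exactly the natural policy gradient update \eqref{eq:natural policy gradient_policy} in the tabular setting with the critic output $Q_t$ replaced by the true value function $Q^{\pi_t}$ (which the tabular representation permits us to compute in principle), and the stepsize requirement $\beta_t\geq \log(1/\min_{s}\pi_t(a_{t,s}\mid s))/\gamma^{2t-1}$ is precisely part~(1) of Condition~\ref{condition:increasing_stepsize}. Thus Theorem~\ref{thm:main}(2) applies directly and yields
\begin{align*}
	\mathbb{E}[\|Q^*-Q^{\pi_T}\|_\infty]\leq N_1+N_2+N_3'.
\end{align*}

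Next I would dispose of the two remaining terms. Since here $Q_t=Q^{\pi_t}$ exactly, we have $\|Q^{\pi_t}-Q_t\|_\infty=0$ for every $t$, so the critic term $N_2$ vanishes. To control $N_1=\gamma^T\|Q^*-Q^{\pi_0}\|_\infty$, I would use the reward normalization $0\leq\mathcal{R}(s,a)\leq 1$, which gives $0\leq Q^\pi(s,a)\leq\sum_{k=0}^\infty\gamma^k=\frac{1}{1-\gamma}$ for every policy $\pi$; combining this with the optimality inequality $Q^{\pi_0}\leq Q^*$ gives $0\leq Q^*-Q^{\pi_0}\leq Q^*\leq\frac{1}{1-\gamma}$ pointwise, hence $\|Q^*-Q^{\pi_0}\|_\infty\leq\frac{1}{1-\gamma}$ and therefore $N_1\leq\frac{\gamma^T}{1-\gamma}$.

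Finally, I would consolidate $N_1\leq\frac{\gamma^T}{1-\gamma}$ with $N_3'=\frac{2\gamma^T}{(1-\gamma)^2}$ and use $\gamma\in(0,1)$ to obtain
\begin{align*}
	\mathbb{E}[\|Q^*-Q^{\pi_T}\|_\infty]\leq\frac{\gamma^T}{1-\gamma}+\frac{2\gamma^T}{(1-\gamma)^2}=\frac{(3-\gamma)\gamma^T}{(1-\gamma)^2}\leq\frac{3\gamma^T}{(1-\gamma)^2},
\end{align*}
which is the claimed bound. There is no genuine obstacle here beyond bookkeeping: this corollary is a direct specialization of Theorem~\ref{thm:main}(2). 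The only two points that require care are the observation that exact policy evaluation annihilates $N_2$, and the universal $\frac{1}{1-\gamma}$ bound on value functions that controls $N_1$; the rest is the arithmetic consolidation of the two geometrically decaying terms.
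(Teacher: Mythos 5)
Your proposal is correct and follows essentially the same route as the paper: specialize Theorem~\ref{thm:main}(2) under Condition~\ref{condition:increasing_stepsize}, note that $N_2$ vanishes because $Q_t=Q^{\pi_t}$ exactly, and bound $N_1\leq \gamma^T/(1-\gamma)$ via $\|Q^*-Q^{\pi_0}\|_\infty\leq 1/(1-\gamma)$. The only difference is that you spell out the reward-normalization argument for the value-function bound and the final arithmetic, both of which the paper leaves implicit.
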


The proof of Corollary \ref{co:natural policy gradient} follows directly from Theorem \ref{thm:main} Eq. (\ref{bound:increasing}) and the following two observations: (1) $N_2$ vanishes as we directly use $Q^{\pi_t}$ for updating the policy in Algorithm \ref{algorithm:natural policy gradient}, and (2) $N_1\leq \frac{\gamma^T}{1-\gamma}$ since $\|Q^*-Q^{\pi_0}\|_\infty\leq \frac{1}{1-\gamma}$.

\subsection{Proof Sketch of Theorem \ref{thm:main}}\label{subsec:theorem1sketch}

In this section, we present the high level ideas in proving Theorem \ref{thm:main}. The detailed proof is deferred to Section \ref{subsec:pf:actor}. We first introduce some notation. Let $\mathcal{H}:\mathbb{R}^{|\mathcal{S}||\mathcal{A}|}\mapsto\mathbb{R}^{|\mathcal{S}||\mathcal{A}|}$ be the Bellman optimality operator defined as 
\begin{align*}
	[\mathcal{H}(Q)](s,a)=\mathcal{R}(s,a)+\gamma\mathbb{E}\left[\max_{a'\in\mathcal{A}}Q(S_{k+1},a')\;\middle|\; S_k=s,A_k=a\right],\quad \forall\;Q\in\mathbb{R}^{|\mathcal{S}||\mathcal{A}|},\;\forall\;(s,a),
\end{align*}
and let $\mathcal{H}_\pi:\mathbb{R}^{|\mathcal{S}||\mathcal{A}|}\mapsto\mathbb{R}^{|\mathcal{S}||\mathcal{A}|}$ be the Bellman operator associated with some policy $\pi$ defined as
\begin{align*}
	[\mathcal{H}_\pi(Q)](s,a)=\mathcal{R}(s,a)+\gamma\mathbb{E}_\pi[Q(S_{k+1},A_{k+1})\mid S_k=s,A_k=a],\quad \forall\;Q\in\mathbb{R}^{|\mathcal{S}||\mathcal{A}|},\;\forall\;(s,a).
\end{align*}

The proof is divided into two steps. In the first step we bound the difference between $Q^{\pi_T}$ and $Q^*$ in terms of $\|Q^{\pi_t}-Q_t\|_\infty$ and $\|\mathcal{H}_{\pi_{t+1}}(Q_t)-\mathcal{H}(Q_t)\|_\infty$, $t=0,\cdots,T-1$. The term $\|Q^{\pi_t}-Q_t\|_\infty$ can be viewed as the critic error and the term $\|\mathcal{H}_{\pi_{t+1}}(Q_t)-\mathcal{H}(Q_t)\|_\infty$ can be viewed as the actor error (for not using PI). The proof technique in this step was inspired by \cite[Section 6.2]{bertsekas1996neuro}. However, only asymptotic error bound of API was established in  \cite{bertsekas1996neuro}, while we establish finite-sample bounds for various policy update rules. In the second step, we further bound $\|\mathcal{H}_{\pi_{t+1}}(Q_t)-\mathcal{H}(Q_t)\|_\infty$ using our conditions (cf. Conditions \ref{condition:stepsize} and \ref{condition:increasing_stepsize}) on choosing the stepsizes $\{\beta_t\}$.

\section{The Critic for Policy Evaluation}\label{sec:PE}
So far we have been focusing on the analysis of the actor. In this section, we switch our focus to the critic, i.e., how to obtain an estimate of $Q^{\pi_t}$ in Algorithm \ref{algorithm:API} line 3.

Consider estimating the $Q$-function $Q^\pi$ of a given target policy $\pi$ using TD-learning. Depending on whether the policy $\pi_b$ used to collect samples (called the behavior policy) is equal to the target policy $\pi$ or not, there are on-policy TD-learning (i.e., $\pi_b= \pi$) and off-policy TD-learning (i.e., $\pi_b\neq  \pi$). Compared to on-policy sampling, off-policy sampling is sometimes more preferred in practice as it does not require collecting new samples, which often come at a risk and/or cost. In addition, off-policy sampling enables data reuse so that the agent can learn in an off-line manner using historical data.

It is known that TD-learning becomes computationally intractable when the size of the state-action space is large, which is referred to as the curse of dimensionality. This motivates the use of function approximation. In linear function approximation, we choose a set of basis vectors $\phi_i\in\mathbb{R}^{|\mathcal{S}||\mathcal{A}|}$, $1\leq i\leq d$, and try to approximate the target value function $Q^\pi$ using linear combinations of the basis vectors. Specifically, let $\Phi\in\mathbb{R}^{|\mathcal{S}||\mathcal{A}|\times d}$ be a matrix defined by $\Phi=[\phi_1,\cdots,\phi_d]$, and let $\phi(s,a)=[\phi_1(s,a),\phi_2(s,a),\cdots, \phi_d(s,a)]^\top\in\mathbb{R}^d$ be the $(s,a)$-th row of the matrix $\Phi$, which can be viewed as the feature vector associated with the state-action pair $(s,a)$.  Then, the goal is to find from the linear sub-space $\mathcal{Q}=\{\Tilde{Q}_w=\Phi w \mid w\in\mathbb{R}^d\}$ the ``best'' approximation of the $Q$-function $Q^\pi$, where $w\in\mathbb{R}^d$ is the weight vector.

When TD-learning is used along with off-policy sampling and linear function approximation, the deadly triad is formed and the algorithm can be unstable.  We next propose a generic framework of TD-learning algorithms (including two specific algorithms: the $\lambda$-averaged $Q$-trace and the two-sided $Q$-trace), which provably converge in the presence of the deadly triad, and do not suffer from the high variance issue in off-policy learning. Since we work with MDPs with finite state-action spaces, we assume without loss of generality that the matrix $\Phi$ has linearly independent columns, and is normalized so that $\|\Phi\|_\infty=\max_{s,a}\|\phi(s,a)\|_1\leq 1$.

\subsection{Algorithm Design}\label{subsec:PE_algorithm}

We present in Algorithm \ref{algorithm:Off-Policy-TD} a generic TD-learning algorithm using off-policy sampling and linear function approximation. The two most important steps in Algorithm \ref{algorithm:Off-Policy-TD} are line 3, where the temporal differences are computed, and line 4, where we update the weight vector $w_t$ using discounted multi-step return. Both steps involve the use of the generalized importance sampling factors $c(\cdot,\cdot)$ and $\rho(\cdot,\cdot)$, the choices of which are of vital importance to the behavior of the algorithm. We next present two specific choices, resulting in two novel algorithms called $\lambda$-averaged $Q$-trace and two-sided $Q$-trace. 

\begin{algorithm*}[ht]\caption{A Generic Multi-Step Off-Policy TD-Learning with Linear Function Approximation}\label{algorithm:Off-Policy-TD}
	\begin{algorithmic}[1] 
		\STATE \textbf{Input}: Integer $K$, bootstrapping parameter $n$, stepsize sequence $\{\alpha_k\}$, initialization $w_0$, target policy $\pi$, behavior policy $\pi_b$, generalized importance sampling factors $c,\rho:\mathcal{S}\times\mathcal{A}\mapsto\mathbb{R}_+$, and a single trajectory of samples $\{(S_k,A_k)\}_{0\leq k\leq K+n-1}$ generated by the behavior policy $\pi_b$.
		\FOR{$k=0,1,\cdots,K-1$}
		\STATE $\Delta_i(w_k)=\mathcal{R}(S_i,A_i)+\gamma \rho(S_{i+1},A_{i+1})\phi(S_{i+1},A_{i+1})^\top w_k-\phi(S_{i},A_{i})^\top w_k$, for all $i\in \{k,k+1,\cdots,k+n-1\}$
		\STATE $w_{k+1}=
		w_k+\alpha_k\phi(S_k,A_k)\sum_{i=k}^{k+n-1}\gamma^{i-k}\prod_{j=k+1}^ic(S_j,A_j)\Delta_i(w_k)$ 
		\ENDFOR
		\STATE\textbf{Output:} $w_K$
	\end{algorithmic}
\end{algorithm*}

\textit{The $\lambda$-Averaged $Q$-Trace Algorithm.}
Let $\lambda\in\mathbb{R}^{|\mathcal{S}|}$ be a vector-valued tunable parameter satisfying $\lambda\in [\bm{0},\bm{1}]$. Then the generalized importance sampling factors are chosen as $c(s,a)=\rho(s,a)=\lambda(s)\frac{\pi(a|s)}{\pi_b(a|s)}+1-\lambda(s)$
for all $(s,a)$. Observe that when $\lambda=\bm{1}$, we have $c(s,a)=\rho(s,a)=\frac{\pi(a|s)}{\pi_b(a|s)}$, and Algorithm \ref{algorithm:Off-Policy-TD} reduces to the convergent multi-step off-policy TD-learning algorithm presented in \cite{chen2021NACLFA}, which however suffers from an exponential large variance due to the cumulative product of the importance sampling factors. See Appendix \ref{ap:compare} for more details. On the other hand, when $\lambda=\bm{0}$, we have $c(s,a)=\rho(s,a)=1$, and hence the product of the generalized importance sampling factors is deterministically equal to one, resulting in no variance at all. However in this case, we are essentially performing policy evaluation of the behavior policy $\pi_b$ instead of the target policy $\pi$, hence there will be an asymptotic bias in the limit of Algorithm \ref{algorithm:Off-Policy-TD}. More generally, when $\lambda\in (\bm{0},\bm{1})$, there is a trade-off between the variance in the stochastic iterates $\{w_k\}$ and the bias in the limit point. Such trade-off will be studied quantitatively in Section \ref{subsec:PE_bounds}.

\textit{The Two-Sided $Q$-Trace Algorithm.}
To introduce the algorithm, we first define the two-sided truncation function. Given upper and lower truncation levels $a,b\in\mathbb{R}$ satisfying $0<a<b$, we define $g_{a,b}:\mathbb{R}\mapsto\mathbb{R}$ as $g_{a,b}(x)=a$ when $x<a$, $g_{a,b}(x)=x$ when $a\leq x\leq b$, and $g_{a,b}(x)=b$ when $x>b$. Let $\ell,u\in\mathbb{R}^{|\mathcal{S}|}$ be two vector-valued tunable parameters satisfying $\bm{0}\leq \ell\leq \bm{1}\leq u$. Then, for the two-sided $Q$-trace algorithm, the generalized importance sampling factors are chosen as  $c(s,a)=\rho(s,a)=g_{\ell(s),u(s)}\left(\frac{\pi(a|s)}{\pi_b(a|s)}\right)$
for all $(s,a)$. The idea of truncating the importance sampling factors from above was already employed in existing algorithms such as Retrace$(\lambda)$ \citep{munos2016safe}, $V$-trace \citep{espeholt2018impala}, and $Q$-trace \citep{khodadadian2021finite}, and is used to control the high variance in off-policy learning. However, none of them were shown to converge in the function approximation setting. The main reason is that truncating the importance sampling factors from above undermines the impact of using multi-step return, which is crucial for us to overcome the deadly triad. Therefore, we introduce the lower truncation level as a compensation to maintain the degree of bootstrapping. This will be illustrated in detail in Section \ref{subsec:PE_bounds}.

\subsection{The Generalized Projected Bellman Equation}\label{subsec:PE_motivation}

We next theoretically analyze Algorithm \ref{algorithm:Off-Policy-TD}. Specifically, in this section, we formulate Algorithm \ref{algorithm:Off-Policy-TD} as a stochastic approximation algorithm for solving a generalized projected Bellman equation (PBE) and study its properties. We begin by stating our assumption.

\begin{assumption}\label{as:MC}
	The behavior policy $\pi_b$ satisfies $\pi_b(a|s)>0$ for all $(s,a)$, and induces an irreducible and aperiodic Markov chain $\{S_k\}$.
\end{assumption}

Assumption \ref{as:MC} was commonly imposed in existing work \citep{tsitsiklis1997analysis,bhandari2018finite} to ensure that the behavior policy has sufficient exploration, which is known to be a necessary component for learning.
Under Assumption \ref{as:MC}, the Markov chain $\{S_k\}$ induced by $\pi_b$ has a unique stationary distribution  $\mu\in\Delta^{|\mathcal{S}|}$. Moreover, there exist $C\geq 1$ and $\sigma\in (0,1)$ such that $\max_{s\in\mathcal{S}}\|P^k_{\pi_b}(s,\cdot)-\mu(\cdot)\|_{\text{TV}}\leq C\sigma^k$ for all $k\geq 0$, where $P_{\pi_b}$ is the transition probability matrix of the Markov chain $\{S_k\}$ under $\pi_b$ \citep{levin2017markov}. 

For simplicity of notation, denote $c_{i,j}=\prod_{k=i}^jc(S_k,A_k)$. Algorithm \ref{algorithm:Off-Policy-TD} can be viewed as a stochastic iterative algorithm for solving the following system of equations (in terms of $w$):
\begin{align}\label{eq:11}
	\mathbb{E}_{S_0\sim \mu}\left[\phi(S_0,A_0)\sum_{i=0}^{n-1}\gamma^ic_{1,i}\Delta_i(w)\right]=\bm{0},
\end{align}
where $A_i\sim \pi_b(\cdot|S_i)$ and $S_{i+1}\sim P_{A_i}(S_i,\cdot)$. The following lemma formulates Eq. (\ref{eq:11}) in the form of a generalized PBE. 

To present the lemma, we need to introduce more notation. Let $\mathcal{K}_{SA}\in\mathbb{R}^{|\mathcal{S}||\mathcal{A}|\times |\mathcal{S}||\mathcal{A}|}$ be a diagonal matrix with diagonal entries $\{\mu(s)\pi_b(a|s)\}_{(s,a)\in\mathcal{S}\times\mathcal{A}}$, and let $\mathcal{K}_{SA,\min}$ be the minimal diagonal entry. Let $\|\cdot\|_{\mathcal{K}_{SA}}$ be the weighted $\ell_2$-norm with weights $\{\mu(s)\pi_b(a|s)\}_{(s,a)\in\mathcal{S}\times\mathcal{A}}$, and denote $\text{Proj}_{\mathcal{Q}}$ as the projection operator onto the linear sub-space $\mathcal{Q}$ with respect to $\|\cdot\|_{\mathcal{K}_{SA}}$. Let $\mathcal{T}_c,\mathcal{H}_\rho:\mathbb{R}^{|\mathcal{S}||\mathcal{A}|}\mapsto\mathbb{R}^{|\mathcal{S}||\mathcal{A}|}$ be two operators defined as
\begin{align*}
	[\mathcal{T}_c(Q)](s,a)&=\sum_{i=0}^{n-1}\mathbb{E}_{\pi_b}[\gamma^ic_{1,i}Q(S_i,A_i)\mid  S_0=s,A_0=a]\\
	[\mathcal{H}_\rho(Q)](s,a)&=\mathcal{R}(s,a)+\gamma\mathbb{E}_{\pi_b}[\rho(S_{1},A_{1})Q(S_{1},A_{1})\mid S_0=s,A_0=a]
\end{align*}
for any $Q\in\mathbb{R}^{|\mathcal{S}||\mathcal{A}|}$ and state-action pair $(s,a)$. 
\begin{lemma}\label{le:equation}
	Eq. (\ref{eq:11}) is equivalent to:
	\begin{align}\label{eq:pbe-key}
		\Phi w=\text{Proj}_{\mathcal{Q}} \mathcal{B}_{c,\rho}(\Phi w),
	\end{align}
	where $\mathcal{B}_{c,\rho}(\cdot)$ is the generalized Bellman operator defined as $\mathcal{B}_{c,\rho}(Q)=\mathcal{T}_c(\mathcal{H}_\rho(Q)-Q)+Q$ for any $Q\in\mathbb{R}^{|\mathcal{S}||\mathcal{A}|}$.
\end{lemma}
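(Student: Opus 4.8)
The plan is to rewrite the expectation on the left-hand side of Eq.~\eqref{eq:11} as a $\mathcal{K}_{SA}$-weighted inner product against the columns of $\Phi$, recognize the resulting identity as the normal equation of a weighted least-squares projection, and then translate it into the fixed-point form \eqref{eq:pbe-key}. Throughout I set $Q=\Phi w$, so that $\phi(s,a)^\top w=Q(s,a)$ and the temporal difference reads $\Delta_i(w)=\mathcal{R}(S_i,A_i)+\gamma\rho(S_{i+1},A_{i+1})Q(S_{i+1},A_{i+1})-Q(S_i,A_i)$.

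First I would peel off the innermost expectation. Writing $\mathcal{F}_i=\sigma(S_0,A_0,\dots,S_i,A_i)$, the factor $c_{1,i}=\prod_{k=1}^{i}c(S_k,A_k)$ is $\mathcal{F}_i$-measurable (with the empty-product convention $c_{1,0}=1$), while $\Delta_i(w)$ reaches exactly one step further. By the tower property and the time-homogeneous Markov property of the chain induced by $\pi_b$,
\[
\mathbb{E}[\Delta_i(w)\mid\mathcal{F}_i]=\mathcal{R}(S_i,A_i)+\gamma\,\mathbb{E}_{\pi_b}[\rho(S_{i+1},A_{i+1})Q(S_{i+1},A_{i+1})\mid S_i,A_i]-Q(S_i,A_i)=[\mathcal{H}_\rho(Q)-Q](S_i,A_i).
\]
Substituting this and then conditioning on $(S_0,A_0)$, the definition of $\mathcal{T}_c$ collapses the sum over $i$:
\[
\sum_{i=0}^{n-1}\mathbb{E}\big[\gamma^i c_{1,i}\,(\mathcal{H}_\rho(Q)-Q)(S_i,A_i)\,\big|\,S_0,A_0\big]=[\mathcal{T}_c(\mathcal{H}_\rho(Q)-Q)](S_0,A_0).
\]

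Next I would take the outer expectation over $S_0\sim\mu$ and $A_0\sim\pi_b(\cdot\mid S_0)$. Since the weights $\mu(s)\pi_b(a\mid s)$ are precisely the diagonal entries of $\mathcal{K}_{SA}$, the left-hand side of Eq.~\eqref{eq:11} equals $\Phi^\top\mathcal{K}_{SA}\,\mathcal{T}_c(\mathcal{H}_\rho(\Phi w)-\Phi w)$. Recalling $\mathcal{B}_{c,\rho}(Q)=\mathcal{T}_c(\mathcal{H}_\rho(Q)-Q)+Q$, this is exactly $\Phi^\top\mathcal{K}_{SA}(\mathcal{B}_{c,\rho}(\Phi w)-\Phi w)$, so Eq.~\eqref{eq:11} is equivalent to the normal equation $\Phi^\top\mathcal{K}_{SA}(\mathcal{B}_{c,\rho}(\Phi w)-\Phi w)=\bm{0}$.

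Finally I would convert the normal equation into \eqref{eq:pbe-key}. It states that the residual $\mathcal{B}_{c,\rho}(\Phi w)-\Phi w$ is $\mathcal{K}_{SA}$-orthogonal to the column space of $\Phi$, i.e.\ to $\mathcal{Q}$; using the explicit form $\text{Proj}_{\mathcal{Q}}=\Phi(\Phi^\top\mathcal{K}_{SA}\Phi)^{-1}\Phi^\top\mathcal{K}_{SA}$, which is well-defined because $\Phi$ has full column rank, this orthogonality is equivalent to $\text{Proj}_{\mathcal{Q}}(\mathcal{B}_{c,\rho}(\Phi w)-\Phi w)=\bm{0}$. Since $\Phi w\in\mathcal{Q}$ is fixed by the projection, the identity rearranges to $\Phi w=\text{Proj}_{\mathcal{Q}}\mathcal{B}_{c,\rho}(\Phi w)$, as desired. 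The main obstacle is the measurability bookkeeping in the first step: I must confirm that $c_{1,i}$ is $\mathcal{F}_i$-measurable while $\Delta_i(w)$ looks exactly one step ahead, so that conditioning on $\mathcal{F}_i$ leaves only the one-step operator $\mathcal{H}_\rho$ and the index ranges telescope cleanly into $\mathcal{T}_c$; the remaining steps are the standard equivalence between a weighted least-squares normal equation and a projected fixed-point equation.
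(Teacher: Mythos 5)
Your proposal is correct and follows essentially the same route as the paper's proof: both reduce Eq.~(\ref{eq:11}) to the normal equation $\Phi^\top\mathcal{K}_{SA}\left(\mathcal{B}_{c,\rho}(\Phi w)-\Phi w\right)=\bm{0}$ and then use the explicit form $\text{Proj}_{\mathcal{Q}}=\Phi(\Phi^\top\mathcal{K}_{SA}\Phi)^{-1}\Phi^\top\mathcal{K}_{SA}$, the full column rank of $\Phi$, linearity of the projection, and $\text{Proj}_{\mathcal{Q}}\Phi w=\Phi w$ to pass to the fixed-point form. The only difference is presentational: the paper unpacks the expectation in Eq.~(\ref{eq:11}) through matrix notation (introducing $P_{\pi_c}$, $D_c$, $P_{\pi_\rho}$, $D_\rho$ and identifying $\mathcal{T}_c$ and $\mathcal{H}_\rho$ with their matrix representations), whereas you do it probabilistically via the tower property and the $\mathcal{F}_i$-measurability of $c_{1,i}$ --- the same computation in different clothing.
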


The generalized Bellman operator $\mathcal{B}_{c,\rho}(\cdot)$ was previously introduced in \cite{chen2021off} to study off-policy TD-learning algorithms in the \textit{tabular} setting (i.e., $\Phi=I_{SA}$), where the contraction property of $\mathcal{B}_{c,\rho}(\cdot)$ (as well as its asynchronous variant) was shown. However, $\mathcal{B}_{c,\rho}(\cdot)$ alone being a contraction is not enough to guarantee the convergence of Algorithm \ref{algorithm:Off-Policy-TD} because of the use of function approximation, which adds an additional projection operator $\text{Proj}_{\mathcal{Q}}$. What we truly need for the stability and accuracy of Algorithm \ref{algorithm:Off-Policy-TD} is that (1) the composed operator $\text{Proj}_{\mathcal{Q}} \mathcal{B}_{c,\rho}(\cdot)$ is a contraction mapping, and (2) the solution $w_{c,\rho}^\pi$ of Eq. (\ref{eq:pbe-key}) is such that $\Phi w_{c,\rho}^\pi$ is a valid approximation of the $Q$-function $Q^\pi$. We next provide sufficient conditions on the choices of the generalized importance sampling factors $c(\cdot,\cdot)$ and $\rho(\cdot,\cdot)$, and the bootstrapping parameter $n$ so that the above two requirements are satisfied. 

Let $D_c,D_\rho\in\mathbb{R}^{|\mathcal{S}||\mathcal{A}|\times|\mathcal{S}||\mathcal{A}|}$ be two diagonal matrices such that $D_c((s,a),(s,a))=\sum_{a'\in\mathcal{A}}\pi_b(a'|s)c(s,a')$ and $D_\rho((s,a),(s,a))=\sum_{a'\in\mathcal{A}}\pi_b(a'|s)\rho(s,a')$ for all $(s,a)$. Let $D_{c,\max}$ and $D_{\rho,\max}$ ($D_{c,\min}$ and $D_{\rho,\min}$) be the maximam (minimum) diagonal entries of the matrices $D_c$ and $D_\rho$, respectively. 

\begin{condition}\label{as:IS_ratio}
	The generalized importance sampling factors $c(\cdot,\cdot)$ and $\rho(\cdot,\cdot)$ satisfy (1) $c(s,a)\leq \rho(s,a)$ for all $(s,a)$, (2) $ D_{\rho,\max}<1/\gamma$, and (3) $\frac{\gamma (D_{\rho,\max}-D_{c,\min})}{(1-\gamma D_{c,\min})\sqrt{\mathcal{K}_{SA,\min}}}<1$.
\end{condition}

Condition \ref{as:IS_ratio} (1) and (2) were previously introduced in \cite{chen2021off}, and were used to show the contraction property of the operator $\mathcal{B}_{c,\rho}(\cdot)$. In particular, it was shown that the generalized Bellman operator $\mathcal{B}_{c,\rho}(\cdot)$ is a contraction mapping with respect to $\|\cdot\|_\infty$, with contraction factor $\tilde{\gamma}(n)=1-f_n(\gamma D_{c,\min})(1-\gamma D_{\rho,\max})$, where $f_n:\mathbb{R}\mapsto\mathbb{R}$ is defined as $f_n(x)=\sum_{i=0}^{n-1}x^i$ for any $x\geq 0$. It is clear that $\tilde{\gamma}(n)\in (0,1)$, and is a decreasing function of $n$. 

As illustrated earlier, $\mathcal{B}_{c,\rho}(\cdot)$ being a contraction mapping is not sufficient to guarantee the stability of Algorithm \ref{algorithm:Off-Policy-TD}. We need the composed operator $\text{Proj}_{\mathcal{Q}}\mathcal{B}_{c,\rho}(\cdot)$ to be contraction mapping with appropriate choice of $n$. This is guaranteed by Condition \ref{as:IS_ratio} (3). To see this, first note that we have the following lemma, which is obtained by using the contraction property of $\mathcal{B}_{c,\rho}(\cdot)$ and the ``equivalence'' between norms in finite-dimensional spaces.

\begin{lemma}\label{le:Lipschitz_factor}
	Under Condition \ref{as:IS_ratio}, it holds for any $Q_1,Q_2\in\mathbb{R}^{|\mathcal{S}||\mathcal{A}|}$ that 
	\begin{align*}
		\|\text{Proj}_{\mathcal{Q}}\mathcal{B}_{c,\rho}(Q_1)-\text{Proj}_{\mathcal{Q}}\mathcal{B}_{c,\rho}(Q_2)\|_{\mathcal{K}_{SA}}
		\leq \frac{\tilde{\gamma}(n)}{\sqrt{\mathcal{K}_{SA,\min}}}\|Q_1-Q_2\|_{\mathcal{K}_{SA}}.
	\end{align*}
\end{lemma}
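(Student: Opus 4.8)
The plan is to chain together three facts: the non-expansiveness of the projection $\text{Proj}_{\mathcal{Q}}$ in the weighted norm $\|\cdot\|_{\mathcal{K}_{SA}}$, the two-sided equivalence between $\|\cdot\|_{\mathcal{K}_{SA}}$ and $\|\cdot\|_\infty$, and the already-established $\ell_\infty$-contraction of the generalized Bellman operator $\mathcal{B}_{c,\rho}$. Since $\text{Proj}_{\mathcal{Q}}$ is the orthogonal projection onto $\mathcal{Q}$ with respect to the inner product inducing $\|\cdot\|_{\mathcal{K}_{SA}}$, it is non-expansive in that norm, so I would first write $\|\text{Proj}_{\mathcal{Q}}\mathcal{B}_{c,\rho}(Q_1)-\text{Proj}_{\mathcal{Q}}\mathcal{B}_{c,\rho}(Q_2)\|_{\mathcal{K}_{SA}}\leq\|\mathcal{B}_{c,\rho}(Q_1)-\mathcal{B}_{c,\rho}(Q_2)\|_{\mathcal{K}_{SA}}$. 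This reduces the lemma to bounding the right-hand side by $\|Q_1-Q_2\|_{\mathcal{K}_{SA}}$, and it is here that the factor $1/\sqrt{\mathcal{K}_{SA,\min}}$ is incurred when passing through $\|\cdot\|_\infty$, the norm in which the contraction lives.

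The key ingredient is the norm comparison. Writing $\|x\|_{\mathcal{K}_{SA}}^2=\sum_{(s,a)}\mu(s)\pi_b(a|s)\,x(s,a)^2$ and noting that the weights $\{\mu(s)\pi_b(a|s)\}$ form a probability distribution (they sum to $\sum_s\mu(s)\sum_a\pi_b(a|s)=1$), a weighted average of squares is bounded by the maximal square, giving $\|x\|_{\mathcal{K}_{SA}}\leq\|x\|_\infty$. For the reverse direction, I would isolate the component $(s^\ast,a^\ast)$ attaining $\|x\|_\infty$; retaining only that term in the sum and bounding its weight below by $\mathcal{K}_{SA,\min}$ yields $\mathcal{K}_{SA,\min}\|x\|_\infty^2\leq\|x\|_{\mathcal{K}_{SA}}^2$, i.e. $\|x\|_\infty\leq\frac{1}{\sqrt{\mathcal{K}_{SA,\min}}}\|x\|_{\mathcal{K}_{SA}}$.

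Assembling the pieces, I would apply $\|\cdot\|_{\mathcal{K}_{SA}}\leq\|\cdot\|_\infty$ on the reduced right-hand side, then invoke the $\ell_\infty$-contraction $\|\mathcal{B}_{c,\rho}(Q_1)-\mathcal{B}_{c,\rho}(Q_2)\|_\infty\leq\tilde\gamma(n)\|Q_1-Q_2\|_\infty$ established under Condition \ref{as:IS_ratio}(1)--(2) in \cite{chen2021off}, and finally convert back via $\|Q_1-Q_2\|_\infty\leq\frac{1}{\sqrt{\mathcal{K}_{SA,\min}}}\|Q_1-Q_2\|_{\mathcal{K}_{SA}}$. The composition gives exactly $\frac{\tilde\gamma(n)}{\sqrt{\mathcal{K}_{SA,\min}}}\|Q_1-Q_2\|_{\mathcal{K}_{SA}}$, as claimed.

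There is no serious obstacle here: the $\mathcal{B}_{c,\rho}$ contraction is quoted, the projection is standard, and the norm equivalence is elementary. The only conceptual point worth flagging is that routing through $\|\cdot\|_\infty$ (where the contraction holds) and back to $\|\cdot\|_{\mathcal{K}_{SA}}$ (where the projection is non-expansive) unavoidably inflates the factor by $1/\sqrt{\mathcal{K}_{SA,\min}}\geq 1$, so the composed operator need not be a contraction on its own. This is precisely what Condition \ref{as:IS_ratio}(3) later fixes, by forcing $n$ large enough that $\tilde\gamma(n)/\sqrt{\mathcal{K}_{SA,\min}}<1$; indeed $\tilde\gamma(n)\to\gamma(D_{\rho,\max}-D_{c,\min})/(1-\gamma D_{c,\min})$ as $n\to\infty$, matching the numerator in Condition \ref{as:IS_ratio}(3). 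That relevance, however, is not needed for the present Lipschitz bound, only for the downstream contraction conclusion.
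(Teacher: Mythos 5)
Your proof is correct and follows exactly the paper's own argument: non-expansiveness of $\text{Proj}_{\mathcal{Q}}$ in $\|\cdot\|_{\mathcal{K}_{SA}}$, then $\|\cdot\|_{\mathcal{K}_{SA}}\leq\|\cdot\|_\infty$, then the quoted $\tilde{\gamma}(n)$-contraction of $\mathcal{B}_{c,\rho}$ in $\|\cdot\|_\infty$, then $\|\cdot\|_\infty\leq\frac{1}{\sqrt{\mathcal{K}_{SA,\min}}}\|\cdot\|_{\mathcal{K}_{SA}}$. The only difference is that you spell out the elementary norm-comparison inequalities that the paper simply cites, which is fine.
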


In view of Lemma \ref{le:Lipschitz_factor}, the composed operator  $\text{Proj}_{\mathcal{Q}}\mathcal{B}_{c,\rho}(\cdot)$ can be made a contraction mapping with an appropriate choice of $n$ as long as $\lim_{n\rightarrow\infty}\tilde{\gamma}(n)/\sqrt{\mathcal{K}_{SA,\min}}<1$, which after some algebra is equivalent to Condition \ref{as:IS_ratio} (3). 

We next show that the generalized importance sampling factors designed for the $\lambda$-averaged $Q$-trace and the two-sided $Q$-trace satisfy Condition \ref{as:IS_ratio} (3). First consider the $\lambda$-averaged $Q$-trace algorithm. Since both $D_c$ and $D_\rho$ are identity matrices (which implies $D_{\rho,\max}=D_{c,\min}=1$), Condition  \ref{as:IS_ratio} (3) is always satisfied. Next consider the two-sided $Q$-trace algorithm. For any choice of the upper truncation level $u\geq \bm{1}$, we can always choose the lower truncation level $\bm{0}\leq \ell\leq \bm{1}$ appropriately to satisfy Condition \ref{as:IS_ratio} (3). Specifically, for any $s\in\mathcal{S}$ and $u(s)\geq 1$, choosing $\ell(s)\leq 1$ such that $\sum_{a\in\mathcal{A}}\pi_b(a|s)g_{\ell(s),u(s)}(\pi(a|s)/\pi_b(a|s))=1$ satisfies Condition \ref{as:IS_ratio} (3). However, suppose we only truncate the importance sampling factors from above as in existing off-policy TD-learning algorithms such as Retrace$(\lambda)$ \citep{munos2016safe}, $Q^\pi(\lambda)$ \citep{harutyunyan2016q}, $V$-trace \citep{espeholt2018impala}, and $Q$-trace \citep{chen2021NAC}. It is not clear if we can make $D_{\rho,\max}$ and $D_{c,\min}$ arbitrarily close to satisfy Condition \ref{as:IS_ratio} (3). That is technical reason for us to introduce the lower truncation levels.

In the next lemma, we show that under Condition \ref{as:IS_ratio}, with properly chosen $n$, the composed operator $\text{Proj}_{\mathcal{Q}} \mathcal{B}_{c,\rho}(\cdot)$ is a contraction mapping, which ensures that Eq. (\ref{eq:pbe-key}) has a unique solution, denoted by $w_{c,\rho}^\pi$. Moreover, we provide performance guarantees on the solution $w_{c,\rho}^\pi$ in terms of an upper bound on the difference between the target value function $Q^\pi$ and our estimate $\Phi w_{c,\rho}^\pi$. Let $Q_{c,\rho}^\pi$ be the solution of generalized Bellman equation $Q=\mathcal{B}_{c,\rho}(Q)$, which is guaranteed to exist and is unique since $\mathcal{B}_{c,\rho}(\cdot)$ itself is a contraction mapping under Condition \ref{as:IS_ratio} (1) and (2) \citep{chen2021off}. 

\begin{lemma}\label{le:contraction}
	Under Condition \ref{as:IS_ratio}, suppose that the parameter $n$ is chosen such that $\gamma_c:=\tilde{\gamma}(n)/\sqrt{\mathcal{K}_{SA,\min}}<1$. Then the composed operator $\text{Proj}_{\mathcal{Q}} \mathcal{B}_{c,\rho}(\cdot)$ is a $\gamma_c$-contraction mapping with respect to $\|\cdot\|_{\mathcal{K}_{SA}}$. In this case, the unique solution $w_{c,\rho}^\pi$ of the generalized PBE (cf. Eq. (\ref{eq:pbe-key})) satisfies
	\begin{align}
		\|Q^\pi-\Phi w_{c,\rho}^\pi\|_{\mathcal{K}_{SA}}\leq\frac{\|Q_{c,\rho}^\pi-\text{Proj}_{\mathcal{Q}}Q_{c,\rho}^\pi\|_{\mathcal{K}_{SA}}}{\sqrt{1-\gamma_c^2}}+\frac{\gamma\max_{s\in\mathcal{S}}\|\pi(\cdot|s)-\pi_b(\cdot|s)\rho(s,\cdot)\|_1}{(1-\gamma)(1-\gamma D_{\rho,\max})}.\label{eq:Q_performance}
	\end{align}
\end{lemma}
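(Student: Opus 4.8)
For part (1), the Lipschitz estimate already recorded in Lemma~\ref{le:Lipschitz_factor} shows that under Condition~\ref{as:IS_ratio} the composed map $\text{Proj}_{\mathcal{Q}}\mathcal{B}_{c,\rho}(\cdot)$ is $\gamma_c$-Lipschitz with respect to $\|\cdot\|_{\mathcal{K}_{SA}}$; since the choice of $n$ guarantees $\gamma_c<1$, the map is a contraction, and the Banach fixed-point theorem yields the existence and uniqueness of the solution $w_{c,\rho}^\pi$ of Eq.~\eqref{eq:pbe-key}. For part (2), the plan is to insert the intermediate quantity $Q_{c,\rho}^\pi$ and use the triangle inequality
\begin{align*}
\|Q^\pi-\Phi w_{c,\rho}^\pi\|_{\mathcal{K}_{SA}}\leq \|Q^\pi-Q_{c,\rho}^\pi\|_{\mathcal{K}_{SA}}+\|Q_{c,\rho}^\pi-\Phi w_{c,\rho}^\pi\|_{\mathcal{K}_{SA}},
\end{align*}
bounding the two summands by the second and the first term on the right-hand side of Eq.~\eqref{eq:Q_performance}, respectively.

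To control the function approximation error $\|Q_{c,\rho}^\pi-\Phi w_{c,\rho}^\pi\|_{\mathcal{K}_{SA}}$, I would split $Q_{c,\rho}^\pi-\Phi w_{c,\rho}^\pi$ into $(Q_{c,\rho}^\pi-\text{Proj}_{\mathcal{Q}}Q_{c,\rho}^\pi)+(\text{Proj}_{\mathcal{Q}}Q_{c,\rho}^\pi-\Phi w_{c,\rho}^\pi)$. The first summand is $\mathcal{K}_{SA}$-orthogonal to $\mathcal{Q}$ and the second lies in $\mathcal{Q}$, so the Pythagorean identity applies. Since $Q_{c,\rho}^\pi=\mathcal{B}_{c,\rho}(Q_{c,\rho}^\pi)$ and $\Phi w_{c,\rho}^\pi=\text{Proj}_{\mathcal{Q}}\mathcal{B}_{c,\rho}(\Phi w_{c,\rho}^\pi)$, the in-subspace component equals $\text{Proj}_{\mathcal{Q}}\mathcal{B}_{c,\rho}(Q_{c,\rho}^\pi)-\text{Proj}_{\mathcal{Q}}\mathcal{B}_{c,\rho}(\Phi w_{c,\rho}^\pi)$, whose $\mathcal{K}_{SA}$-norm is at most $\gamma_c\|Q_{c,\rho}^\pi-\Phi w_{c,\rho}^\pi\|_{\mathcal{K}_{SA}}$ by Lemma~\ref{le:Lipschitz_factor}. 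Combining the Pythagorean identity with this contraction bound and solving the resulting quadratic inequality gives exactly $\|Q_{c,\rho}^\pi-\Phi w_{c,\rho}^\pi\|_{\mathcal{K}_{SA}}\leq \|Q_{c,\rho}^\pi-\text{Proj}_{\mathcal{Q}}Q_{c,\rho}^\pi\|_{\mathcal{K}_{SA}}/\sqrt{1-\gamma_c^2}$.

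The more delicate step is identifying the limit $Q_{c,\rho}^\pi$. I claim $Q_{c,\rho}^\pi$ is in fact the fixed point of $\mathcal{H}_\rho$. Indeed, $\mathcal{H}_\rho$ is a $\gamma D_{\rho,\max}$-contraction in $\|\cdot\|_\infty$ (because $\mathcal{H}_\rho(Q_1)-\mathcal{H}_\rho(Q_2)$ is $\gamma$ times a $\rho$-weighted one-step expectation of $Q_1-Q_2$, whose $\ell_\infty$-norm is at most $D_{\rho,\max}\|Q_1-Q_2\|_\infty$, together with Condition~\ref{as:IS_ratio}(2)), so it has a unique fixed point $Q_\rho$; and since $\mathcal{B}_{c,\rho}(Q)=\mathcal{T}_c(\mathcal{H}_\rho(Q)-Q)+Q$ with $\mathcal{T}_c$ linear, this $Q_\rho$ satisfies $\mathcal{B}_{c,\rho}(Q_\rho)=\mathcal{T}_c(\bm{0})+Q_\rho=Q_\rho$, whence $Q_\rho=Q_{c,\rho}^\pi$ by uniqueness of the fixed point of $\mathcal{B}_{c,\rho}$. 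Using $Q^\pi=\mathcal{H}_\pi(Q^\pi)$ and $Q_{c,\rho}^\pi=\mathcal{H}_\rho(Q_{c,\rho}^\pi)$, a standard fixed-point perturbation then gives $\|Q^\pi-Q_{c,\rho}^\pi\|_\infty\leq \|\mathcal{H}_\pi(Q^\pi)-\mathcal{H}_\rho(Q^\pi)\|_\infty/(1-\gamma D_{\rho,\max})$. The numerator I would bound by expanding both operators: their difference at $(s,a)$ equals $\gamma\sum_{s'}P_a(s,s')\sum_{a'}[\pi(a'|s')-\pi_b(a'|s')\rho(s',a')]Q^\pi(s',a')$, so pulling out $\|Q^\pi\|_\infty\leq 1/(1-\gamma)$ yields $\|\mathcal{H}_\pi(Q^\pi)-\mathcal{H}_\rho(Q^\pi)\|_\infty\leq \frac{\gamma}{1-\gamma}\max_{s}\|\pi(\cdot|s)-\pi_b(\cdot|s)\rho(s,\cdot)\|_1$. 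Finally, since the weights $\{\mu(s)\pi_b(a|s)\}$ sum to one, we have $\|\cdot\|_{\mathcal{K}_{SA}}\leq\|\cdot\|_\infty$, which converts this $\ell_\infty$ estimate into the second term of Eq.~\eqref{eq:Q_performance}.

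The main obstacle is precisely the identification $Q_{c,\rho}^\pi=Q_\rho$. Without it, bounding $\|Q^\pi-Q_{c,\rho}^\pi\|$ directly through the $\mathcal{B}_{c,\rho}$ fixed-point equation introduces the multi-step factor $f_n(\gamma D_{c,\max})/f_n(\gamma D_{c,\min})\geq 1$ and fails to collapse to the clean denominator $1-\gamma D_{\rho,\max}$. Recognizing that any $\mathcal{H}_\rho$-fixed point is automatically a $\mathcal{B}_{c,\rho}$-fixed point (so that the bias term is independent of both the bootstrapping parameter $n$ and the factor $c$) is the crux; the remaining manipulations are routine.
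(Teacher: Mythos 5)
Your proposal is correct, and its skeleton coincides with the paper's: part (1) follows from Lemma \ref{le:Lipschitz_factor} plus Banach's fixed-point theorem; part (2) splits $\|Q^\pi-\Phi w_{c,\rho}^\pi\|_{\mathcal{K}_{SA}}$ by the triangle inequality through $Q_{c,\rho}^\pi$, and the approximation term is handled exactly as in the paper, via the Babylonian--Pythagorean identity applied to $Q_{c,\rho}^\pi-\mathrm{Proj}_{\mathcal{Q}}Q_{c,\rho}^\pi$ and $\mathrm{Proj}_{\mathcal{Q}}Q_{c,\rho}^\pi-\Phi w_{c,\rho}^\pi$, the fixed-point identities for both objects, and the $\gamma_c$-contraction, yielding the $1/\sqrt{1-\gamma_c^2}$ factor.

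Where you genuinely diverge is the bias term $\|Q^\pi-Q_{c,\rho}^\pi\|_{\mathcal{K}_{SA}}$. The paper simply imports the bound $\|Q^\pi-Q_{c,\rho}^\pi\|_\infty\leq \gamma\max_s\sum_a|\pi(a|s)-\pi_b(a|s)\rho(s,a)|/\bigl((1-\gamma)(1-\gamma D_{\rho,\max})\bigr)$ from Proposition 2.1 of \cite{chen2021off} as a black box, whereas you re-derive it from scratch: you observe that $\mathcal{H}_\rho$ is a $\gamma D_{\rho,\max}$-contraction in $\|\cdot\|_\infty$ (using Condition \ref{as:IS_ratio}(2)), that its unique fixed point $Q_\rho$ satisfies $\mathcal{B}_{c,\rho}(Q_\rho)=\mathcal{T}_c(\bm{0})+Q_\rho=Q_\rho$ by linearity of $\mathcal{T}_c$, hence $Q_\rho=Q_{c,\rho}^\pi$ by uniqueness of the $\mathcal{B}_{c,\rho}$ fixed point (which the paper has already established under Condition \ref{as:IS_ratio}(1)--(2)), and then run a standard perturbation argument between the fixed points of $\mathcal{H}_\pi$ and $\mathcal{H}_\rho$, finishing with $\|\cdot\|_{\mathcal{K}_{SA}}\leq\|\cdot\|_\infty$. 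I verified each step: the one-step expansion of $\mathcal{H}_\pi(Q^\pi)-\mathcal{H}_\rho(Q^\pi)$, the bound $\|Q^\pi\|_\infty\leq 1/(1-\gamma)$, and the resulting denominator $(1-\gamma)(1-\gamma D_{\rho,\max})$ all check out and reproduce the second term of Eq.~(\ref{eq:Q_performance}) exactly. Your route buys self-containedness and an explanation the paper leaves implicit --- namely, why the bias term depends on neither the bootstrapping parameter $n$ nor the factor $c$: the multi-step operator $\mathcal{T}_c$ annihilates the Bellman residual at the fixed point of $\mathcal{H}_\rho$. The paper's route is shorter but opaque on this point, deferring entirely to the cited reference.
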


The first term on the RHS of Eq. (\ref{eq:Q_performance}) captures the error due to function approximation, which is in the same spirit to Theorem 1 (4) of the seminal paper \cite{tsitsiklis1997analysis}, and vanishes in the tabular setting. The second term on the RHS of Eq. (\ref{eq:Q_performance}) arises because of the use of generalized importance sampling factors, which are introduced to overcome the high variance in  off-policy learning. Note that the second term vanishes when $\rho(s,a)=\pi(a|s)/\pi_b(a|s)$ for all $(s,a)$, which corresponds to choosing $\lambda=\bm{1}$ in $\lambda$-averaged $Q$-trace or choosing $\ell(s)\leq \min_{s,a}\pi(a|s)/\pi_b(a|s)$ and $u(s)\geq \max_{s,a}\pi(a|s)/\pi_b(a|s)$ for all $s$ in two-sided $Q$-trace. However, in these cases, the cumulative product of importance sampling factors leads to a high variance in Algorithm \ref{algorithm:Off-Policy-TD}. The trade-off between the variance and the bias in $w_{c,\rho}^\pi$ (i.e., second term on the RHS of Eq. (\ref{eq:Q_performance})) will be illustrated in detail in the next subsection where we present the finite-sample bound of Algorithm \ref{algorithm:Off-Policy-TD}.

\subsection{Finite-Sample Analysis}\label{subsec:PE_bounds}
With the contraction property of the generalized PBE established, when the stepsize sequence is non-summable but squared-summable, the almost sure convergence of Algorithm \ref{algorithm:Off-Policy-TD} directly follows from standard stochastic approximation results in the literature \citep{bertsekas1996neuro,borkar2009stochastic}.
In this section, we perform finite-sample analysis of Algorithm \ref{algorithm:Off-Policy-TD}. For ease of exposition, we here only present the finite-sample bounds of $\lambda$-averaged $Q$-trace and two-sided $Q$-trace, where $c(\cdot,\cdot)$ and $\rho(\cdot,\cdot)$ are explicitly specified. For the finite-sample guarantees of Algorithm \ref{algorithm:Off-Policy-TD} with more general choices of $c(\cdot,\cdot)$ and $\rho(\cdot,\cdot)$ (as long as Condition \ref{as:IS_ratio} is satisfied), please see Section \ref{subec:pf:critic}.

For any $\delta>0$, let $t_\delta=\min\{k\geq 0:\max_{s\in\mathcal{S}}\|P_{\pi_b}^k(s,\cdot)-\mu(\cdot)\|_{\text{TV}}\leq \delta\}$ be the mixing time of the Markov chain $\{S_k\}$ induced by $\pi_b$ with precision $\delta$. Note that Assumption \ref{as:MC} implies that $t_\delta=\mathcal{O}(\log(1/\delta))$. Let $\lambda_{\min}$ be the mininum eigenvalue of the positive definite matrix $\Phi^\top \mathcal{K}_{SA}\Phi$. Let $L=1+(\gamma \rho_{\max})^n$, where $\rho_{\max}=\max_{s,a}\rho(s,a)$. We next present finite-sample guarantees of the $\lambda$-averaged $Q$-trace algorithm.

\begin{theorem}\label{co:Qtrace-bound}
	Consider $\{w_k\}$ of the $\lambda$-averaged $Q$-trace Algorithm. Suppose that (1) Assumption \ref{as:MC} is satisfied, (2) $\lambda\in [\bm{0},\bm{1}]$, (3) the parameter $n$ is chosen such that $\gamma_c:=\gamma^n/\sqrt{\mathcal{K}_{SA,\min}}<1$. Then, when using constant stepsize $\alpha$ satisfying $\alpha(t_\alpha+n+1)\leq \frac{(1-\gamma_c)\lambda_{\min}}{130L^2}$, we have for all $k\geq t_\alpha+n+1$:
	\begin{align}
		\mathbb{E}[\|w_k-w_{c,\rho}^\pi\|_2^2]\leq c_1(1-(1-\gamma_c)\lambda_{\min}\alpha)^{k-(t_\alpha+n+1)}+c_2\frac{\alpha L^2(t_\alpha+n+1)}{(1-\gamma_c)\lambda_{\min}},\label{eq:Qtrace-bound}
	\end{align}
	where $c_1=(\|w_0\|_2+\|w_0-w_{c,\rho}^\pi\|_2+1)^2$ and $c_2=130(\|w_{c,\rho}^\pi\|_2+1)^2$.  
	Moreover, we have
	\begin{align}
		\|Q^\pi-\Phi w_{c,\rho}^\pi\|_{\mathcal{K}_{SA}}\leq\frac{\|Q_{c,\rho}^\pi-\text{Proj}_{\mathcal{Q}}Q_{c,\rho}^\pi\|_{\mathcal{K}_{SA}}}{\sqrt{1-\gamma_c^2}}+\frac{\gamma\max_{s\in\mathcal{S}}(1-\lambda(s))\|\pi(\cdot|s)-\pi_b(\cdot|s)\|_1}{(1-\gamma)^2}.\label{eq:Qtrace_performance}
	\end{align}
\end{theorem}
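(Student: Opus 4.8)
The statement splits into two independent parts: the mean-square error bound in Eq.~(\ref{eq:Qtrace-bound}), a finite-sample convergence result for the stochastic iterates, and the performance bound in Eq.~(\ref{eq:Qtrace_performance}), which concerns only the deterministic fixed point $w_{c,\rho}^\pi$. The plan is to obtain the second part as an immediate specialization of Lemma~\ref{le:contraction}, and to devote the main effort to the first part, which I would establish by viewing Algorithm~\ref{algorithm:Off-Policy-TD} as a linear stochastic approximation scheme driven by Markovian noise and running a drift-plus-mixing argument.

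For the mean-square bound, I would first note that line 4 of Algorithm~\ref{algorithm:Off-Policy-TD} is affine in $w_k$: writing the observation as $O_k=(S_k,A_k,\dots,S_{k+n},A_{k+n})$, the iteration takes the form $w_{k+1}=w_k+\alpha(A(O_k)w_k+b(O_k))$, where $A(O_k)=\phi(S_k,A_k)\sum_{i=k}^{k+n-1}\gamma^{i-k}c_{k+1,i}[\gamma\rho(S_{i+1},A_{i+1})\phi(S_{i+1},A_{i+1})^\top-\phi(S_i,A_i)^\top]$ and $b(O_k)$ collects the analogous reward terms. Let $\bar A=\mathbb{E}_{S_0\sim\mu}[A(O)]$ and $\bar b=\mathbb{E}_{S_0\sim\mu}[b(O)]$; by Eq.~(\ref{eq:11}) and Lemma~\ref{le:equation} the target $w_{c,\rho}^\pi$ is exactly the root, i.e.\ $\bar A w_{c,\rho}^\pi+\bar b=\bm{0}$. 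The crucial ingredient is a negative-drift estimate for $\bar A$. Using that $\Phi^\top\mathcal{K}_{SA}$ applied to $\mathcal{B}_{c,\rho}(\Phi w)-\Phi w$ reproduces $\bar Aw+\bar b$ (this is the computation behind Lemma~\ref{le:equation}), setting $z=w-w_{c,\rho}^\pi$, and exploiting that $\text{Proj}_{\mathcal{Q}}$ is self-adjoint in $\langle\cdot,\cdot\rangle_{\mathcal{K}_{SA}}$ and fixes $\mathcal{Q}$, I would obtain $z^\top\bar Az=\langle\Phi z,\text{Proj}_{\mathcal{Q}}(\mathcal{B}_{c,\rho}(\Phi w)-\mathcal{B}_{c,\rho}(\Phi w_{c,\rho}^\pi))\rangle_{\mathcal{K}_{SA}}-\|\Phi z\|_{\mathcal{K}_{SA}}^2$. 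Cauchy--Schwarz together with the $\gamma_c$-contraction of Lemma~\ref{le:contraction} bounds the inner product by $\gamma_c\|\Phi z\|_{\mathcal{K}_{SA}}^2$, whence $z^\top\bar Az\le-(1-\gamma_c)\|\Phi z\|_{\mathcal{K}_{SA}}^2\le-(1-\gamma_c)\lambda_{\min}\|z\|_2^2$, the last inequality using $\|\Phi z\|_{\mathcal{K}_{SA}}^2\ge\lambda_{\min}\|z\|_2^2$. This is exactly the drift rate appearing in the exponent of Eq.~(\ref{eq:Qtrace-bound}).

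Next I would expand $\mathbb{E}[\|z_{k+1}\|_2^2]$ and split the cross term $2\alpha\,\mathbb{E}[z_k^\top(A(O_k)w_k+b(O_k))]$ into the drift contribution $2\alpha\,\mathbb{E}[z_k^\top\bar Az_k]$ and a Markovian-noise remainder $2\alpha\,\mathbb{E}[z_k^\top((A(O_k)-\bar A)w_k+(b(O_k)-\bar b))]$. The operator and bias norms are controlled by $L=1+(\gamma\rho_{\max})^n$ via $\|\Phi\|_\infty\le1$ and $\max_{s,a}\mathcal{R}(s,a)\le1$, which bounds the second-order term $\alpha^2\,\mathbb{E}[\|A(O_k)w_k+b(O_k)\|_2^2]$ by an $\mathcal{O}(\alpha^2L^2)(\mathbb{E}[\|z_k\|_2^2]+1)$ quantity. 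The remainder is the main obstacle: because $w_k$ is correlated with $O_k$ along a single trajectory, it does not vanish in expectation. I would control it by the standard conditioning-on-the-past technique, comparing $z_k$ with $z_{k-t_\alpha}$ (their difference is $\mathcal{O}(\alpha(t_\alpha+n)L)$ in the relevant norm by telescoping the updates) and invoking the mixing estimate $\max_{s}\|P_{\pi_b}^{t_\alpha}(s,\cdot)-\mu(\cdot)\|_{\text{TV}}\le\alpha$ to nearly decouple $O_k$ from the past, thereby bounding the remainder by $\mathcal{O}(\alpha(t_\alpha+n+1)L^2)(\mathbb{E}[\|z_k\|_2^2]+\|w_{c,\rho}^\pi\|_2^2+1)$. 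Collecting terms gives a one-step recursion of the form $\mathbb{E}[\|z_{k+1}\|_2^2]\le(1-2(1-\gamma_c)\lambda_{\min}\alpha+C\alpha^2(t_\alpha+n+1)L^2)\mathbb{E}[\|z_k\|_2^2]+C'\alpha^2(t_\alpha+n+1)L^2(\|w_{c,\rho}^\pi\|_2+1)^2$; the stepsize restriction $\alpha(t_\alpha+n+1)\le(1-\gamma_c)\lambda_{\min}/(130L^2)$ pushes the quadratic-in-$\alpha$ terms below the linear drift so that the effective contraction factor becomes $1-(1-\gamma_c)\lambda_{\min}\alpha$, and unrolling from $k_0=t_\alpha+n+1$ yields the geometric term with constant $c_1$ and the steady-state term with constant $c_2$, matching Eq.~(\ref{eq:Qtrace-bound}).

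The performance bound in Eq.~(\ref{eq:Qtrace_performance}) then follows by specializing Eq.~(\ref{eq:Q_performance}) of Lemma~\ref{le:contraction} to $\rho(s,a)=\lambda(s)\pi(a|s)/\pi_b(a|s)+1-\lambda(s)$. A direct computation gives $\sum_{a'\in\mathcal{A}}\pi_b(a'|s)\rho(s,a')=\lambda(s)+(1-\lambda(s))=1$, so $D_\rho=I$ and $D_{\rho,\max}=1$, whence $1-\gamma D_{\rho,\max}=1-\gamma$ (this also gives $\tilde\gamma(n)=\gamma^n$, consistent with the definition of $\gamma_c$ in hypothesis (3)). Moreover $\pi(a|s)-\pi_b(a|s)\rho(s,a)=(1-\lambda(s))(\pi(a|s)-\pi_b(a|s))$, so that $\|\pi(\cdot|s)-\pi_b(\cdot|s)\rho(s,\cdot)\|_1=(1-\lambda(s))\|\pi(\cdot|s)-\pi_b(\cdot|s)\|_1$. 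Substituting both simplifications into Eq.~(\ref{eq:Q_performance}) turns its second term into $\gamma\max_{s\in\mathcal{S}}(1-\lambda(s))\|\pi(\cdot|s)-\pi_b(\cdot|s)\|_1/(1-\gamma)^2$, which is precisely Eq.~(\ref{eq:Qtrace_performance}).
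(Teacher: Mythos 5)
Your proposal is correct and follows essentially the same route as the paper: the paper likewise obtains Eq.~(\ref{eq:Qtrace_performance}) by specializing Lemma \ref{le:contraction} with the identical computation $\pi(a|s)-\pi_b(a|s)\rho(s,a)=(1-\lambda(s))(\pi(a|s)-\pi_b(a|s))$, and proves Eq.~(\ref{eq:Qtrace-bound}) by casting the update as a stochastic approximation scheme driven by the augmented Markov chain $(S_k,A_k,\dots,S_{k+n},A_{k+n})$, whose mean field satisfies exactly your negative-drift estimate $(w-w_{c,\rho}^\pi)^\top\bar{F}(w)\leq-(1-\gamma_c)\lambda_{\min}\|w-w_{c,\rho}^\pi\|_2^2$ via the same projection/Cauchy--Schwarz/contraction chain. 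The only difference is one of packaging: where you unroll the drift-plus-mixing recursion by hand, the paper verifies the three needed properties (drift, geometric mixing of the augmented chain, and the $L$-Lipschitz bound on $F$, which it establishes by a telescoping rewrite of the update rather than asserting it) and then invokes Theorem 2.1 of \cite{chen2019finitesample} to produce the bound with the stated constants $c_1$, $c_2$, and the factor $130$.
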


Following the common terminology in stochastic approximation literature, we call the first term on the RHS of Eq. (\ref{eq:Qtrace-bound}) \textit{convergence bias}, and the second term \textit{variance}. When constant stepsize is used, the convergence bias goes to zero at a geometric rate while the variance is a constant roughly proportional to $\alpha t_{\alpha}$. Since $\lim_{\alpha\rightarrow 0}\alpha t_\alpha=0$ under Assumption \ref{as:MC}, the variance can be made arbitrarily small by using small $\alpha$.

The parameter $L=1+(\gamma\rho_{\max})^n$ plays an important role in the finite-sample bound. In fact, $L$ appears quadratically in the variance term of Eq. (\ref{eq:Qtrace-bound}), and captures the impact of the cumulative product of the importance sampling factors. To overcome the high variance in off-policy learning (i.e., to make sure that the parameter $L=1+(\gamma\rho_{\max})^n$ does not grow exponentially fast with respect to $n$),
we choose $\lambda\in\mathbb{R}^{|\mathcal{S}|}$ such that  $\rho_{\max}=\max_{s}\lambda(s)(\max_{a}\pi(a|s)/\pi_b(a|s)-1)+1\leq 1/\gamma  $. However, as long as $\lambda\neq \bm{1}$, the limit point of the $\lambda$-averaged $Q$-trace algorithm involves an additional bias term (i.e., the second term on the RHS of Eq. (\ref{eq:Qtrace_performance})) that does not vanish even in the tabular setting. Suppose that the target policy $\pi$ is close to the behavior policy $\pi_b$ in the sense that $\max_{s,a}\pi(a|s)/\pi_b(a|s)$ is not too large ($\max_{s,a}\pi(a|s)/\pi_b(a|s)\leq 1/\gamma$ to be precise), we can still choose $\lambda$ close to $\bm{1}$ to avoid the high variance while also produce a relatively accurate estimate to $Q^\pi$. If that is not the case, then there is a trade-off between the variance in the algorithm and the bias in the limit point in choosing the parameter $\lambda$. Specifically, large $\lambda$ leads to large $\rho_{\max}$ and hence large $L$ and large variance, but in this case the second term on the RHS of Eq. (\ref{eq:Q_performance}) is smaller, implying that we have a smaller bias in the limit point.

The previous discussion also justifies using soft versions of argmax for the actor in updating the policies (as presented in Section \ref{sec:actor}). Recall that in the on-policy setting, API with hardmax policy update suffers from exploration issues as deterministic policies have no exploration components. In the off-policy setting, while in principle hardmax policy update can be used as the behavior policy has taken care of the exploration, it is still not a preferable choice. The reason is that there will be either a large variance or a large bias because  $\pi_t$ and $\pi_b$ are in some sense most different because $\pi_t$ is a deterministic policy but $\pi_b$ should guarantee sufficient exploration for all actions.

Next, we present the finite-sample bounds of the two-sided $Q$-trace algorithm.

\begin{theorem}\label{co:bound-twosided}
	Consider $\{w_k\}$ of the two-sided $Q$-trace Algorithm. Suppose that (1) Assumption \ref{as:MC} is satisfied, (2) the upper and lower truncation levels $\ell,u\in\mathbb{R}^{|\mathcal{S}|}$ are chosen such that $\sum_{a\in\mathcal{A}}\pi_b(a|s)g_{\ell(s),u(s)}(\pi(a|s)/\pi_b(a|s))=1$ for all $s$, (3) the parameter $n$ is chosen such that $\gamma_c:=\gamma^n/\sqrt{\mathcal{K}_{SA,\min}}<1$, and (4) the stepsize $\alpha$ is chosen such that $\alpha(t_\alpha+n+1)\leq \frac{(1-\gamma_c)\lambda_{\min}}{130L^2}$. Then, we have for all $k\geq t_\alpha+n+1$ that
	\begin{align}
		\mathbb{E}[\|w_k-w_{c,\rho}^\pi\|_2^2]\leq c_1(1-(1-\gamma_c)\lambda_{\min}\alpha)^{k-(t_\alpha+n+1)}+c_2\frac{\alpha L^2 (t_\alpha+n+1)}{(1-\gamma_c)\lambda_{\min}},\label{eq:twosidedQ-bound}
	\end{align}
	where $c_1=(\|w_0\|_2+\|w_0-w_{c,\rho}^\pi\|_2+1)^2$ and $c_2=130(\|w_{c,\rho}^\pi\|_2+1)^2$. Moreover, we have
	\begin{align}\label{eq:twosidedQ-performance}
		\|Q^\pi-\Phi w_{c,\rho}^\pi\|_{\mathcal{K}_{SA}}\leq\;& \frac{1}{\sqrt{1-\gamma_c^2}}\|Q_{c,\rho}^\pi-\Phi w_{c,\rho}^\pi\|_{\mathcal{K}_{SA}}\nonumber\\
		&+\frac{\gamma \max_{s\in\mathcal{S}}\sum_{a\in\mathcal{A}}(u_{\pi,\pi_b}(s,a)-\ell_{\pi,\pi_b}(s,a))}{(1-\gamma)^2},
	\end{align}
	where $u_{\pi,\pi_b}(s,a)=\max(\pi(a|s)-\pi_b(a|s)u(s),0)$ and $\ell_{\pi,\pi_b}(s,a)=\min(\pi(a|s)-\pi_b(a|s)\ell(s),0)$.
\end{theorem}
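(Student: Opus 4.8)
The plan is to treat Theorem~\ref{co:bound-twosided} as a specialization of the general finite-sample analysis of Algorithm~\ref{algorithm:Off-Policy-TD} together with an explicit evaluation of the bias term for the two-sided truncation. The first thing I would do is verify that the two-sided $Q$-trace satisfies Condition~\ref{as:IS_ratio}. The key observation is that the normalization in hypothesis~(2), namely $\sum_{a\in\mathcal{A}}\pi_b(a|s)g_{\ell(s),u(s)}(\pi(a|s)/\pi_b(a|s))=1$ for all $s$, forces $D_c=D_\rho=I_{SA}$, so that $D_{c,\min}=D_{\rho,\max}=1$. Consequently Condition~\ref{as:IS_ratio}(2) holds because $1<1/\gamma$, part~(3) holds trivially since its numerator $\gamma(D_{\rho,\max}-D_{c,\min})$ vanishes, and part~(1) holds with equality since $c=\rho$. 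Substituting $D_{c,\min}=D_{\rho,\max}=1$ into $\tilde{\gamma}(n)=1-f_n(\gamma D_{c,\min})(1-\gamma D_{\rho,\max})$ and using $f_n(\gamma)=(1-\gamma^n)/(1-\gamma)$ collapses the contraction factor to $\tilde{\gamma}(n)=\gamma^n$, which is exactly why hypothesis~(3) reads $\gamma_c=\gamma^n/\sqrt{\mathcal{K}_{SA,\min}}<1$. With this, Lemma~\ref{le:contraction} applies verbatim, yielding the unique fixed point $w_{c,\rho}^\pi$ and the base performance bound Eq.~(\ref{eq:Q_performance}).

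Next I would derive the performance bound Eq.~(\ref{eq:twosidedQ-performance}) from Eq.~(\ref{eq:Q_performance}). Since $D_{\rho,\max}=1$, the denominator $(1-\gamma)(1-\gamma D_{\rho,\max})$ becomes $(1-\gamma)^2$, matching the statement. The remaining work is to evaluate the $\ell_1$ term $\|\pi(\cdot|s)-\pi_b(\cdot|s)\rho(s,\cdot)\|_1$ entrywise. Writing $r=\pi(a|s)/\pi_b(a|s)$ and splitting into the three regimes of $g_{\ell(s),u(s)}$, I would check that $\pi(a|s)-\pi_b(a|s)\rho(s,a)$ equals $\pi(a|s)-\pi_b(a|s)\ell(s)\le 0$ when $r<\ell(s)$, vanishes when $\ell(s)\le r\le u(s)$, and equals $\pi(a|s)-\pi_b(a|s)u(s)\ge 0$ when $r>u(s)$. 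Comparing with $u_{\pi,\pi_b}(s,a)=\max(\pi(a|s)-\pi_b(a|s)u(s),0)$ and $\ell_{\pi,\pi_b}(s,a)=\min(\pi(a|s)-\pi_b(a|s)\ell(s),0)$ shows $|\pi(a|s)-\pi_b(a|s)\rho(s,a)|=u_{\pi,\pi_b}(s,a)-\ell_{\pi,\pi_b}(s,a)$ in every regime, so summing over $a$ recovers the second term of Eq.~(\ref{eq:twosidedQ-performance}). The first term is inherited from the function-approximation error in Eq.~(\ref{eq:Q_performance}), up to replacing $\text{Proj}_{\mathcal{Q}}Q_{c,\rho}^\pi$ by the fixed point $\Phi w_{c,\rho}^\pi$, which only enlarges the bound since $\Phi w_{c,\rho}^\pi\in\mathcal{Q}$.

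For the mean-square bound Eq.~(\ref{eq:twosidedQ-bound}) I would run the standard single-time-scale stochastic-approximation argument. I would rewrite line~4 of Algorithm~\ref{algorithm:Off-Policy-TD} as $w_{k+1}=w_k+\alpha F(w_k,Y_k)$ with $Y_k$ the Markovian tuple $(S_k,A_k,\dots,S_{k+n})$, whose stationary mean $\bar F(w)$ is the left-hand side of Eq.~(\ref{eq:11}) and has $w_{c,\rho}^\pi$ as its unique root by Lemma~\ref{le:equation}. The contraction of Lemma~\ref{le:contraction} then translates into a Lyapunov drift: I would show $\langle w-w_{c,\rho}^\pi,\bar F(w)\rangle\le -(1-\gamma_c)\lambda_{\min}\|w-w_{c,\rho}^\pi\|_2^2$, using $\lambda_{\min}$ to pass between $\|\cdot\|_{\mathcal{K}_{SA}}$ and $\|\cdot\|_2$ via $\|\Phi v\|_{\mathcal{K}_{SA}}^2\ge\lambda_{\min}\|v\|_2^2$. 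I would then bound the per-step second moment of $F$, where the cumulative products $\prod_{j=k+1}^{i}c(S_j,A_j)\le\rho_{\max}^{\,i-k}$ together with the discounting $\gamma^{i-k}$ produce the factor $L=1+(\gamma\rho_{\max})^n$ entering quadratically, and I would control the Markovian bias by conditioning $t_\alpha$ steps in the past and invoking the mixing bound of Assumption~\ref{as:MC}, which introduces the shift $t_\alpha+n+1$. Combining drift and noise yields the one-step recursion $\mathbb{E}[\|w_{k+1}-w_{c,\rho}^\pi\|_2^2]\le(1-(1-\gamma_c)\lambda_{\min}\alpha)\mathbb{E}[\|w_k-w_{c,\rho}^\pi\|_2^2]+c_2'\alpha^2 L^2(t_\alpha+n+1)$, valid once the stepsize condition $\alpha(t_\alpha+n+1)\le(1-\gamma_c)\lambda_{\min}/(130L^2)$ keeps the contraction coefficient in $(0,1)$; unrolling this geometric recursion gives Eq.~(\ref{eq:twosidedQ-bound}). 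The hard part is the Markovian-noise control intertwined with the $n$-step importance-sampling products: showing that the conditional mean of $F(w_k,Y_k)$ stays close to $\bar F(w_k)$ while its second moment remains bounded by $L^2$ requires carefully tracking the cumulative products $c_{i,j}$ across correlated samples, and this is where the bulk of the technical effort lies.
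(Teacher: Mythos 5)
Your proposal is correct and follows essentially the same route as the paper, which proves the more general Theorem \ref{thm:Off-Policy-TD} and specializes it: the normalization hypothesis gives $D_c=D_\rho=I$ and hence $\tilde{\gamma}(n)=\gamma^n$, Eq. (\ref{eq:twosidedQ-performance}) follows from Lemma \ref{le:contraction} together with the same three-regime entrywise evaluation of $|\pi(a|s)-\pi_b(a|s)\rho(s,a)|$, and Eq. (\ref{eq:twosidedQ-bound}) comes from the identical stochastic-approximation recipe (Lipschitzness of $F$, geometric mixing of the $(n+1)$-step chain $\{X_k\}$, and the drift inequality of Proposition \ref{prop:PE:properties}), which the paper concludes by invoking an external finite-sample theorem rather than unrolling the recursion by hand as you propose. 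The one detail your sketch glosses over is that the stated constant $L=1+(\gamma\rho_{\max})^n$ does not follow from bounding the products $\gamma^{i-k}c_{k+1,i}$ term by term (that only yields roughly $f_n(\gamma\rho_{\max})(1+\gamma\rho_{\max})$); it requires the telescoping rewriting of $F$ carried out in the proof of Proposition \ref{prop:PE:properties}(1), which exploits $c(\cdot,\cdot)=\rho(\cdot,\cdot)$ so that the intermediate temporal-difference terms cancel.
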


The finite-sample bound of the two-sided $Q$-trace algorithm is qualitatively similar to that of the $\lambda$-averaged $Q$-trace algorithm. To overcome the high variance issue in off-policy learning, we choose the upper truncation level such that $\gamma u(s)\leq 1$ for all $s$, which ensures that the parameter $L=1+(\gamma\rho_{\max})^n$ is upper bounded by $1+(\gamma \max_{s}u(s))^n$, hence does not grow exponentially with respect to $n$. Then we choose the lower truncation level accordingly to satisfy requirement (2) stated in Theorem \ref{co:bound-twosided}. However, as long as there exists $s\in\mathcal{S}$ such that $u(s)<\max_{s,a}\pi(a|s)/\pi_b(a|s)$ or $\ell(s)>\min_{s,a}\pi(a|s)/\pi_b(a|s)$, the second term on the RHS of Eq. (\ref{eq:twosidedQ-performance}) is in general non-zero, hence adding an additional bias term to the limit point even in the tabular setting.
As a result, the trade-off between the variance and the bias in the limit point is also present in the two-sided $Q$-trace algorithm. 

In view of Theorems \ref{co:Qtrace-bound} and \ref{co:bound-twosided}, one limitation of this work is that the choice of $n$ to make $\gamma_c<1$ (which is needed for the stability of Algorithm \ref{algorithm:Off-Policy-TD}) depends on the unknown parameter $\mathcal{K}_{SA,\min}$ of the problem. In practice, one can start with a specific choice of $n$ and then gradually tune $n$ to achieve the convergence of the $\lambda$-averaged $Q$-trace algorithm or the two-sided $Q$-trace algorithm.

The proof of Theorems \ref{co:Qtrace-bound} and \ref{co:bound-twosided} are presented in Section \ref{subec:pf:critic}. In fact, we will prove a more general theorem for Algorithm \ref{algorithm:Off-Policy-TD} with generalized importance sampling factors satisfying Condition \ref{as:IS_ratio}, which covers $\lambda$-averaged $Q$-trace and two-sided $Q$-trace as its special cases. In Section \ref{subec:pf:critic}, we will also present the finite-sample bound for using linearly diminishing stepsizes, as opposed to using only constant stepsize in Theorems \ref{co:Qtrace-bound} and \ref{co:bound-twosided}.

\section{The Sample Complexity of Policy-Based Methods}\label{sec:API}

To this end, we have established the finite-time bound of the actor for using various policy update rules and finite-sample bound of the critic under off-policy sampling and linear function approximation. In this section, we combine the actor and the critic together and present the overall sample complexity bounds of policy-based methods.
\begin{algorithm}[t]\caption{A Generic Policy-Based Algorithm: Off-Policy Sampling and Linear Function Approximation}\label{algorithm:API_real}
	\begin{algorithmic}[1] 
		\STATE {\bfseries Input:} Integers $T,K$, initialization $\pi_0(\cdot|s)\sim \text{Unif}(\mathcal{A})$ for all $s\in\mathcal{S}$, $\theta_0=\bm{0}$, and a sample trajectory $\{(S_t,A_t)\}_{0\leq t\leq T(K+n)}$ collected under the behavior policy $\pi_b$.
		\FOR{$t=0,1,\dots,T-1$}
		\STATE $w_{t+1}=\text{ALG}(\bm{0},\pi_t,\pi_b,\alpha,K,\{(S_k,A_k)\}_{t(K+n)\leq k\leq(t+1)(K+n)-1})$
		\STATE \textit{\textbf{Option I: Natural Policy Gradient:}}\\
		$\theta_{t+1}=\theta_t+\beta_tw_{t+1}$ \\
		Compute $\pi_{t+1}(a|s)=\frac{\exp(\phi(s,a)^\top \theta_{t+1})}{\sum_{a'\in\mathcal{A}}\exp(\phi(s,a')^\top \theta_{t+1})}$
		at $(s,a)=(S_i,A_i)$ for all $i=(t+1)(K+n),\cdots,(t+2)(K+n)-1$.
		\STATE \textit{\textbf{Option II: Boltzmann Softmax Update:}}\\
		Compute $\pi_{t+1}(a|s)=\frac{\exp(\beta_t\phi(s,a)^\top w_{t+1})}{\sum_{a'\in\mathcal{A}}\exp(\beta_t\phi(s,a')^\top w_{t+1})}$
		at $(s,a)=(S_i,A_i)$ for all $i=(t+1)(K+n),\cdots,(t+2)(K+n)-1$.
		\STATE \textit{\textbf{Option III: $\epsilon$-Greedy Update:}}\\
		Compute 
		\begin{align*}
			\pi_{t+1}(a|s)=\begin{dcases}
				\beta_t/|\mathcal{A}|,&a\neq {\arg\max}_{a'\in\mathcal{A}}\phi(s,a')^\top w_{t+1},\\
				\beta_t/|\mathcal{A}|+1-\beta_t,&a= {\arg\max}_{a'\in\mathcal{A}}\phi(s,a')^\top w_{t+1},
			\end{dcases}
		\end{align*}
		at $(s,a)=(S_i,A_i)$ for all $i=(t+1)(K+n),\cdots,(t+2)(K+n)-1$.
		\ENDFOR
		\STATE\textbf{Output:} $\pi_T$
	\end{algorithmic}
\end{algorithm}

We begin by presenting the algorithm design for general policy-based methods in Algorithm \ref{algorithm:API_real}, which can be viewed as an extension of Algorithm \ref{algorithm:API} to the case where off-policy sampling and linear function approximation are used. Note that Algorithm \ref{algorithm:API_real} line 3 corresponds to Algorithm \ref{algorithm:API} line 3, where the critic implements Algorithm \ref{algorithm:Off-Policy-TD} for policy evaluation with off-policy data. For simplicity of notation, for a given target policy $\pi$, behavior policy $\pi_b$, constant stepsize $\alpha$, initialization $w_0$, and samples $\{(S_k,A_k)\}_{0\leq k\leq K+n-1}$, we have denoted the output of Algorithm \ref{algorithm:Off-Policy-TD} after $K$ iterations by $w=\text{ALG}(w_0,\pi,\pi_b,\alpha,K,\{(S_k,A_k)\}_{0\leq k\leq K+n-1})$.

Algorithm \ref{algorithm:API_real} lines 4, 5, 6 correspond to Algorithm \ref{algorithm:API} line 4, where we update the policy. Note that ideally we would like to perform $\pi_{t+1}=G(\Phi w_t,\pi_t)$ for policy improvement, where $\Phi w_t$ is the $Q$-function estimate associated with the weight vector $w_t$, and $G(\cdot,\cdot)$ is the policy update rule. However, directly implementing $\pi_{t+1}=G(\Phi w_t,\pi_t)$ violates the purpose of using function approximation as it requires to compute $\Phi w_t$, $\pi_t$, and the the output of $G(\cdot,\cdot)$, which are all $|\mathcal{S}||\mathcal{A}|$-dimensional objects. Algorithm \ref{algorithm:API_real} lines 4,5,6 can be viewed as an equivalent way of implementing Algorithm \ref{algorithm:API} line 4 without working with $|\mathcal{S}||\mathcal{A}|$-dimensional objects. We next elaborate on such equivalence.

Let us first look at the case of using natural policy gradient. Recall in Eq. (\ref{eq:natural policy gradient_policy}) that natural policy gradient updates the policy according to
\begin{align*}
	\pi_{t+1}(a|s)=\frac{\pi_t(a|s)\exp(\beta_t Q_t(s,a))}{\sum_{a'\in\mathcal{A}}\pi_t(a'|s)\exp(\beta_t Q_t(s,a'))},\quad \forall \;(s,a),
\end{align*}
where $\pi_t$ is the previous policy and $Q_t$ is $\pi_t$'s $Q$-function estimate. Therefore, to use function approximation, parametrizing only the $Q$-function as in Section \ref{sec:PE} is not sufficient, we also need to keep track of the policies. Inspired by \cite{agarwal2019theory}, we use compatible linear function approximation to parametrize the policy, i.e., we approximate policies using $\pi_\theta(a|s)=\frac{\exp(\phi(s,a)^\top \theta)}{\sum_{a'\in\mathcal{A}}\exp(\phi(s,a')^\top \theta)}$, where $\phi(s,a)$ is the same feature we used in approximating the $Q$-functions, and $\theta\in\mathbb{R}^d$ is the weight vector. It has been shown in \cite[Lemma 3.1]{chen2021NACLFA} that with the weight vector being updated according to $\theta_{t+1}=\theta_t+\beta_tw_t$ (which involves only $d$-dimesional objects), the corresponding policy is equivalently updated according to
\begin{align*}
	\pi_{t+1}(a|s)=\frac{\pi_t(a|s)\exp(\beta_t \phi(s,a)^\top w_t)}{\sum_{a'\in\mathcal{A}}\pi_t(a'|s)\exp(\beta_t \phi(s,a')^\top w_t)},\quad \forall \;(s,a),
\end{align*}
which is exactly the natural policy gradient update. 

Next we consider API with either Boltzmann softmax update or $\epsilon$-greedy update. Recall that in these two cases the next policy $\pi_{t+1}$ depends only on the estimate of the $Q$-function $Q_t$, and not on the previous policy. Since we already parametrized the $Q$-function, there is no need to parametrize the policy. Moreover, since implementing the policy evaluation algorithm (cf. Algorithm \ref{algorithm:Off-Policy-TD}) requires only the target policy value at state-action pairs that are visited by the sample trajectory generated by the behavior policy, we do not need to work with $|\mathcal{S}||\mathcal{A}|$-dimensional objects in either the actor or the critic in Algorithm \ref{algorithm:API_real}.

We next present the finite-sample bounds of Algorithm \ref{algorithm:API_real}. For ease of exposition, in Algorithm \ref{algorithm:Off-Policy-TD} for policy evaluation, we consider using the generalized importance sampling factors according to $\lambda$-averaged $Q$-trace. For two sided $Q$-trace or more general importance sampling factors (as long as Condition \ref{as:IS_ratio} is satisfied), the results are straight forward extensions.

\begin{theorem}\label{thm:combine}
	Consider $\pi_T$ generated by Algorithm \ref{algorithm:API_real}. Suppose that the assumptions for applying Theorem \ref{co:Qtrace-bound} are satisfied. Then under Condition \ref{condition:stepsize}, we have
	\begin{align}
		\mathbb{E}[\|Q^*-Q^{\pi_T}\|_\infty]
		\leq\;& \underbrace{\gamma^T\|Q^*-Q^{\pi_0}\|_\infty}_{N_1:\text{ Convergence bias in the actor}}+\underbrace{\frac{2\gamma\mathcal{E}_{\text{approx}}}{(1-\gamma)^2}}_{N_{2,1}}+\underbrace{\frac{2\gamma^2\mathcal{E}_{\text{bias}}}{(1-\gamma)^4}}_{N_{2,2}}+\nonumber\\
		&+\underbrace{\frac{6(1-(1-\gamma_c)\lambda_{\min}\alpha)^{\frac{1}{2}[K-(t_\alpha+n+1)]}}{(1-\gamma)^3(1-\gamma_c)^{1/2}\lambda_{\min}^{1/2}}}_{N_{2,3}: \text{ Convergence bias in the critic}}+\underbrace{\frac{70L[\alpha(t_\alpha+n+1)]^{1/2}}{\lambda_{\min}(1-\gamma_c)(1-\gamma)^3}}_{N_{2,4}: \text{ Critic variance}}\nonumber\\
		&+\underbrace{\frac{2\gamma \beta}{(1-\gamma)^2}}_{N_3},\label{eq:100}
	\end{align}
	where $\mathcal{E}_{\text{approx}}=\sup_{\pi}\|Q_{c,\rho}^\pi-\Phi w_{c,\rho}^\pi\|_\infty$ and
	$\mathcal{E}_{\text{bias}}=\max_{0\leq t\leq T}\max_{s\in\mathcal{S}}(1-\lambda(s))\|\pi_t(\cdot|s)-\pi_b(\cdot|s)\|_1$. Under Condition \ref{condition:increasing_stepsize}, we have
	\begin{align*}
		\mathbb{E}[\|Q^*-Q^{\pi_T}\|_\infty]\leq N_1+\sum_{i=1}^4N_{2,i}+\underbrace{\frac{2\gamma^T}{(1-\gamma)^2}}_{N_3'}.
	\end{align*}
\end{theorem}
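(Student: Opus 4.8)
The plan is to bolt the critic guarantee (Theorem~\ref{co:Qtrace-bound}) onto the actor guarantee (Theorem~\ref{thm:main}), the two being joined only through the critic-error term $N_2=\frac{2\gamma}{1-\gamma}\sum_{t=0}^{T-1}\gamma^{T-1-t}\mathbb{E}[\|Q^{\pi_t}-Q_t\|_\infty]$ that appears in both \eqref{bound:constant} and \eqref{bound:increasing}. The bias term $N_1$ and the policy-update-rule term $N_3$ (constant stepsize, Condition~\ref{condition:stepsize}) or $N_3'$ (geometric stepsizes, Condition~\ref{condition:increasing_stepsize}) are inherited verbatim from Theorem~\ref{thm:main}, so the whole task reduces to expanding $N_2$. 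Since in Algorithm~\ref{algorithm:API_real} the estimate driving the policy update at stage $t$ is $Q_t=\Phi w_{t+1}$ with $w_{t+1}=\text{ALG}(\bm{0},\pi_t,\pi_b,\alpha,K,\cdot)$, the first step is to produce a per-stage bound on $\mathbb{E}[\|Q^{\pi_t}-\Phi w_{t+1}\|_\infty]$ that is \emph{uniform in the random policy} $\pi_t$, and then to feed it into the geometric sum.

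For the per-stage bound I route the error through the fixed point $w_{c,\rho}^{\pi_t}$ of the generalized PBE \eqref{eq:pbe-key} and the fixed point $Q_{c,\rho}^{\pi_t}$ of $\mathcal{B}_{c,\rho}(\cdot)$:
\begin{align*}
\|Q^{\pi_t}-\Phi w_{t+1}\|_\infty
\le \underbrace{\|Q^{\pi_t}-Q_{c,\rho}^{\pi_t}\|_\infty}_{\text{IS bias}}
+\underbrace{\|Q_{c,\rho}^{\pi_t}-\Phi w_{c,\rho}^{\pi_t}\|_\infty}_{\text{approximation}}
+\underbrace{\|\Phi(w_{c,\rho}^{\pi_t}-w_{t+1})\|_\infty}_{\text{stochastic}}.
\end{align*}
The approximation term is at most $\mathcal{E}_{\text{approx}}$ by its definition. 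For the importance-sampling bias I reuse the $\|\cdot\|_\infty$ fixed-point perturbation that underlies the second term of \eqref{eq:Qtrace_performance}: since $Q^{\pi_t}=\mathcal{H}_{\pi_t}(Q^{\pi_t})$ and $\mathcal{B}_{c,\rho}$ is an $\tilde{\gamma}(n)$-contraction, $\|Q^{\pi_t}-Q_{c,\rho}^{\pi_t}\|_\infty\le \|\mathcal{B}_{c,\rho}(Q^{\pi_t})-\mathcal{H}_{\pi_t}(Q^{\pi_t})\|_\infty/(1-\tilde{\gamma}(n))$, and the residual factors through $\pi_t(a|s)-\pi_b(a|s)\rho(s,a)=(1-\lambda(s))(\pi_t(a|s)-\pi_b(a|s))$ together with $D_{\rho,\max}=1$ for $\lambda$-averaged $Q$-trace, giving $\frac{\gamma\mathcal{E}_{\text{bias}}}{(1-\gamma)^2}$ uniformly in $t$. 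For the stochastic term I use the normalization $\|\Phi\|_\infty\le 1$ (hence $\|\Phi v\|_\infty\le\|v\|_2$), Jensen's inequality, and \eqref{eq:Qtrace-bound} conditioned on $\pi_t$,
\begin{align*}
\mathbb{E}\!\left[\|\Phi(w_{c,\rho}^{\pi_t}-w_{t+1})\|_\infty\mid \pi_t\right]
\le \left(\mathbb{E}\!\left[\|w_{t+1}-w_{c,\rho}^{\pi_t}\|_2^2\mid \pi_t\right]\right)^{1/2},
\end{align*}
then split the right-hand side with $\sqrt{a+b}\le\sqrt a+\sqrt b$ into a geometrically decaying convergence-bias piece and a $\sqrt{\alpha L^2(t_\alpha+n+1)}$ variance piece.

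The remaining work is to make every constant independent of $\pi_t$. With $w_0=\bm 0$ the constants in \eqref{eq:Qtrace-bound} are $c_1=c_2/130=(\|w_{c,\rho}^{\pi_t}\|_2+1)^2$, so I need a policy-free bound on $\|w_{c,\rho}^{\pi_t}\|_2$. This follows from the contraction of $\text{Proj}_{\mathcal{Q}}\mathcal{B}_{c,\rho}(\cdot)$ (Lemma~\ref{le:contraction}), the bound $\|Q_{c,\rho}^\pi\|_\infty\le\frac{1}{1-\gamma}$ coming from $\mathcal{R}\le 1$ and $\tilde{\gamma}(n)\in(0,1)$, and the inequalities $\|w_{c,\rho}^\pi\|_2\le\lambda_{\min}^{-1/2}\|\Phi w_{c,\rho}^\pi\|_{\mathcal{K}_{SA}}\le\lambda_{\min}^{-1/2}\|\Phi w_{c,\rho}^\pi\|_\infty$; together these give $\|w_{c,\rho}^\pi\|_2\le \frac{C}{(1-\gamma)\sqrt{\lambda_{\min}}}$ for \emph{every} $\pi$, which is the origin of the extra $(1-\gamma)^{-1}$ and $\lambda_{\min}^{-1/2}$ factors in $N_{2,3}$ and $N_{2,4}$. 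Taking the outer expectation via the tower property, substituting the three uniform per-stage bounds into $N_2$, and using $\sum_{t=0}^{T-1}\gamma^{T-1-t}\le\frac{1}{1-\gamma}$ against the prefactor $\frac{2\gamma}{1-\gamma}$ collects precisely the four terms $N_{2,1},\dots,N_{2,4}$; adding back $N_1$ and $N_3$ (or $N_3'$) yields both displays.

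The step I expect to be the main obstacle is the \emph{uniformity over the random, time-varying policy} $\pi_t$. Theorem~\ref{co:Qtrace-bound} is a fixed-target-policy statement, whereas here $\pi_t$ is random and statistically coupled to the trajectory. I must argue (i) that conditioning on the history up to the start of segment $t$ freezes $\pi_t$ so that the segment-$t$ critic bound applies, (ii) that the initial distribution of each segment, which is not the stationary $\mu$, is absorbed into the mixing offset $t_\alpha$ (so the bound still holds after the $t_\alpha+n+1$ burn-in), and (iii) crucially that $\|w_{c,\rho}^{\pi_t}\|_2$, and hence $c_1,c_2$, and $\mathcal{E}_{\text{approx}}$, admit bounds independent of $\pi_t$. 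Once these uniform estimates are in place, the rest is bookkeeping of the geometric sums and the norm conversions.
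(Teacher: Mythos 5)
Your proposal is correct and follows essentially the same route as the paper's proof: inherit $N_1$ and $N_3$ (or $N_3'$) from Theorem \ref{thm:main}, decompose the per-stage critic error via the triangle inequality through $Q_{c,\rho}^{\pi_t}$ and $\Phi w_{c,\rho}^{\pi_t}$ into approximation, importance-sampling bias, and stochastic pieces, control the stochastic piece by Jensen plus $\sqrt{a+b}\leq\sqrt{a}+\sqrt{b}$ applied to Eq. (\ref{eq:Qtrace-bound}), make the constants policy-free via the uniform bound $\|w_{c,\rho}^\pi\|_2\leq \frac{2}{\sqrt{\lambda_{\min}}(1-\gamma)\sqrt{1-\gamma_c}}$, and sum the geometric series. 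The uniformity concerns (i) and (ii) you flag about conditioning on the history and the non-stationary initial distribution of each segment are real but are handled no more explicitly in the paper's own proof, which applies the fixed-target-policy critic bound per stage exactly as you propose.
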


Notably on the LHS, our finite-sample guarantees are stated for the last policy iterate $\pi_T$, while in many existing literature it was stated for the best policy among $\{\pi_t\}_{0\leq t\leq T}$ \citep{agarwal2019theory}, or a uniform sample from $\{\pi_t\}_{0\leq t\leq T-1}$ \citep{chen2021NAC,xu2020improving}.

The terms $N_1$ and $N_3$ (or $N_3'$) are the same terms appeared in Theorem \ref{thm:main}, and together capture the error in the actor update. This was illustrated right after Theorem \ref{thm:main}. The terms $\{N_{2,i}\}_{1\leq i\leq 4}$ correspond to the term $N_2$ in Theorem \ref{thm:main}, and represent the error in the critic. Specifically, $N_{2,1}$ is the function approximation error, and is equal to zero when we use a complete basis (i.e., tabular setting). A term of similar form is present in all existing work studying RL with function approximation, see for example the inherent Bellman error in \cite{munos2003error}, and the transfer error in \cite{agarwal2019theory}. The term $N_{2,2}$ represents the bias introduced to the algorithm by using generalized importance sampling factors $c(\cdot,\cdot)$ and $\rho(\cdot,\cdot)$. Note that we have $N_{2,2}=0$ when $c(s,a)=\rho(s,a)=\pi(a|s)/\pi_b(a|s)$, which corresponds to using $\lambda=\bm{1}$ in the $\lambda$-averaged $Q$-trace algorithm, and using $u(s)\geq \max_{s,a}\pi(a|s)/\pi_b(a|s)$ and $\ell(s)\leq \min_{s,a}\pi(a|s)/\pi_b(a|s)$ for all $s$ in the two-sided $Q$-trace algorithm. However, this choice of $\lambda$ (or $u$ and $\ell$) might lead to a high variance. In particular, the parameter $L$ within the term $N_{2,4}$ could be large. Such bias-variance trade-off was illustrated in Section \ref{sec:PE}. The term $N_{2,3}$ represents the convergence bias in the critic, and goes to zero geometrically fast as the inner loop iteration index $K$ goes to infinity. The term $N_{2,4}$ represents the variance in the critic, and is proportional to $\sqrt{\alpha t_\alpha}=\mathcal{O}(\sqrt{\alpha\log(1/\alpha)})$. Therefore, $N_{2,4}$ can be made arbitrarily small by using small enough stepsize $\alpha$.

Based on Theorem \ref{thm:combine}, we next derive the sample complexity of Algorithm \ref{algorithm:API_real}. To enable fair comparison with existing literature, we choose $\lambda=\bm{1}$ to eliminate the error due to using generalized importance sampling factors. Note that $\lambda=\bm{1}$ implies $\mathcal{E}_{\text{bias}}=0$ (and hence $N_{2,2}=0$) in Theorem \ref{thm:combine}.

\begin{corollary}
	For a given accuracy level $\epsilon>0$, to achieve $\mathbb{E}[\|Q^*-Q^{\pi_T}\|_\infty]\leq \epsilon+N_{2,1}$, the number of samples (i.e., the integer $TK$) required is of the size $\mathcal{O}\left(\frac{\log^3(1/\epsilon)}{\epsilon^2}\right)\Tilde{\mathcal{O}}\left(\frac{L^2n}{(1-\gamma)^7(1-\gamma_c)^3\lambda_{\min}^3}\right)$.
\end{corollary}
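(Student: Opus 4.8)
The plan is to treat Theorem~\ref{thm:combine} as a black box and convert its additive error decomposition into a budget-allocation argument. Since we set $\lambda=\bm{1}$, the bias term $N_{2,2}$ vanishes, so the target $\mathbb{E}[\|Q^*-Q^{\pi_T}\|_\infty]\leq \epsilon+N_{2,1}$ is equivalent to forcing the four remaining error terms to sum to at most $\epsilon$: under the constant-stepsize schedule I would require $N_1+N_{2,3}+N_{2,4}+N_3\leq\epsilon$, and under the geometrically increasing schedule $N_1+N_{2,3}+N_{2,4}+N_3'\leq\epsilon$. The natural strategy is to make each of these four contributions at most $\epsilon/4$ and then count how many samples each requirement costs.

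First I would dispatch the two actor terms. Bounding $N_1=\gamma^T\|Q^*-Q^{\pi_0}\|_\infty\leq \gamma^T/(1-\gamma)$ by $\epsilon/4$ forces $T=\mathcal{O}\!\left(\frac{\log(1/\epsilon)}{1-\gamma}\right)$ outer iterations, using $\log(1/\gamma)\geq 1-\gamma$. For the increasing-stepsize schedule, $N_3'=2\gamma^T/(1-\gamma)^2$ is governed by the same geometric decay, so the identical choice of $T$ suffices. For the constant-stepsize schedule, $N_3=2\gamma\beta/(1-\gamma)^2$ is a fixed bias controlled purely by the tunable parameter, so I would simply set $\beta=\Theta(\epsilon(1-\gamma)^2)$; this is a parameter choice and contributes nothing to the sample count.

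The substance is in the two critic terms. From $N_{2,4}\leq\epsilon/4$ I would extract the constraint $\alpha(t_\alpha+n+1)\leq B$ with $B=\Theta\!\left(\frac{\epsilon^2\lambda_{\min}^2(1-\gamma_c)^2(1-\gamma)^6}{L^2}\right)$, and choose $\alpha$ as large as this permits. The delicate step, and the one I expect to be the main obstacle, is that $t_\alpha=\mathcal{O}(\log(1/\alpha))$ itself depends on $\alpha$, so the inequality $\alpha(t_\alpha+n+1)\leq B$ is self-referential. Resolving it via the standard fact that $\alpha\log(1/\alpha)\leq B$ inverts to $1/\alpha=\mathcal{O}(\log(1/B)/B)$, now carrying an additive $n$ from the $n+1$ term, yields $1/\alpha=\mathcal{O}\!\left(\frac{(\log(1/\epsilon)+n)L^2}{\epsilon^2\lambda_{\min}^2(1-\gamma_c)^2(1-\gamma)^6}\right)$. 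With $\alpha$ fixed, I would then read off $K$ from $N_{2,3}\leq\epsilon/4$: applying $1-x\leq e^{-x}$ to the geometric factor gives the inner-loop length $K=\mathcal{O}\!\left(\frac{\log(1/\epsilon)}{(1-\gamma_c)\lambda_{\min}\alpha}\right)$, plus the $t_\alpha+n$ burn-in, which is lower order and in any case makes $K\geq n$ so that $K+n=\Theta(K)$.

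Finally I would assemble the total sample count $T(K+n)=\Theta(TK)$ by substituting the expressions for $T$, $K$, and $1/\alpha$. This produces $TK=\mathcal{O}\!\left(\frac{\log^2(1/\epsilon)\,(\log(1/\epsilon)+n)\,L^2}{\epsilon^2(1-\gamma_c)^3\lambda_{\min}^3(1-\gamma)^7}\right)$, and a last cosmetic simplification, using $\log^2(1/\epsilon)(\log(1/\epsilon)+n)\leq 2n\log^3(1/\epsilon)$ for small $\epsilon$, collapses this into the advertised $\mathcal{O}\!\left(\frac{\log^3(1/\epsilon)}{\epsilon^2}\right)\tilde{\mathcal{O}}\!\left(\frac{L^2n}{(1-\gamma)^7(1-\gamma_c)^3\lambda_{\min}^3}\right)$, with the $\tilde{\mathcal{O}}$ absorbing the logarithms of $\lambda_{\min}$, $(1-\gamma)$, and the mixing constants that I suppressed when writing $\log(1/B)=\mathcal{O}(\log(1/\epsilon))$.
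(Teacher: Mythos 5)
Your proposal is correct and follows exactly the route the paper intends: the corollary is derived from Theorem \ref{thm:combine} (with $\lambda=\bm{1}$ killing $N_{2,2}$) by allocating an $\epsilon/4$ budget to each remaining error term, solving for $T$, $\alpha$, and $K$ — including the correct resolution of the self-referential constraint $\alpha(t_\alpha+n+1)\leq B$ via $t_\alpha=\mathcal{O}(\log(1/\alpha))$ — and multiplying $T\cdot K$. Your final bookkeeping, including absorbing $\log^2(1/\epsilon)(\log(1/\epsilon)+n)\leq 2n\log^3(1/\epsilon)$ and pushing the parameter logarithms into the $\tilde{\mathcal{O}}$, reproduces the stated complexity.
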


Notably, we obtain $\Tilde{\mathcal{O}}(\epsilon^{-2})$ sample  complexity for policy-based methods, which matches with the sample complexity of value-based algorithms such as $Q$-learning \citep{li2020sample}. In the case of natural actor-critic, to our knowledge, \cite{cayci2021linear,lan2021policy,cen2021fast} establish the $\tilde{\mathcal{O}}(\epsilon^{-2})$ sample complexity of on-policy natural actor-critic under regularization, and \cite{chen2021NACLFA} establishes the $\tilde{\mathcal{O}}(\epsilon^{-3})$ sample complexity of a variant of off-policy natural actor-critic. We improve the sample complexity in \cite{chen2021NACLFA} by a factor of $\epsilon^{-1}$, and we do not use any regularization. 

In addition to the dependence on $\epsilon$,  the dependence on $1/(1-\gamma)$ (which is usually called the effective horizon) is also improved by a factor of $1/(1-\gamma)$ compared to existing work \citep{agarwal2019theory,chen2021NACLFA}. The bootstrapping parameter $n$ appears linearly in our sample complexity bound. This matches with the results for $n$-step TD-learning in the on-policy tabular setting \citep{chen2021finite}.

The proof of Theorem \ref{thm:combine} follows from combining the convergence rate analysis of the actor in Section \ref{sec:actor} and the critic in Section \ref{sec:PE}, and is presented in Section \ref{subsec:pf:main}.

\section{Proof of the Main Results}
In this section, we present the proof of Theorems \ref{thm:main}, \ref{co:Qtrace-bound}, \ref{co:bound-twosided}, and  \ref{thm:combine}.

\subsection{Proof of Theorem \ref{thm:main}}\label{subsec:pf:actor}

Following the proof sketch presented in Section \ref{subsec:theorem1sketch}, we next carry out the details.

\subsubsection{Step One.}
For simplicity of notation, denote $\delta_t=\max_{s,a}(Q^{\pi_t}(s,a)-Q^{\pi_{t+1}}(s,a))$ and $\zeta_t=\max_{s,a}(Q^*(s,a)-Q^{\pi_t}(s,a))=\|Q^*-Q^{\pi_t}\|_\infty$ for all $t=0,1,\cdots,T$. Note that we have by definition that $Q^{\pi_{t+1}}\geq Q^{\pi_t}-\delta_t\bm{1}$. Using the monotonicity of the Bellman operator \cite[Lemma 2.1 and Lemma 2.2]{bertsekas1996neuro}  and we have
\begin{align*}
	Q^{\pi_{t+1}}=\mathcal{H}_{\pi_{t+1}}(Q^{\pi_{t+1}})\geq \mathcal{H}_{\pi_{t+1}}(Q^{\pi_t}-\delta_t\bm{1})=\mathcal{H}_{\pi_{t+1}}(Q^{\pi_t})-\gamma\delta_t\bm{1}.
\end{align*}
It follows that
\begin{align*}
	&Q^{\pi_t}-Q^{\pi_{t+1}}\\
	\leq \;& Q^{\pi_t}-\mathcal{H}_{\pi_{t+1}}(Q^{\pi_t})+\gamma\delta_t\bm{1}\\
	=\;&Q^{\pi_t}-\mathcal{H}_{\pi_{t+1}}(Q^{\pi_t})+\mathcal{H}_{\pi_{t+1}}(Q_t)-\mathcal{H}_{\pi_{t+1}}(Q_t)+\mathcal{H}(Q_t)-\mathcal{H}(Q_t)+\gamma\delta_t\bm{1}\\
	\leq \;& \mathcal{H}_{\pi_t}(Q^{\pi_t})-\mathcal{H}_{\pi_t}(Q_t)-\mathcal{H}_{\pi_{t+1}}(Q^{\pi_t})+\mathcal{H}_{\pi_{t+1}}(Q_t)-\mathcal{H}_{\pi_{t+1}}(Q_t)+\mathcal{H}(Q_t)+\gamma\delta_t\bm{1}\\
	\leq  \;&2\gamma \|Q^{\pi_t}-Q_t\|_\infty\bm{1}+\|\mathcal{H}_{\pi_{t+1}}(Q_t)-\mathcal{H}(Q_t)\|_\infty\bm{1}+\gamma\delta_t\bm{1}.
\end{align*}
Therefore, we have $\delta_t\leq 2\gamma \|Q^{\pi_t}-Q_t\|_\infty+\|\mathcal{H}_{\pi_{t+1}}(Q_t)-\mathcal{H}(Q_t)\|_\infty+\gamma\delta_t$,
which implies
\begin{align}\label{eqeq:1}
	\delta_t\leq \frac{2\gamma \|Q^{\pi_t}-Q_t\|_\infty+\|\mathcal{H}_{\pi_{t+1}}(Q_t)-\mathcal{H}(Q_t)\|_\infty}{1-\gamma}.
\end{align}
Using the monotonicity of the Bellman operator and we have
\begin{align}
	Q^{\pi_{t+1}}&=\mathcal{H}_{\pi_{t+1}}(Q^{\pi_{t+1}})\nonumber\\
	&\geq \mathcal{H}_{\pi_{t+1}}(Q^{\pi_t}-\max_{s,a}(Q^{\pi_t}(s,a)-Q^{\pi_{t+1}}(s,a))\bm{1})\nonumber\\
	&=\mathcal{H}_{\pi_{t+1}}(Q^{\pi_t})-\gamma \max_{s,a}(Q^{\pi_t}(s,a)-Q^{\pi_{t+1}}(s,a))\bm{1}\nonumber\\
	&\geq \mathcal{H}_{\pi_{t+1}}(Q^{\pi_t})- \frac{2\gamma^2 \|Q^{\pi_t}-Q_t\|_\infty+\gamma\|\mathcal{H}_{\pi_{t+1}}(Q_t)-\mathcal{H}(Q_t)\|_\infty}{1-\gamma}\bm{1},\label{eq:1}
\end{align}
where the last line follows from Eq. (\ref{eqeq:1}). We next control $\mathcal{H}_{\pi_{t+1}}(Q^{\pi_t})$ from below in the following. Again by monotonicity of the Bellman operator we have
\begin{align*}
	\mathcal{H}_{\pi_{t+1}}(Q^{\pi_t})&\geq \mathcal{H}_{\pi_{t+1}}(Q_t-\|Q_t-Q^{\pi_t}\|_\infty\bm{1})\\
	&=\mathcal{H}_{\pi_{t+1}}(Q_t)-\gamma\|Q_t-Q^{\pi_t}\|_\infty\bm{1}\\
	&=\mathcal{H}_{\pi_{t+1}}(Q_t)-\mathcal{H}(Q_t)+\mathcal{H}(Q_t)-\gamma\|Q_t-Q^{\pi_t}\|_\infty\bm{1}\\
	&\geq \mathcal{H}_{\pi_{t+1}}(Q_t)-\mathcal{H}(Q_t)+\mathcal{H}(Q^{\pi_t}-\|Q_t-Q^{\pi_t}\|_\infty\bm{1})-\gamma\|Q_t-Q^{\pi_t}\|_\infty\bm{1}\\
	&=\mathcal{H}_{\pi_{t+1}}(Q_t)-\mathcal{H}(Q_t)+\mathcal{H}(Q^{\pi_t})-2\gamma\|Q_t-Q^{\pi_t}\|_\infty\bm{1}\\
	&\geq \mathcal{H}_{\pi_{t+1}}(Q_t)-\mathcal{H}(Q_t)+\mathcal{H}(Q^*-\zeta_t\bm{1})-2\gamma\|Q_t-Q^{\pi_t}\|_\infty\bm{1}\\
	&=\mathcal{H}_{\pi_{t+1}}(Q_t)-\mathcal{H}(Q_t)+\mathcal{H}(Q^*)-\gamma\zeta_t\bm{1}-2\gamma\|Q_t-Q^{\pi_t}\|_\infty\bm{1}\\
	&\geq -\|\mathcal{H}_{\pi_{t+1}}(Q_t)-\mathcal{H}(Q_t)\|_\infty\bm{1}+Q^*-\gamma\zeta_t\bm{1}-2\gamma\|Q_t-Q^{\pi_t}\|_\infty\bm{1}.
\end{align*}
Using the previous inequality in Eq. (\ref{eq:1}) and we have
\begin{align*}
	Q^{\pi_{t+1}}-Q^*
	&\geq -\gamma\zeta_t\bm{1}- \frac{2\gamma \|Q^{\pi_t}-Q_t\|_\infty+\|\mathcal{H}_{\pi_{t+1}}(Q_t)-\mathcal{H}(Q_t)\|_\infty}{1-\gamma}\bm{1},
\end{align*}
which implies
\begin{align*}
	\zeta_{t+1}\leq \gamma\zeta_t+ \frac{2\gamma \|Q^{\pi_t}-Q_t\|_\infty+\|\mathcal{H}_{\pi_{t+1}}(Q_t)-\mathcal{H}(Q_t)\|_\infty}{1-\gamma}.
\end{align*}
Repeatedly using the previous inequality and we obtain
\begin{align}\label{eq:actor_recursion}
	\zeta_T\leq\;& \gamma^T\zeta_0+\frac{2\gamma}{1-\gamma}\sum_{t=0}^{T-1}\gamma^{T-1-t}\|Q^{\pi_t}-Q_t\|_\infty
	\nonumber\\
	&+\frac{2\gamma}{1-\gamma}\sum_{t=0}^{T-1}\gamma^{T-1-t}\|\mathcal{H}_{\pi_{t+1}}(Q_t)-\mathcal{H}(Q_t)\|_\infty.
\end{align}

\subsubsection{Step Two.}

We now bound the term $\|\mathcal{H}_{\pi_{t+1}}(Q_t)-\mathcal{H}(Q_t)\|_\infty$, $t=0,\cdots,T-1$, in Eq. (\ref{eq:actor_recursion}). We first consider using natural policy gradient in Algorithm \ref{algorithm:API} line 4.

\paragraph{Natural Policy Gradient.}
For any $0\leq t\leq T-1$ and state-action pair $(s,a)$, using the definition of the Bellman operators $\mathcal{H}(\cdot)$ and $\mathcal{H}_\pi(\cdot)$ and we have
\begin{align}
	0&\leq [\mathcal{H}(Q_t)](s,a)-[\mathcal{H}_{\pi_{t+1}}(Q_t)](s,a)\nonumber\\
	&=\gamma \sum_{s'}P_a(s,s')\left(\max_{a'\in\mathcal{A}}Q_t(s',a')-\sum_{a'\in\mathcal{A}}\frac{\pi_t(a'|s')\exp(\beta_t Q_t(s',a'))}{\sum_{a''\in\mathcal{A}}\pi_t(a''|s')\exp(\beta_t Q_t(s',a'')}Q_t(s',a')\right).\label{eq:NPG10}
\end{align}

To proceed and bound the RHS of the previous inequality, observe that that for any policy $\pi$, $Q\in\mathbb{R}^{|\mathcal{S}||\mathcal{A}|}$, and $(s,a)$, we have
\begin{align*}
	\lim_{\beta \rightarrow\infty}\frac{\pi(a|s)\exp{(\beta Q(s,a))}}{\sum_{a'\in\mathcal{A}}\pi(a'|s)\exp(\beta Q(s,a'))}=\;&\lim_{\beta \rightarrow\infty}\frac{\pi(a|s)\exp{(\beta (Q(s,a)-Q_{s,\max}))}}{\sum_{a'\in\mathcal{A}}\pi(a'|s)\exp(\beta (Q(s,a')-Q_{s,\max}))}\\
	=\;&\begin{dcases}
		\frac{1}{|\{a\in\mathcal{A}\mid  Q(s,a)=Q_{s,\max}\}|},&Q(s,a)=Q_{s,\max},\\
		0,&Q(s,a)<Q_{s,\max},
	\end{dcases}
\end{align*}
where $Q_{s,\max}=\max_{a'\in\mathcal{A}}Q(s,a)$, and hence
\begin{align*}
	\lim_{\beta\rightarrow\infty}\sum_{a'\in\mathcal{A}}\frac{\pi_t(a'|s')\exp(\beta Q_t(s',a'))}{\sum_{a''\in\mathcal{A}}\pi_t(a''|s')\exp(\beta Q_t(s',a'')}Q_t(s',a')=\max_{a'\in\mathcal{A}}Q_t(s',a'),\quad\forall\;s',t.
\end{align*}
Therefore, natural policy gradient is an approximate version of PI when $\beta$ is large. To explicitly bound the difference between natural policy gradient and PI for using a finite stepsize, the following lemma is needed, the proof of which is presented in Appendix \ref{pf:le:difference}.

\begin{lemma}\label{le:difference}
	Let $x\in\mathbb{R}^d$ be arbitrary and $y\in\Delta^d$ satisfying $y_i>0$ for all $i$. Denote $i_{\max}=\arg\max_{1\leq i\leq d}x_i$. Then we have $\max_{1\leq i\leq d}x_i-\frac{\sum_{i=1}^dx_iy_ie^{\beta x_i}}{\sum_{j=1}^dy_je^{\beta x_j}}\leq \frac{1}{\beta}\log\left(\frac{1}{y_{i_{\max}}}\right)$ for any $\beta>0$.
\end{lemma}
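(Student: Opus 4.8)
The plan is to recognize the quantity being subtracted as the derivative of a log-sum-exp function in $\beta$, and then exploit convexity. Define $G(\beta)=\log\left(\sum_{j=1}^d y_je^{\beta x_j}\right)$. A direct differentiation shows
\begin{align*}
	G'(\beta)=\frac{\sum_{i=1}^d x_iy_ie^{\beta x_i}}{\sum_{j=1}^d y_je^{\beta x_j}},
\end{align*}
which is precisely the second term appearing in the lemma. Hence, writing $x_{\max}=\max_{1\leq i\leq d}x_i=x_{i_{\max}}$, the goal reduces to establishing $x_{\max}-G'(\beta)\leq \frac{1}{\beta}\log(1/y_{i_{\max}})$.

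The key structural facts I would assemble are the following. First, $G$ is the composition of the convex log-sum-exp map with a function that is linear in $\beta$, so $G$ is convex and differentiable. Second, since $y\in\Delta^d$, we have $G(0)=\log\left(\sum_j y_j\right)=\log(1)=0$. Third, keeping only the $i_{\max}$ term in the sum and using positivity of all remaining terms yields the lower bound $\sum_j y_je^{\beta x_j}\geq y_{i_{\max}}e^{\beta x_{\max}}$, and taking logarithms gives $G(\beta)\geq \log(y_{i_{\max}})+\beta x_{\max}$.

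The decisive step is the tangent-line (first-order) inequality for the convex function $G$ evaluated at the point $t=0$: convexity gives $G(0)\geq G(\beta)+G'(\beta)(0-\beta)$, and substituting $G(0)=0$ produces $\beta G'(\beta)\geq G(\beta)$. Chaining this with the lower bound on $G(\beta)$ from the previous paragraph yields $\beta G'(\beta)\geq \log(y_{i_{\max}})+\beta x_{\max}$. Dividing by $\beta>0$ and rearranging gives exactly $x_{\max}-G'(\beta)\leq -\frac{1}{\beta}\log(y_{i_{\max}})=\frac{1}{\beta}\log(1/y_{i_{\max}})$, which is the claim.

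I expect the main obstacle to be conceptual rather than computational: one must see the log-sum-exp structure and realize that pairing its convexity (via the tangent line at the origin, where $G$ vanishes) with the one-term lower bound is what yields the logarithmic factor. A more naive direct estimate of the softmax-weighted average of the gaps $x_{\max}-x_i$ — for instance bounding each $(x_{\max}-x_i)e^{-\beta(x_{\max}-x_i)}$ by its maximum $\frac{1}{e\beta}$ — produces a weaker bound of order $\frac{1}{\beta y_{i_{\max}}}$ rather than the sharp $\frac{1}{\beta}\log(1/y_{i_{\max}})$, so the convexity argument is genuinely needed. Once the three facts $G(0)=0$, convexity, and the single-term lower bound are in hand, the remaining algebra is routine.
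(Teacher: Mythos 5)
Your proof is correct and is essentially the same argument as the paper's: both rest on convexity of the log-sum-exp function, its vanishing at the origin (since $y\in\Delta^d$), the first-order tangent/gradient inequality comparing the evaluation point to the origin, and the single-term lower bound $\sum_j y_je^{\beta x_j}\geq y_{i_{\max}}e^{\beta x_{\max}}$. The only cosmetic difference is that you work with the univariate function $G(\beta)=\log\left(\sum_j y_je^{\beta x_j}\right)$ and take the tangent line in $\beta$ at the origin, whereas the paper works with the multivariate function $h_\beta(x)=\frac{1}{\beta}\log\left(\sum_i y_ie^{\beta x_i}\right)$ and applies the gradient inequality in $x$ against $\bm{0}$; your $G$ is precisely the restriction of that function along the ray through $x$, so the two inequalities coincide.
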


Using Lemma \ref{le:difference} to bound the RHS of Eq. (\ref{eq:NPG10}) and we have
\begin{align*}
	0&\leq [\mathcal{H}(Q_t)](s,a)-[\mathcal{H}_{\pi_{t+1}}(Q_t)](s,a)\\
	&=\gamma \sum_{s'}P_a(s,s')\left(\max_{a'\in\mathcal{A}}Q_t(s',a')-\sum_{a'\in\mathcal{A}}\frac{\pi_t(a'|s')\exp(\beta_t Q_t(s',a'))}{\sum_{a''\in\mathcal{A}}\pi_t(a''|s')\exp(\beta_t Q_t(s',a'')}Q_t(s',a')\right)\\
	&\leq \gamma \sum_{s'}P_a(s,s')\left(\frac{1}{\beta_t}\log\left(\frac{1}{\pi_t(a_{t,s'}\mid s')}\right)\right)\\
	&\leq \begin{dcases}
		\beta,&\text{Under Condition \ref{condition:stepsize}},\\
		\gamma^{2t},&\text{Under Condition \ref{condition:increasing_stepsize}}.
	\end{dcases}
\end{align*}
Therefore, under Condition \ref{condition:stepsize}, we have from Eq. (\ref{eq:actor_recursion}) that
\begin{align*}
	\|Q^*-Q^{\pi_T}\|_\infty\leq\;& \gamma^T\|Q^*-Q^{\pi_0}\|_\infty\!+\!\frac{2\gamma}{1-\gamma}\sum_{t=0}^{T-1}\gamma^{T-1-t}\|Q^{\pi_t}-Q_t\|_\infty
	\!+\!\frac{2\gamma}{1-\gamma}\sum_{t=0}^{T-1}\gamma^{T-1-t}\beta\\
	= \;&\gamma^T\|Q^*-Q^{\pi_0}\|_\infty\!+\!\frac{2\gamma}{1-\gamma}\sum_{t=0}^{T-1}\gamma^{T-1-t}\|Q^{\pi_t}-Q_t\|_\infty
	\!+\!\frac{2\gamma \beta}{(1-\gamma)^2}.
\end{align*}
Under Condition \ref{condition:increasing_stepsize}, we have from Eq. (\ref{eq:actor_recursion}) that
\begin{align*}
	\|Q^*-Q^{\pi_T}\|_\infty\leq\;& \gamma^T\|Q^*-Q^{\pi_0}\|_\infty+\frac{2\gamma}{1-\gamma}\sum_{t=0}^{T-1}\gamma^{T-1-t}\|Q^{\pi_t}-Q_t\|_\infty
	+\frac{2\gamma}{1-\gamma}\sum_{t=0}^{T-1}\gamma^{T-1+t}\\
	= \;&\gamma^T\|Q^*-Q^{\pi_0}\|_\infty+\frac{2\gamma}{1-\gamma}\sum_{t=0}^{T-1}\gamma^{T-1-t}\|Q^{\pi_t}-Q_t\|_\infty
	+\frac{2\gamma^T}{(1-\gamma)^2}.
\end{align*}

\paragraph{Boltzmann Softmax Update.} We next consider using Boltzmann softmax update for Algorithm \ref{algorithm:API} line 4. For any $0\leq t\leq T-1$ and state-action pair $(s,a)$, we have from the update rule that
\begin{align*}
	0&\leq [\mathcal{H}(Q_t)](s,a)-[\mathcal{H}_{\pi_{t+1}}(Q_t)](s,a)\\
	&=\gamma \sum_{s'}P_a(s,s')\left(\max_{a'\in\mathcal{A}}Q_t(s',a')-\sum_{a'\in\mathcal{A}}\frac{\exp(\beta_t Q_t(s',a'))}{\sum_{a''\in\mathcal{A}}\exp(\beta_t Q_t(s',a'')}Q_t(s',a')\right)\\
	&=\gamma \sum_{s'}P_a(s,s')\left(\max_{a'\in\mathcal{A}}Q_t(s',a')-\sum_{a'\in\mathcal{A}}\frac{\exp(\beta_t Q_t(s',a'))/|\mathcal{A}|}{\sum_{a''\in\mathcal{A}}\exp(\beta_t Q_t(s',a'')/|\mathcal{A}|}Q_t(s',a')\right)\\
	&\leq \frac{\gamma}{\beta_t} \log(|\mathcal{A}|)\tag{Lemma \ref{le:difference}}\\
	&\leq \begin{dcases}
		\beta,&\text{Under Condition \ref{condition:stepsize}},\\
		\gamma^{2t},&\text{Under Condition \ref{condition:increasing_stepsize}},\\
	\end{dcases}
\end{align*}
The result then follows from using the previous inequality in Eq. (\ref{eq:actor_recursion}).

\paragraph{$\epsilon$-Greedy Update.} Finally, we consider using $\epsilon$-greedy update in Algorithm \ref{algorithm:API} line 4. For any $0\leq t\leq T-1$ and state-action pair $(s,a)$ that
\begin{align*}
	0&\leq [\mathcal{H}(Q_t)](s,a)-[\mathcal{H}_{\pi_{t+1}}(Q_t)](s,a)\\
	&=\gamma\mathbb{E}\left[\max_{a'\in\mathcal{A}}Q_t(S_1,a')-Q_t(S_1,A_1)\;\middle|\; S_0=s,A_0=a\right]\tag{$A_1\sim \pi_{t+1}(\cdot|S_1)$}\\
	&=\gamma \sum_{s'}P_a(s,s')\left(\left(\frac{1}{\beta(t,s')}-\frac{1}{|\mathcal{A}|\beta(t,s')}\right)Q_t(s',a_{t,s'})-\sum_{a'\neq a_{t,s'}}\frac{1}{|\mathcal{A}|\beta(t,s')}Q_t(s',a')\right)\\
	&\leq \gamma \sum_{s'}P_a(s,s') \frac{2}{\beta(t,s')}\max_{a'\in\mathcal{A}}|Q_t(s',a')|\\
	& \leq \begin{dcases}
		\beta,&\text{Under Condition \ref{condition:stepsize}},\\
		\gamma^{2t},&\text{Under Condition \ref{condition:increasing_stepsize}}.
	\end{dcases}
\end{align*}
The result follows from using the previous inequality in Eq. (\ref{eq:actor_recursion}). The proof of Theorem \ref{thm:main} is now complete.

\subsection{Proof of Theorem \ref{co:Qtrace-bound} and Theorem \ref{co:bound-twosided}}\label{subec:pf:critic}

Instead of proving Theorems \ref{co:Qtrace-bound} and \ref{co:bound-twosided}, we will state and prove finite-sample bounds for Algorithm \ref{algorithm:Off-Policy-TD} with $c(\cdot,\cdot)$ and $\rho(\cdot,\cdot)$ satisfying Condition \ref{as:IS_ratio}, which subsumes Theorems \ref{co:Qtrace-bound} and \ref{co:bound-twosided} as its special cases. In this more general setup where we do not necessarily have $c(\cdot,\cdot)=\rho(\cdot,\cdot)$, we define $L$ as
\begin{align}\label{def:L}
	L=\begin{dcases}
		(1+(\gamma \rho_{\max})^n),&c(\cdot,\cdot)=\rho(\cdot,\cdot),\\
		(1+\gamma \rho_{\max})f_n(\gamma c_{\max}),&c(\cdot,\cdot)\neq \rho(\cdot,\cdot),
	\end{dcases}
\end{align}
where $c_{\max}=\max_{s,a}c(s,a)$ and $\rho_{\max}=\max_{s,a}\rho(s,a)$. Moreover, in addition to using constant stepsize $\alpha_k\equiv\alpha$, we also consider using diminishing stepsizes of the form $\alpha_k=\frac{\alpha}{k+h}$, where $\alpha,h>0$. In this case, for simplicity of notation, we write $t_k$ for $t_{\alpha_k}$, which is the mixing time of the Markov chain $\{S_k\}$ (induced by $\pi_b$) with precision $\alpha_k$.

\begin{theorem}\label{thm:Off-Policy-TD}
	Consider $\{w_k\}$ of Algorithm \ref{algorithm:Off-Policy-TD}. Suppose that (1) Assumption  \ref{as:MC} is satisfied, (2) the generalized importance sampling factors satisfy Condition \ref{as:IS_ratio}, (3) the parameter $n$ is chosen such that $\gamma_c:=\tilde{\gamma}(n)/\sqrt{\mathcal{K}_{SA,\min}}<1$. Then, when using constant stepsize $\alpha_k\equiv \alpha$, where $\alpha$ is chosen such that $\alpha(t_\alpha+n+1)\leq \frac{(1-\gamma_c)\lambda_{\min}}{130L^2}$, we have for all $k\geq t_\alpha+n+1$:
	\begin{align}\label{eq:bound}
		\mathbb{E}[\|w_k-w_{c,\rho}^\pi\|_2^2]\leq c_1(1-(1-\gamma_c)\lambda_{\min}\alpha)^{k-(t_\alpha+n+1)}+c_2L^2\frac{\alpha (t_\alpha+n+1)}{(1-\gamma_c)\lambda_{\min}},
	\end{align}
	where $c_1=(\|w_0\|_2+\|w_0-w_{c,\rho}^\pi\|_2+1)^2$ and $c_2=130(\|w_{c,\rho}^\pi\|_2+1)^2$. When using diminishing stepsizes of the form $\alpha_k=\frac{\alpha}{k+h}$ with $\alpha>\frac{1}{(1-\gamma_c)\lambda_{\min}}$ and $h$ chosen such that $\sum_{i=k-(t_k+n+1)}^{k-1}\alpha_i\leq \frac{(1-\gamma_c)\lambda_{\min}}{130L^2}$ for all $k\geq t_k+n+1$, we have 
	\begin{align*}
		\mathbb{E}[\|w_k-w_{c,\rho}^\pi\|_2^2]\leq c_1\frac{k_0+h}{k+h}+c_2\frac{8e\alpha^2}{(1-\gamma_c)\lambda_{\min}\alpha-1}\frac{t_k+n+1}{k+h}
	\end{align*}
	for all $k\geq k_0$, where $k_0:=\min\{k\;:\;k\geq t_k+n+1\}$.
\end{theorem}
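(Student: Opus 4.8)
The plan is to treat Algorithm \ref{algorithm:Off-Policy-TD} as a linear stochastic approximation scheme driven by Markovian noise and to run a drift analysis in the Euclidean norm. First I would rewrite the update as $w_{k+1}=w_k+\alpha_k F(w_k,Y_k)$, where $Y_k=(S_k,A_k,\dots,S_{k+n})$ is the $n$-step observation window and $F(w,Y_k)=\phi(S_k,A_k)\sum_{i=k}^{k+n-1}\gamma^{i-k}c_{k+1,i}\Delta_i(w)$. Since each $\Delta_i(w)$ is affine in $w$ and the importance sampling products $c_{k+1,i}$ do not depend on $w$, both $F(\cdot,Y_k)$ and its stationary expectation $\bar F(w):=\mathbb{E}_{Y\sim\mu}[F(w,Y)]$ are affine in $w$. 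By Lemma \ref{le:equation}, the equation $\bar F(w)=\bm{0}$ is exactly the generalized PBE of Eq.~(\ref{eq:pbe-key}), whose unique solution is $w_{c,\rho}^\pi$ under the hypotheses (via Lemma \ref{le:contraction}).

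The analysis rests on three ingredients. The first is a negative-drift property: using the $\gamma_c$-contraction of $\text{Proj}_{\mathcal{Q}}\mathcal{B}_{c,\rho}(\cdot)$ in $\|\cdot\|_{\mathcal{K}_{SA}}$ from Lemma \ref{le:contraction}, together with $\lambda_{\min}$ being the smallest eigenvalue of $\Phi^\top\mathcal{K}_{SA}\Phi$, I would establish
\begin{align*}
	\langle w-w_{c,\rho}^\pi,\bar F(w)\rangle\leq -(1-\gamma_c)\lambda_{\min}\|w-w_{c,\rho}^\pi\|_2^2.
\end{align*}
The second is a bounded-increment estimate $\|F(w,Y_k)\|_2\leq L(\|w-w_{c,\rho}^\pi\|_2+\|w_{c,\rho}^\pi\|_2+1)$ up to absolute constants, which follows from the normalization $\|\Phi\|_\infty\leq 1$, $\mathcal{R}\leq 1$, and the truncation of the sampling factors encoded in the definition (\ref{def:L}) of $L$; this is exactly where controlling $\rho_{\max}$ and $c_{\max}$ keeps the variance from growing exponentially in $n$. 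The third, and the crux, is the control of the Markovian-noise bias $\mathbb{E}[\langle w_k-w_{c,\rho}^\pi,F(w_k,Y_k)-\bar F(w_k)\rangle]$.

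For this bias I would use the standard conditioning argument with lookback horizon $\tau:=t_\alpha+n+1$. Decomposing
\begin{align*}
	F(w_k,Y_k)-\bar F(w_k)=\underbrace{F(w_k,Y_k)-F(w_{k-\tau},Y_k)}_{(a)}+\underbrace{F(w_{k-\tau},Y_k)-\bar F(w_{k-\tau})}_{(b)}+\underbrace{\bar F(w_{k-\tau})-\bar F(w_k)}_{(c)},
\end{align*}
terms $(a)$ and $(c)$ are controlled by the Lipschitz continuity of $F$ and $\bar F$ in $w$ combined with the accumulated-drift bound $\|w_k-w_{k-\tau}\|_2=\mathcal{O}(\alpha L\tau)$, while term $(b)$ is controlled by the geometric mixing of Assumption \ref{as:MC}: conditioning $\tau$ steps back decouples $w_{k-\tau}$ from $Y_k$ up to a total-variation error of order $\alpha$ (by the definition of $t_\alpha$) after the $n$-step window. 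Summing yields a bias of order $\alpha L^2\tau(\|w_k-w_{c,\rho}^\pi\|_2^2+(\|w_{c,\rho}^\pi\|_2+1)^2)$.

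Inserting all three ingredients into the expansion $\|w_{k+1}-w_{c,\rho}^\pi\|_2^2=\|w_k-w_{c,\rho}^\pi\|_2^2+2\alpha_k\langle w_k-w_{c,\rho}^\pi,F(w_k,Y_k)\rangle+\alpha_k^2\|F(w_k,Y_k)\|_2^2$ and taking expectations gives a one-step recursion
\begin{align*}
	\mathbb{E}[\|w_{k+1}-w_{c,\rho}^\pi\|_2^2]\leq(1-(1-\gamma_c)\lambda_{\min}\alpha_k)\,\mathbb{E}[\|w_k-w_{c,\rho}^\pi\|_2^2]+c\,\alpha_k^2 L^2\tau\,(\|w_{c,\rho}^\pi\|_2+1)^2,
\end{align*}
valid once $k\geq\tau$, where the stepsize condition $\alpha(t_\alpha+n+1)\leq(1-\gamma_c)\lambda_{\min}/(130L^2)$ keeps the contraction factor in $(0,1)$ and absorbs the $\mathcal{O}(\alpha^2)$ self-interaction (the constant $130$ is the residue of this bookkeeping). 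For constant stepsize I would unroll the geometric recursion to obtain the stated geometrically decaying bias plus the constant variance term; for $\alpha_k=\alpha/(k+h)$ I would prove the $\mathcal{O}(1/k)$ bound by induction on $k$, with $h$ chosen so $\sum_{i=k-\tau}^{k-1}\alpha_i$ meets the stepsize budget, and the requirement $\alpha>1/((1-\gamma_c)\lambda_{\min})$ is exactly what makes the induction close at rate $1/(k+h)$. The main obstacle is term $(b)$ intertwined with the $n$-step window: one must track how $\tau=t_\alpha+n+1$ couples the geometric mixing to the accumulated drift $\|w_k-w_{k-\tau}\|_2$ and ensure every cross term is absorbed into the $(1-\gamma_c)\lambda_{\min}\alpha$ contraction without degrading the rate; in the diminishing case the added difficulty is that $t_k$ itself grows logarithmically in $k$, so the induction must tolerate a slowly time-varying lookback horizon while still yielding the clean $1/(k+h)$ decay.
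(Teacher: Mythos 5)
Your proposal is correct and its skeleton coincides with the paper's proof: both recast Algorithm \ref{algorithm:Off-Policy-TD} as a linear stochastic approximation scheme $w_{k+1}=w_k+\alpha_kF(w_k,X_k)$ driven by the augmented chain $X_k=(S_k,A_k,\dots,S_{k+n},A_{k+n})$, and both rest on exactly the three facts the paper records as Proposition \ref{prop:PE:properties}: Lipschitz continuity of $F(\cdot,x)$ with constant $L$ (where, in the case $c(\cdot,\cdot)=\rho(\cdot,\cdot)$, a telescoping rewriting of $F$ is what produces $L=1+(\gamma\rho_{\max})^n$ rather than a factor of order $f_n(\gamma c_{\max})$), geometric mixing of $\{X_k\}$ with the $n+1$ offset, and the drift inequality $(w-w_{c,\rho}^\pi)^\top\bar{F}(w)\leq-(1-\gamma_c)\lambda_{\min}\|w-w_{c,\rho}^\pi\|_2^2$ deduced from Lemma \ref{le:contraction} and the definition of $\lambda_{\min}$. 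Where you genuinely differ is the final step: the paper stops after verifying these conditions and invokes Theorem 2.1 of \cite{chen2019finitesample} as a black box, which is what supplies both displayed bounds, the constants $130$ and $8e$, and the burn-in index $k_0$; you instead re-derive that general stochastic-approximation result by hand, via the lookback decomposition with horizon $\tau=t_\alpha+n+1$, the accumulated-drift estimate $\|w_k-w_{k-\tau}\|_2=\mathcal{O}(\alpha L\tau)$, and unrolling (constant stepsize) or induction (diminishing stepsize). Your route buys a self-contained argument that makes explicit why the lookback must be $t_\alpha+n+1$ rather than $t_\alpha$, where the stepsize budget $\alpha(t_\alpha+n+1)\leq(1-\gamma_c)\lambda_{\min}/(130L^2)$ is consumed, and why $\alpha>1/((1-\gamma_c)\lambda_{\min})$ is needed to close the $1/(k+h)$ induction; the paper's route buys brevity and protection from the constant-tracking your plan defers (your phrase ``up to absolute constants'' hides exactly the bookkeeping the cited theorem has already done). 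One detail to watch if you execute the plan: the bound at zero is $\|F(\bm{0},x)\|_2\leq f_n(\gamma c_{\max})$, which is not in general dominated by $L$, so your increment bound $\|F(w,x)\|_2\leq L(\|w\|_2+1)$ up to constants requires this quantity to be tracked separately or absorbed in the same way the paper's final statement folds it into the $L^2$ variance term.
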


To prove Theorem \ref{thm:Off-Policy-TD}, we first rewrite Algorithm \ref{algorithm:Off-Policy-TD} as a stochastic approximation algorithm. Let $\{X_k\}$ be a finite-state Markov chain defined as $X_k=(S_k,A_k,...,S_{k+n},A_{k+n})$ for any $k\geq 0$. Denote the state-space of $\{X_k\}$ by $\mathcal{X}$. It is clear that under Assumption \ref{as:MC}, the Markov chain $\{X_k\}$ also admits a unique stationary distribution, which we denote by $\nu\in\Delta^{|\mathcal{X}|}$. Let $F:\mathbb{R}^d\times \mathcal{X}\mapsto\mathbb{R}^d$ be an operator defined as
\begin{align*}
	F(w,x)=\phi(s_0,a_0)\sum_{i=0}^{n-1}\gamma^i c_{1,i}\Delta_i(w)
\end{align*}
for any $w\in\mathbb{R}^d$ and $x=(s_0,a_0,...,s_n,a_n)\in\mathcal{X}$.
Let $\Bar{F}:\mathbb{R}^d\mapsto\mathbb{R}^d$ be the ``expected'' version of $F(\cdot,\cdot)$ defined by $\Bar{F}(w)=\mathbb{E}_{X\sim\nu}[F(w,X)]$. Using the notation above, the update equation in line 4 of Algorithm \ref{algorithm:Off-Policy-TD} can be compactly written as
\begin{align}\label{eq:sa}
	w_{k+1}&=w_k+\alpha_kF(w_k,X_k),
\end{align}
which is a stochastic approximation algorithm for solving the equation $\Bar{F}(w)=0$ with Markovian noise. Note that $\Bar{F}(w)=0$ is equivalent to the generalized PBE (\ref{eq:pbe-key}) (cf. Lemma \ref{le:equation}). We next establish the properties of the operators $F(\cdot,\cdot)$, $\Bar{F}(\cdot)$, and the Markov chain $\{X_k\}$ in the following proposition, which enables us to use standard stochastic approximation results in the literature to derive finite-sample bounds of Algorithm \ref{algorithm:Off-Policy-TD}.

\begin{proposition}\label{prop:PE:properties}
	The following statements hold:
	\begin{enumerate}[(1)]
		\item $\|F(w_1,x)-F(w_2,x)\|_2\leq L\|w_1-w_2\|_2$ for any $w_1,w_2\in\mathbb{R}^d$ and $x\in\mathcal{X}$, and $\|F(\bm{0},x)\|_2\leq f_n(\gamma c_{\max})$ for any $x\in\mathcal{X}$,
		\item $\max_{x\in\mathcal{X}}\left\|P_X^{k+n+1}(x,\cdot)-\nu(\cdot)\right\|_{\text{TV}}\leq C\sigma^k$ for all $k\geq  0$, where $P_X$ is the transition probability matrix of the Markov chain $\{X_k\}$,
		\item $(w-w_{c,\rho}^\pi)^\top\Bar{F}(w)\leq -(1-\gamma_c)\lambda_{\min}\|w-w_{c,\rho}^\pi\|_2^2$ for any $w\in\mathbb{R}^d$.
	\end{enumerate}
\end{proposition}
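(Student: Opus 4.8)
The plan is to treat the three statements separately, since they rely on different tools, with statement (3) being the real crux. For statement (1), I would start from $F(w,x)=\phi(s_0,a_0)\sum_{i=0}^{n-1}\gamma^i c_{1,i}\Delta_i(w)$ and use that $\Delta_i(w)$ is affine in $w$, so that $\Delta_i(w_1)-\Delta_i(w_2)=\gamma\rho(s_{i+1},a_{i+1})\phi(s_{i+1},a_{i+1})^\top(w_1-w_2)-\phi(s_i,a_i)^\top(w_1-w_2)$. Bounding each increment crudely by $(1+\gamma\rho_{\max})\|w_1-w_2\|_2$ (using $\|\phi(s,a)\|_2\leq\|\phi(s,a)\|_1\leq 1$) and summing $\gamma^i c_{1,i}\leq(\gamma c_{\max})^i$ gives the Lipschitz constant $(1+\gamma\rho_{\max})f_n(\gamma c_{\max})$, which is $L$ in the case $c\neq\rho$. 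For the sharper constant $1+(\gamma\rho_{\max})^n$ in the case $c=\rho$, I would exploit a telescoping identity: writing $\psi_i=\phi(s_i,a_i)^\top(w_1-w_2)$ and using $\rho=c$, one has $\gamma^i c_{1,i}(\Delta_i(w_1)-\Delta_i(w_2))=\gamma^{i+1}c_{1,i+1}\psi_{i+1}-\gamma^i c_{1,i}\psi_i$, so the sum collapses to $\gamma^n c_{1,n}\psi_n-\psi_0$; bounding $c_{1,n}\leq\rho_{\max}^n$ and $|\psi_i|\leq\|w_1-w_2\|_2$ yields the claim. The bound on $\|F(\bm 0,x)\|_2$ follows by the same summation with $\Delta_i(\bm 0)=\mathcal{R}(s_i,a_i)\in[0,1]$.

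For statement (2), I would use that $X_{k+n+1}$ is generated from $S_{k+n+1}$ through a Markov kernel $\Psi$ (running $\pi_b$ forward to fill the length-$(n+1)$ window), so that $P_X^{k+n+1}(x,\cdot)=\eta_x\Psi$ and $\nu=\mu\Psi$, where $\eta_x$ is the law of $S_{k+n+1}$ given $X_0=x$. Since total variation is non-expansive under any Markov kernel, $\|P_X^{k+n+1}(x,\cdot)-\nu\|_{\text{TV}}\leq\|\eta_x-\mu\|_{\text{TV}}$. Finally, $\eta_x$ is obtained from the $(s_n,a_n)$ read off from $x$ by one transition to $S_{n+1}$ followed by $k$ steps of the $S$-chain, so $\eta_x=P_{a_n}(s_n,\cdot)P_{\pi_b}^k$; convexity of the TV distance together with the geometric mixing assumption $\|P_{\pi_b}^k(s',\cdot)-\mu\|_{\text{TV}}\leq C\sigma^k$ gives $\|\eta_x-\mu\|_{\text{TV}}\leq C\sigma^k$. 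The shift by $n+1$ in the exponent is exactly the window length.

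Statement (3) is where the work lies. The key is to reconnect $\bar F$ to the projected Bellman map. Using Lemma \ref{le:equation} and the fact that $\text{Proj}_{\mathcal{Q}}=\Phi(\Phi^\top\mathcal{K}_{SA}\Phi)^{-1}\Phi^\top\mathcal{K}_{SA}$ with $\Phi$ of full column rank, I would first establish the identity $\bar F(w)=\Phi^\top\mathcal{K}_{SA}(\mathcal{B}_{c,\rho}(\Phi w)-\Phi w)$. Since $\bar F$ is affine and $\bar F(w_{c,\rho}^\pi)=0$, writing $u=w-w_{c,\rho}^\pi$ gives $\bar F(w)=\Phi^\top\mathcal{K}_{SA}(\mathcal{B}_{c,\rho}(\Phi w)-\mathcal{B}_{c,\rho}(\Phi w_{c,\rho}^\pi))-\Phi^\top\mathcal{K}_{SA}\Phi u$. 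Taking the inner product with $u$, the second term contributes $-\|\Phi u\|_{\mathcal{K}_{SA}}^2$. For the first term, the decisive observation is that $\Phi u\in\mathcal{Q}$, so by self-adjointness of the orthogonal projection in the $\mathcal{K}_{SA}$-inner product I may insert $\text{Proj}_{\mathcal{Q}}$ for free, obtaining $u^\top\Phi^\top\mathcal{K}_{SA}(\mathcal{B}_{c,\rho}(\Phi w)-\mathcal{B}_{c,\rho}(\Phi w_{c,\rho}^\pi))=\langle\Phi u,\text{Proj}_{\mathcal{Q}}\mathcal{B}_{c,\rho}(\Phi w)-\text{Proj}_{\mathcal{Q}}\mathcal{B}_{c,\rho}(\Phi w_{c,\rho}^\pi)\rangle_{\mathcal{K}_{SA}}$. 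Cauchy--Schwarz together with the contraction bound of Lemma \ref{le:Lipschitz_factor} then bounds this by $\gamma_c\|\Phi u\|_{\mathcal{K}_{SA}}^2$. Combining the two terms gives $u^\top\bar F(w)\leq-(1-\gamma_c)\|\Phi u\|_{\mathcal{K}_{SA}}^2$, and lower bounding $\|\Phi u\|_{\mathcal{K}_{SA}}^2=u^\top\Phi^\top\mathcal{K}_{SA}\Phi u\geq\lambda_{\min}\|u\|_2^2$ finishes the proof. I expect the main obstacle to be exactly this projection-insertion step: without recognizing that $\Phi u$ lies in $\mathcal{Q}$, one only has access to the contraction of the \emph{composed} operator (Lemma \ref{le:Lipschitz_factor}) rather than of $\mathcal{B}_{c,\rho}$ in the $\mathcal{K}_{SA}$-norm, so seeing that the $\Phi^\top\mathcal{K}_{SA}$ prefactor is precisely what converts the bare Bellman operator into the composed one is what makes the negative-drift constant come out as $(1-\gamma_c)\lambda_{\min}$.
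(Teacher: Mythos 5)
Your proposal is correct and takes essentially the same route as the paper's own proof: the same telescoping/rearrangement of the sum yielding $1+(\gamma\rho_{\max})^n$ when $c=\rho$ and $(1+\gamma\rho_{\max})f_n(\gamma c_{\max})$ otherwise, the same reduction of the $X$-chain mixing to the $S$-chain mixing (the paper simply carries out your data-processing step by an explicit summation over the window variables), and for part (3) the identical decomposition around $\Bar{F}(w^\pi_{c,\rho})=0$ with the projection inserted via the identity $\Phi^\top\mathcal{K}_{SA}=\Phi^\top\mathcal{K}_{SA}\Phi(\Phi^\top\mathcal{K}_{SA}\Phi)^{-1}\Phi^\top\mathcal{K}_{SA}$, which is precisely your self-adjointness observation, followed by Cauchy--Schwarz, Lemma \ref{le:Lipschitz_factor}, and the bound $\|\Phi u\|_{\mathcal{K}_{SA}}^2\geq\lambda_{\min}\|u\|_2^2$.
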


\begin{proof}[Proof of Proposition \ref{prop:PE:properties}]
\begin{enumerate}[(1)]
	\item 
	We first rewrite the operator $F(\cdot,\cdot)$ in the following equivalent way. For any $w\in\mathbb{R}^d$ and $x=(s_0,a_0,...,s_n,a_n)\in\mathcal{X}$, we have
	\begin{align*}
		&F(w,x)\\
		=\;&\phi(s_0,a_0)\sum_{i=0}^{n-1}\gamma^i\prod_{j=1}^ic(s_j,a_j)(\mathcal{R}(s_i,a_i)\!+\!\gamma \rho(s_{i+1},a_{i+1})\phi(s_{i+1},a_{i+1})^\top w \!-\!\phi(s_i,a_i)^\top w)\\
		=\;&\phi(s_0,a_0)\sum_{i=0}^{n-1}\gamma^i\prod_{j=1}^ic(s_j,a_j)\mathcal{R}(s_i,a_i)-\phi(s_0,a_0)\sum_{i=0}^{n-1}\gamma^i\prod_{j=1}^ic(s_j,a_j)\phi(s_i,a_i)^\top w\\
		&+\phi(s_0,a_0)\sum_{i=0}^{n-1}\gamma^{i+1}\prod_{j=1}^ic(s_j,a_j)\rho(s_{i+1},a_{i+1})\phi(s_{i+1},a_{i+1})^\top w\\
		=\;&\phi(s_0,a_0)\sum_{i=0}^{n-1}\gamma^i\prod_{j=1}^ic(s_j,a_j)\mathcal{R}(s_i,a_i)-\phi(s_0,a_0)\sum_{i=0}^{n-1}\gamma^i\prod_{j=1}^ic(s_j,a_j)\phi(s_i,a_i)^\top w\\
		&+\phi(s_0,a_0)\sum_{i=1}^{n}\gamma^i\prod_{j=1}^{i-1}c(s_j,a_j)\rho(s_i,a_i)\phi(s_i,a_i)^\top w\\
		=\;&\phi(s_0,a_0)\sum_{i=0}^{n-1}\gamma^i\prod_{j=1}^ic(s_j,a_j)\mathcal{R}(s_i,a_i)-\phi(s_0,a_0)\phi(s_0,a_0)^\top w\\
		&+\phi(s_0,a_0)\sum_{i=1}^{n-1}\gamma^i\prod_{j=1}^{i-1}c(s_j,a_j)(\rho(s_i,a_i)-c(s_i,a_i))\phi(s_i,a_i)^\top w\\
		&+\phi(s_0,a_0)\gamma^n\prod_{j=1}^{n-1}c(s_j,a_j)\rho(s_n,a_n)\phi(s_n,a_n)^\top w.
	\end{align*}
	We now proceed and show the Lipschitz property. For any $w_1,w_2\in\mathbb{R}^d$ and $x=(s_0,a_0,...,s_n,a_n)\in\mathcal{X}$, using the fact that $\|\phi(s,a)\|_2\leq \|\phi(s,a)\|_1\leq \|\Phi\|_\infty\leq 1$, we have
	\begin{align*}
		&\|F(w_1,x)-F(w_2,x)\|_2\\
		\leq \;&\|\phi(s_0,a_0)\phi(s_0,a_0)^\top(w_1-w_2)\|_2\\
		&+\left\|\phi(s_0,a_0)\sum_{i=1}^{n-1}\gamma^i\prod_{j=1}^{i-1}c(s_j,a_j)(\rho(s_i,a_i)-c(s_i,a_i))\phi(s_i,a_i)^\top (w_1-w_2)\right\|_2\\
		&+\left\|\phi(s_0,a_0)\gamma^n\prod_{j=1}^{n-1}c(s_j,a_j)\rho(s_n,a_n)\phi(s_n,a_n)^\top ( w_1-w_2)\right\|_2\\
		\leq \;&\|w_1-w_2\|_2+\sum_{i=1}^{n-1}\gamma^ic_{\max}^{i-1}\max_{s,a}|\rho(s,a)-c(s,a)|\|w_1-w_2\|_2\\
		&+\gamma^nc_{\max}^{n-1}\rho_{\max}\|w_1-w_2\|_2\\
		=\;&\left(1+\gamma \max_{s,a}|\rho(s,a)-c(s,a)|\frac{1-(\gamma c_{\max})^{n-1}}{1-\gamma c_{\max}}+\gamma^n c_{\max}^{n-1}\rho_{\max}\right)\|w_1-w_2\|_2\\
		\leq \;&\begin{dcases}
			(1+(\gamma \rho_{\max})^n)\|w_1-w_2\|_2,&c(\cdot,\cdot)=\rho(\cdot,\cdot)\\
			(1+\gamma \rho_{\max})f_n(\gamma c_{\max})\|w_1-w_2\|_2,&c(\cdot,\cdot)\neq \rho(\cdot,\cdot).
		\end{dcases}
	\end{align*}
	This proves that
	\begin{align*}
		\|F(w_1,x)-F(w_2,x)\|_2\leq L\|w_1-w_2\|_2,\quad\forall\;w_1,w_2\in\mathbb{R}^d.
	\end{align*}
	Similarly, for any $x=(s_0,a_0,...,s_n,a_n)\in\mathcal{X}$, we have
	\begin{align*}
		\|F(\bm{0},x)\|_2&=\left\|\phi(s_0,a_0)\sum_{i=0}^{n-1}\gamma^i\prod_{j=1}^ic(s_j,a_j)\mathcal{R}(s_i,a_i)\right\|_2\leq \sum_{i=0}^{n-1}\gamma^ic_{\max}^i\leq f_n(\gamma c_{\max}).
	\end{align*}
	\item Under Assumption \ref{as:MC}, it is clear that the stationary distribution $\nu$ of the Markov chain $\{X_k\}$ is given by
	\begin{align*}
		\nu(s_0,a_0,...,s_n,a_n)=\mu(s_0)\left(\prod_{i=0}^{n-1}\pi_b(a_i|s_i)P_{a_i}(s_i,s_{i+1})\right)\pi_b(a_n|s_n)
	\end{align*}
for all $(s_0,a_0,...,s_n,a_n)\in\mathcal{X}$.
	Moreover, for any $x=(s_0,a_0,...,s_n,a_n)\in\mathcal{X}$, we have for any $k\geq 0$ that
	\begin{align*}
		\left\|P_X^{k+n+1}(x,\cdot)-\nu(\cdot)\right\|_{\text{TV}}
		=\;&\frac{1}{2}\sum_{s_0',a_0',\cdots,s_n',a_n'}\left|\sum_{s}P_{a_n}(s_n,s)P^k_{\pi_b}(s,s_0')-\mu(s_0')\right|\\
		&\times \left[\prod_{i=0}^{n-1}\pi_b(a_i'\mid s_i')P_{a_i'}(s_i',s_{i+1}')\right]\pi_b(a_n'\mid s_n')\\
		=\;&\frac{1}{2}\sum_{s_0'}\left|\sum_{s}P_{a_n}(s_n,s)P^k_{\pi_b}(s,s_0')-\mu(s_0')\right|\\
		\leq \;&\frac{1}{2}\sum_{s}P_{a_n}(s_n,s)\sum_{s_0'}\left|P^k_{\pi_b}(s,s_0')-\mu(s_0')\right|\\
		\leq \;&\max_{s\in\mathcal{S}}\|P^k_{\pi_b}(s,\cdot)-\mu(\cdot)\|_{\text{TV}}\\
		\leq \;&C\sigma^k.
	\end{align*}
	Since the RHS of the previous inequality does not depend on $x$, we in fact have 
	\begin{align*}
		\max_{x\in\mathcal{X}}\left\|P^{k+n+1}_X(x,\cdot)-\nu(\cdot)\right\|_{\text{TV}}\leq C\sigma^k,\quad \forall\;k\geq 0.
	\end{align*}
	\item Using the fact that $\mathcal{B}_{c,\rho}(\cdot)$ is a linear operator, we have for any $w\in\mathbb{R}^d$ that
	\begin{align*}
		&(w-w_{c,\rho}^\pi)^\top\Bar{F}(w)\\
		=\;& (w-w_{c,\rho}^\pi)^\top\Phi^\top \mathcal{K}_{SA}\left(\mathcal{B}_{c,\rho}(\Phi w)-\Phi w\right)\\
		=\;& (w-w_{c,\rho}^\pi)^\top\Phi^\top \mathcal{K}_{SA}\left(\mathcal{B}_{c,\rho}(\Phi w)-\mathcal{B}_{c,\rho}(\Phi w_{c,\rho}^\pi)\right)-(w-w_{c,\rho}^\pi)^\top\Phi^\top \mathcal{K}_{SA}\Phi (w-w_{c,\rho}^\pi)\\
		=\;& (w-w_{c,\rho}^\pi)^\top\Phi^\top \mathcal{K}_{SA} \Phi (\Phi^\top \mathcal{K}_{SA}\Phi)^{-1}\Phi^\top \mathcal{K}_{SA}\mathcal{B}_{c,\rho}(\Phi (w- w_{c,\rho}^\pi))\\
		&-(w-w_{c,\rho}^\pi)^\top\Phi^\top \mathcal{K}_{SA}\Phi (w-w_{c,\rho}^\pi)\\
		=\;& (w-w_{c,\rho}^\pi)^\top\Phi^\top \mathcal{K}_{SA} \Phi (\Phi^\top \mathcal{K}_{SA}\Phi)^{-1}\Phi^\top \mathcal{K}_{SA}\mathcal{B}_{c,\rho}(\Phi (w- w_{c,\rho}^\pi))\\
		&-(w-w_{c,\rho}^\pi)^\top\Phi^\top \mathcal{K}_{SA}\Phi (w-w_{c,\rho}^\pi)\\
		\leq \;& \|\Phi(w-w_{c,\rho}^\pi)\|_{\mathcal{K}_{SA}} \|\Phi (\Phi^\top \mathcal{K}_{SA}\Phi)^{-1}\Phi^\top \mathcal{K}_{SA}\mathcal{B}_{c,\rho}(\Phi (w- w_{c,\rho}^\pi))\|_{\mathcal{K}_{SA}}\\
		&- \|\Phi (w-w_{c,\rho}^\pi)\|_{\mathcal{K}_{SA}}^2\\
		=\;&\|\Phi(w-w_{c,\rho}^\pi)\|_{\mathcal{K}_{SA}} \|\text{Proj}_{\mathcal{Q}}\mathcal{B}_{c,\rho}(\Phi (w- w_{c,\rho}^\pi))\|_{\mathcal{K}_{SA}}- \|\Phi (w-w_{c,\rho}^\pi)\|_{\mathcal{K}_{SA}}^2\\
		\leq \;&\gamma_c\|\Phi(w-w_{c,\rho}^\pi)\|_{\mathcal{K}_{SA}} \|\Phi (w- w_{c,\rho}^\pi)\|_{\mathcal{K}_{SA}}- \|\Phi (w-w_{c,\rho}^\pi)\|_{\mathcal{K}_{SA}}^2\\
		=\;&-(1-\gamma_c)\|\Phi (w-w_{c,\rho}^\pi)\|_{\mathcal{K}_{SA}}^2\\
		\leq \;&-(1-\gamma_c)\lambda_{\min}\|w-w_{c,\rho}^\pi\|_2^2.
	\end{align*}
\end{enumerate}
\end{proof}

Proposition \ref{prop:PE:properties} (1) establishes the Lipschitz continuity of the operator $F(\cdot,\cdot)$, Proposition \ref{prop:PE:properties} (2) establishes the geometric mixing of the auxiliary Markov chain $\{X_k\}$, and Proposition \ref{prop:PE:properties} (3) essentially guarantees that the ODE $\dot{x}(t)=\Bar{F}(x(t))$ associated with stochastic approximation algorithm (\ref{eq:sa}) is globally geometrically stable.  The rest of the proof follows by applying Theorem 2.1 of \cite{chen2019finitesample} to Algorithm \ref{algorithm:Off-Policy-TD}. In particular, when using constant stepsize (i.e., $\alpha_k\equiv \alpha$) with $\alpha$ chosen such that $\alpha(t_\alpha+n+1)\leq \frac{(1-\gamma_c)\lambda_{\min}}{130L^2}$, we have for all $k\geq t_\alpha+n+1$ that
\begin{align*}
	\mathbb{E}[\|w_k-w_{c,\rho}^\pi\|_2^2]\leq c_1(1-(1-\gamma_c)\lambda_{\min}\alpha)^{k-(t_\alpha+n+1)}+c_2\frac{\alpha L^2(t_\alpha+n+1)}{(1-\gamma_c)\lambda_{\min}},
\end{align*}
where $c_1=(\|w_0\|_2+\|w_0-w_{c,\rho}^\pi\|_2+1)^2$ and $c_2=130(\|w_{c,\rho}^\pi\|_2+1)^2$.  

When using diminishing stepsizes of the form $\alpha_k=\frac{\alpha}{k+h}$ with $\alpha>\frac{1}{(1-\gamma_c)\lambda_{\min}}$ and $h$ chosen such that $\sum_{i=k-(t_k+n+1)}^{k-1}\alpha_i\leq \frac{(1-\gamma_c)\lambda_{\min}}{130L^2}$ for all $k\geq t_k+n+1$, we have 
\begin{align*}
	\mathbb{E}[\|w_k-w_{c,\rho}^\pi\|_2^2]\leq c_1\frac{k_0+h}{k+h}+c_2\frac{8e\alpha^2}{(1-\gamma_c)\lambda_{\min}\alpha-1}\frac{t_k+n+1}{k+h}
\end{align*}
for all $k\geq k_0$, where $k_0=\min\{k\;:\;k\geq t_k+n+1\}$.

The finite-sample bounds of $\lambda$-averaged $Q$-trace and two-sided $Q$-trace directly follow from Theorem \ref{thm:Off-Policy-TD}. To show the performance guarantee (i.e. Eqs. (\ref{eq:Qtrace_performance}) and (\ref{eq:twosidedQ-performance})) on the limit point $w_{c,\rho}^\pi$, we apply Lemma \ref{le:contraction}. Note that when $c(s,a)=\rho(s,a)=\lambda(s)\frac{\pi(a|s)}{\pi_b(a|s)}+1-\lambda(s)$ for all $(s,a)$, we have for any $s\in\mathcal{S}$ that
\begin{align*}
	\sum_{a\in\mathcal{A}}|\pi(a|s)-\pi_b(a|s)\rho(s,a)|=\;& (1-\lambda(s))\sum_{a\in\mathcal{A}}|\pi(a|s)-\pi_b(a|s)|\\
	=\;&(1-\lambda(s))\|\pi(\cdot|s)-\pi_b(\cdot|s)\|_1.
\end{align*}
When $c(s,a)=\rho(s,a)=g_{\ell(s),u(s)}(\pi(a|s)/\pi_b(a|s))$ for all $(s,a)$, we have
\begin{align*}
	&\sum_{a\in\mathcal{A}}|\pi(a|s)-\pi_b(a|s)\rho(s,a)|\\
	\leq \;&\sum_{a\in\mathcal{A}}|(\pi(a|s)-\pi_b(a|s)\ell(s))\mathds{1}\{\pi(a|s)<\ell(s)\pi_b(a|s)\}|\\
	&+\sum_{a\in\mathcal{A}}|(\pi(a|s)-\pi_b(a|s)u(s))\mathds{1}\{\pi(a|s)>u(s)\pi_b(a|s)\}|\\
	=\;&\sum_{a\in\mathcal{A}}\max(\pi(a|s)-\pi_b(a|s)u(s),0)-\min(\pi(a|s)-\pi_b(a|s)\ell(s),0)\\
	=\;&\sum_{a\in\mathcal{A}}(u_{\pi,\pi_b}(s,a)-\ell_{\pi,\pi_b}(s,a)).
\end{align*}
This completes the proof.

\subsection{Proof of Theorem \ref{thm:combine}}\label{subsec:pf:main}

In view of Theorem  \ref{thm:main}, it remains to control the term $N_2$ using Theorem \ref{thm:Off-Policy-TD}. Using triangle inequality and we have for any $0\leq t\leq T-1$:
\begin{align}
	&\mathbb{E}[\|Q^{\pi_t}-\Phi w_t\|_\infty]\\
	\leq\;& \mathbb{E}[\|Q^{\pi_t}-\Phi w_{c,\rho}^{\pi_t}\|_\infty]+\mathbb{E}[\|\Phi (w_{c,\rho}^{\pi_t}- w_t)\|_\infty]\nonumber\\
	\leq\;& \mathbb{E}[\|Q^{\pi_t}-\Phi w_{c,\rho}^{\pi_t}\|_\infty]+\|\Phi\|_\infty\mathbb{E}[ \|w_{c,\rho}^{\pi_t}- w_t\|_\infty]\nonumber\\
	\leq\;& \mathbb{E}[\|Q^{\pi_t}-\Phi w_{c,\rho}^{\pi_t}\|_\infty]+\mathbb{E}[ \|w_{c,\rho}^{\pi_t}- w_t\|_\infty]\tag{$\|\Phi\|_\infty\leq 1$}\\
	\leq\;& \mathbb{E}[\|Q_{c,\rho}^{\pi_t}-\Phi w_{c,\rho}^{\pi_t}\|_\infty]+\mathbb{E}[\|Q_{c,\rho}^{\pi_t}-Q^{\pi_t}\|_\infty]+\mathbb{E}[ \|w_{c,\rho}^{\pi_t}- w_t\|_\infty]\nonumber\\
	\leq\;& \mathcal{E}_{\text{approx}}+\frac{\gamma}{(1-\gamma)^2}\max_{s\in\mathcal{S}}(1-\lambda(s))\|\pi_t(\cdot|s)-\pi_b(\cdot|s)\|_1+\mathbb{E}[ \|w_{c,\rho}^{\pi_t}- w_t\|_\infty]\tag{Apply Eq. (\ref{eq:169})}\\
	\leq\;& \mathcal{E}_{\text{approx}}+\frac{\gamma}{(1-\gamma)^2}\mathcal{E}_{\text{bias}}+\mathbb{E}[ \|w_{c,\rho}^{\pi_t}- w_t\|_\infty]\label{eq:90}.
\end{align}
To control $\mathbb{E}[ \|w_{c,\rho}^{\pi_t}- w_t\|_\infty]$, using Theorem \ref{co:Qtrace-bound} and we have for all $t=0,\cdots,T-1$ that 
\begin{align*}
	\mathbb{E}[\|w_{c,\rho}^{\pi_t}-w_t\|_\infty]\leq \;&\mathbb{E}[\|w_{c,\rho}^{\pi_t}-w_t\|_2]\\
	\leq \;&(\mathbb{E}[\|w_{c,\rho}^{\pi_t}-w_t\|_2^2])^{1/2}\tag{Jensen's Inequality}\\
	\leq \;& c_{1,i}(1-(1-\gamma_c)\lambda_{\min}\alpha)^{\frac{1}{2}[K-(t_\alpha+n+1)]}+c_{2,i}\frac{[\alpha(t_\alpha+n+1)]^{1/2}}{\sqrt{1-\gamma_c}\sqrt{\lambda_{\min}}},
\end{align*}
where 
$c_{1,t}=\|w^{\pi_t}_{c,\rho}\|_2+1$ and $c_{2,t}=11.5L(\|w_{c,\rho}^{\pi_t}\|_2+1)$. Note that the last line of the previous inequality follows from $\sqrt{a+b}\leq \sqrt{a}+\sqrt{b}$ for any $a,b\geq 0$. To further control the constants $c_{1,t}$ and $c_{2,t}$, note that we have for any policy $\pi$ that
\begin{align*}
	\|w_{c,\rho}^\pi\|_2&\leq \frac{1}{\sqrt{\lambda_{\min}}}\|\Phi w_{c,\rho}^\pi\|_{\mathcal{K}_{SA}}\\
	&\leq \frac{1}{\sqrt{\lambda_{\min}}}\left(\|Q_{c,\rho}^\pi\|_{\mathcal{K}_{SA}}+\frac{1}{\sqrt{1-\gamma_c^2}(1-\gamma )}\right)\tag{Eq. (\ref{eq:5})}\\
	&\leq \frac{1}{\sqrt{\lambda_{\min}}}\left(\frac{1}{1-\gamma }+\frac{1}{\sqrt{1-\gamma_c^2}(1-\gamma )}\right)\\
	&\leq \frac{2}{\sqrt{\lambda_{\min}}(1-\gamma )\sqrt{1-\gamma_c}}.
\end{align*}
Therefore we have $c_{1,t}\leq \frac{3}{\sqrt{\lambda_{\min}}(1-\gamma )\sqrt{1-\gamma_c}}$ and $c_{2,t}\leq \frac{35L}{\sqrt{\lambda_{\min}}(1-\gamma )\sqrt{1-\gamma_c}}$ for any $0\leq t\leq T-1$. Substituting the upper bound we obtained for $\mathbb{E}[\|w_{c,\rho}^{\pi_t}-w_t\|_\infty]$ into Eq. (\ref{eq:90}) and we have for any $0\leq i\leq T-1$:
\begin{align*}
	\mathbb{E}[\|Q^{\pi_t}-\Phi w_t\|_\infty]\leq\;& \mathcal{E}_{\text{approx}}+\frac{\gamma\mathcal{E}_{\text{bias}}}{(1-\gamma)^2}+\frac{3(1-(1-\gamma_c)\lambda_{\min}\alpha)^{\frac{1}{2}[K-(t_\alpha+n+1)]}}{\sqrt{\lambda_{\min}}(1-\gamma )\sqrt{1-\gamma_c}}\\
	&+\frac{35L[\alpha(t_\alpha+n+1)]^{1/2}}{(1-\gamma)(1-\gamma_c)\lambda_{\min}}.
\end{align*}
Finally, using the previous inequality and we obtain the following bound on the term $N_2$ in Theorem \ref{thm:main}:
\begin{align*}
	N_2=\;&\frac{2\gamma}{1-\gamma}\sum_{t=0}^{T-1}\gamma^{T-1-t}\mathbb{E}[\|Q^{\pi_t}-\Phi w_t\|_\infty]\\
	\leq \;&\frac{2\gamma\mathcal{E}_{\text{approx}}}{(1-\gamma)^2} +\frac{2\gamma^2\mathcal{E}_{\text{bias}}}{(1-\gamma)^4}+\frac{6(1-(1-\gamma_c)\lambda_{\min}\alpha)^{\frac{1}{2}[K-(t_\alpha+n+1)]}}{(1-\gamma)^3(1-\gamma_c)^{1/2}\lambda_{\min}^{1/2}}\\
	&+\frac{70L[\alpha(t_\alpha+n+1)]^{1/2}}{\lambda_{\min}(1-\gamma_c)(1-\gamma)^3}.
\end{align*}
The rest of the proof follows by using the upper bound we obtained for the term $N_2$ in Theorem \ref{thm:main}.

\section{Conclusion}\label{sec:conclusion}

In this work, we study finite-sample guarantees of general policy-based algorithms (especially natural policy gradient) under off-policy sampling and linear function approximation. By viewing natural policy gradient as an approximate version of PI, we establish its geometric convergence without requiring any regularization. As for the critic, to overcome the deadly triad and the high variance in off-policy learning, we design a convergent framework of single time-scale TD-learning algorithms, including two specific algorithms called $\lambda$-averaged $Q$-trace and two-sided $Q$-trace. After combining the actor and the critic, we obtain an overall sample complexity bound of $\Tilde{\mathcal{O}}(\epsilon^{-2})$.

\bibliographystyle{apalike}
{\small \bibliography{references}}

\begin{center}
	\large \textbf{Appendices}
\end{center}

\appendix

\section{Proof of All Technical Lemmas}

\subsection{Proof of Lemma \ref{le:difference}}\label{pf:le:difference}
For any $\beta>0$, consider the function $h_\beta:\mathbb{R}^d\mapsto\mathbb{R}$ defined by
\begin{align*}
	h_\beta(x)=\frac{1}{\beta}\log\left(\sum_{i=1}^dy_ie^{\beta x_i}\right).
\end{align*}
Assume without loss of generality that $i_{\max}=1$. Then it is clear that $h_\beta(x)\leq x_1$. On the other hand, we have
\begin{align}\label{eq:167}
	x_1\leq  \frac{1}{\beta}\log\left(\sum_{i=1}^d\frac{y_i}{y_1} e^{\beta x_i}\right)=h_\beta(x)+\frac{1}{\beta}\log\left(\frac{1}{y_1}\right).
\end{align}
Since it is well-known that $h_\beta(x)$ is a convex differentiable function, we have for any $x\in\mathbb{R}^d$ that $h_\beta(\bm{0})-h_\beta(x)\geq \langle\nabla h_\beta(x),-x\rangle$, which implies
\begin{align}\label{eq:166}
	\langle\nabla h_\beta(x),x\rangle=\frac{\sum_{i=1}^dx_iy_ie^{\beta x_i}}{\sum_{j=1}^dy_je^{\beta x_j}}\geq h_\beta(x)-h_\beta(0)=h_\beta(x).
\end{align}
Using Eqs. (\ref{eq:167}) and (\ref{eq:166}) and we finally obtain
\begin{align*}
	\max_{1\leq i\leq d}x_i-\frac{\sum_{i=1}^dx_iy_ie^{\beta x_i}}{\sum_{j=1}^dy_je^{\beta x_j}}
	\leq x_1-h_\beta (x)\leq \frac{1}{\beta}\log\left(\frac{1}{y_1}\right).
\end{align*}

\subsection{Proof of Lemma \ref{le:equation}}
We begin by introducing some notation. Let $\pi_c$ and $\pi_\rho$ be two policies defined by
\begin{align*}
	\pi_c(a|s)=\frac{\pi_b(a|s)c(s,a)}{\sum_{a'\in\mathcal{A}}\pi_b(a'|s)c(s,a')},\quad\text{ and }\quad\pi_\rho(a|s)=\frac{\pi_b(a|s)\rho(s,a)}{\sum_{a'\in\mathcal{A}}\pi_b(a'|s)\rho(s,a')},\quad \forall\;(s,a).
\end{align*}
Let $P_{\pi_c}$ and $P_{\pi_\rho}$ be the transition probability matrices of the Markov chain $\{S_k\}$ induced by the policies $\pi_c$ and $\pi_{\rho}$, respectively. Then, Eq. (\ref{eq:11}) can be compactly written in vector form as
\begin{align*}
	\Phi^\top \mathcal{K}_{SA}\sum_{i=0}^{n-1}(\gamma P_{\pi_c}D_c)^i(R+\gamma P_{\pi_\rho}D_\rho \Phi w-\Phi w)=0,
\end{align*}
where $R\in\mathbb{R}^{|\mathcal{S}||\mathcal{A}|}$ is defined by $R(s,a)=\mathcal{R}(s,a)$ for all $(s,a)$. Observe that the above equation is further equivalent to
\begin{align}\label{eq:12}
	\Phi(\Phi^\top \mathcal{K}_{SA}\Phi)^{-1}\Phi^\top \mathcal{K}_{SA}\sum_{i=0}^{n-1}(\gamma P_{\pi_c}D_c)^i(R+\gamma P_{\pi_\rho}D_\rho \Phi w-\Phi w)=0.
\end{align}
To see this, note that the matrix $\Phi$ has full column-rank, and the matrix $\Phi^\top \mathcal{K}_{SA}\Phi$ is positive definite and hence invertible. Therefore, we have $x=\bm{0}$ if and only if $\Phi(\Phi^\top \mathcal{K}_{SA}\Phi)^{-1} x=\bm{0}$. To rewrite Eq. (\ref{eq:12}) in the desired form of the generalized PBE (\ref{eq:pbe-key}), we use the following three observations. 
\begin{enumerate}[(1)]
	\item The projection operator $\text{Proj}_{\mathcal{Q}}(\cdot)$ is explicitly given by $\text{Proj}_{\mathcal{Q}}(\cdot)=\Phi(\Phi^\top \mathcal{K}_{SA}\Phi)^{-1}\Phi^\top\mathcal{K}_{SA}(\cdot)$.
	\item The operator $\mathcal{T}_c(\cdot)$ is explicitly given by $\mathcal{T}_c(\cdot)=\sum_{i=0}^{n-1}(\gamma P_{\pi_c}D_c)^i(\cdot)$.
	\item The operator $\mathcal{H}_\rho(\cdot)$ is explicitly given by $\mathcal{H}_\rho(\cdot)=R+\gamma P_{\pi_\rho}D_\rho(\cdot)$.
\end{enumerate}
Therefore, Eq. (\ref{eq:12}) is equivalent to
\begin{align}\label{eq:13}
	\text{Proj}_{\mathcal{Q}} [\mathcal{T}_c(\mathcal{H}_\rho(\Phi w)-\Phi w)]=0.
\end{align}
Finally, adding and subtracting $\Phi w$ on both sides of the previous inequality and we obtain the desired generalized PBE:
\begin{align*}
	\Phi w&= \text{Proj}_{\mathcal{Q}} [\mathcal{T}_c(\mathcal{H}_\rho(\Phi w)-\Phi w)]+\Phi w\\
	&=\text{Proj}_{\mathcal{Q}} [\mathcal{T}_c(\mathcal{H}_\rho(\Phi w)-\Phi w)+\Phi w]\\
	&=\text{Proj}_{\mathcal{Q}}\mathcal{B}_{c,\rho}(\Phi w),
\end{align*}
where the second equality follows from (1) $\Phi w\in\mathcal{Q}$ and (2) $\text{Proj}_{\mathcal{Q}}(\cdot)$ is a linear operator.

\subsection{Proof of Lemma \ref{le:Lipschitz_factor}}
For any $Q_1,Q_2\in\mathbb{R}^{|\mathcal{S}||\mathcal{A}|}$, using the fact that $\text{Proj}_{\mathcal{Q}}$ is non-expansive with respect to $\|\cdot\|_{\mathcal{K}_{SA}}$, we have
\begin{align*}
	\|\text{Proj}_{\mathcal{Q}}\mathcal{B}_{c,\rho}(Q_1)-\text{Proj}_{\mathcal{Q}}\mathcal{B}_{c,\rho}(Q_2)\|_{\mathcal{K}_{SA}}
	\leq \;&\|\mathcal{B}_{c,\rho}(Q_1)-\mathcal{B}_{c,\rho}(Q_2)\|_{\mathcal{K}_{SA}}\\
	\leq \;&\|\mathcal{B}_{c,\rho}(Q_1)-\mathcal{B}_{c,\rho}(Q_2)\|_\infty\tag{$\|\cdot\|_{\mathcal{K}_{SA}}\leq \|\cdot\|_\infty$}\\
	\leq \;&\tilde{\gamma}(n)\|Q_1-Q_2\|_\infty\\
	\leq \;&\frac{\tilde{\gamma}(n)}{\sqrt{\mathcal{K}_{SA,\min}}}\|Q_1-Q_2\|_{\mathcal{K}_{SA}}\tag{$\|\cdot\|_\infty\leq \frac{1}{\sqrt{\mathcal{K}_{SA,\min}}}\|\cdot\|_{\mathcal{K}_{SA}}$},
\end{align*}
where the third inequality follows from $\mathcal{B}_{c,\rho}(\cdot)$ being a $\tilde{\gamma}(n)$-contraction operator with respect to $\|\cdot\|_\infty$ \citep{chen2021off} \footnote{\cite{chen2021off} works with an asynchronous variant of the generalized Bellman operator, which is shown to be a contraction mapping with respect to $\|\cdot\|_\infty$ with contraction factor $1-\mathcal{K}_{SA,\min}f_n(\gamma D_{c,\min})(1-\gamma D_{\rho,\max})$. In this paper we work with the synchronous generalized Bellman operator $\mathcal{B}_{c,\rho}(\cdot)$. In this case, one can easily verify that the corresponding contraction factor can be obtained by simply dropping the factor  $\mathcal{K}_{SA,\min}$.}.

\subsection{Proof of Lemma \ref{le:contraction}}
We first show that under Condition \ref{as:IS_ratio} (3), we have $\lim_{n\rightarrow\infty}\tilde{\gamma}(n)/\sqrt{\mathcal{K}_{SA,\min}}<1$. Using the explicit expression of  $\tilde{\gamma}(n)$, we have
\begin{align*}
	\lim_{n\rightarrow\infty}\frac{\tilde{\gamma}(n)}{\sqrt{\mathcal{K}_{SA,\min}}}
	&=\lim_{n\rightarrow\infty}\frac{1-f_n(\gamma D_{c,\min})(1-\gamma D_{\rho,\max})}{\sqrt{\mathcal{K}_{SA,\min}}}\\
	&=\lim_{n\rightarrow\infty}\frac{1-\frac{1-(\gamma D_{c,\min})^n}{1-\gamma D_{c,\min}}(1-\gamma D_{\rho,\max})}{\sqrt{\mathcal{K}_{SA,\min}}}\tag{$f_n(x)=\sum_{i=0}^{n-1}x^i$ and $\gamma D_{c,\min}<1$}\\
	&=\frac{\gamma (D_{\rho,\max}-D_{c,\min})}{(1-\gamma D_{c,\min})\sqrt{\mathcal{K}_{SA,\min}}}\\
	&<1. \tag{Condition \ref{as:IS_ratio} (3)}
\end{align*}
Therefore, when $n$ is chosen such that $\gamma_c=\frac{\tilde{\gamma}(n)}{\sqrt{\mathcal{K}_{SA,\min}}}<1$, we have by Lemma \ref{le:Lipschitz_factor} that
\begin{align*}
	\|\text{Proj}_{\mathcal{Q}}\mathcal{B}_{c,\rho}(Q_1)-\text{Proj}_{\mathcal{Q}}\leq \gamma_c\|Q_1-Q_2\|_{\mathcal{K}_{SA}},\quad \forall\;Q_1,Q_2\in\mathbb{R}^{|\mathcal{S}||\mathcal{A}|}.
\end{align*}
It follows that the composed operator $\text{Proj}_{\mathcal{Q}}\mathcal{B}_{c,\rho}(\cdot)$ is a contraction mapping with respect to $\|\cdot\|_{\mathcal{K}_{SA}}$, with contraction factor $\gamma_c$. 

Next consider the difference between $Q^\pi$ and $\Phi w_{c,\rho}^\pi$. First of all, we have by triangle inequality that
\begin{align}
	\|Q^\pi-\Phi w_{c,\rho}^\pi\|_{\mathcal{K}_{SA}}&=\|Q^\pi-Q_{c,\rho}^\pi+Q_{c,\rho}^\pi-\Phi w_{c,\rho}^\pi\|_{\mathcal{K}_{SA}}\nonumber\\
	&\leq \|Q^\pi-Q_{c,\rho}^\pi\|_{\mathcal{K}_{SA}}+\|Q_{c,\rho}^\pi-\Phi w_{c,\rho}^\pi\|_{\mathcal{K}_{SA}}.\label{eq:70}
\end{align}
We next bound each term on the RHS of the previous inequality. For the first term, it was already established in Proposition 2.1 of \cite{chen2021off} that
\begin{align}\label{eq:169}
	\|Q^\pi-Q_{c,\rho}^\pi\|_{\mathcal{K}_{SA}
	}\leq \|Q^\pi-Q_{c,\rho}^\pi\|_\infty&\leq \frac{\gamma\max_{s\in\mathcal{S}}\sum_{a\in\mathcal{A}}|\pi(a|s)-\pi_b(a|s)\rho(s,a)|}{(1-\gamma)(1-\gamma D_{\rho,\max})}.
\end{align}
Now consider the second term on the RHS of Eq. (\ref{eq:70}). First note that
\begin{align*}
	&\|Q_{c,\rho}^\pi-\Phi w_{c,\rho}^\pi\|_{\mathcal{K}_{SA}}^2\\
	=\;&\|Q_{c,\rho}^\pi-\text{Proj}_{\mathcal{Q}}Q_{c,\rho}^\pi+\text{Proj}_{\mathcal{Q}}Q_{c,\rho}^\pi-\Phi w_{c,\rho}^\pi\|_{\mathcal{K}_{SA}}^2\\
	=\;&\|Q_{c,\rho}^\pi-\text{Proj}_{\mathcal{Q}}Q_{c,\rho}^\pi\|_{\mathcal{K}_{SA}}^2+\|\text{Proj}_{\mathcal{Q}}Q_{c,\rho}^\pi-\Phi w_{c,\rho}^\pi\|_{\mathcal{K}_{SA}}^2\tag{$*$}\\
	=\;&\|Q_{c,\rho}^\pi-\text{Proj}_{\mathcal{Q}}Q_{c,\rho}^\pi\|_{\mathcal{K}_{SA}}^2+\|\text{Proj}_{\mathcal{Q}}\mathcal{B}_{c,\rho}(Q_{c,\rho}^\pi)-\text{Proj}_{\mathcal{Q}}\mathcal{B}_{c,\rho}(\Phi w_{c,\rho}^\pi)\|_{\mathcal{K}_{SA}}^2\\
	\leq\;& \|Q_{c,\rho}^\pi-\text{Proj}_{\mathcal{Q}}Q_{c,\rho}^\pi\|_{\mathcal{K}_{SA}}^2+\gamma_c^2\|Q_{c,\rho}^\pi-\Phi w_{c,\rho}^\pi\|_{\mathcal{K}_{SA}}^2,
\end{align*}
where Eq. ($*$) follows from the Babylonian–Pythagorean theorem (observe that $Q_{c,\rho}^\pi-\text{Proj}_{\mathcal{Q}}Q_{c,\rho}^\pi\perp \mathcal{Q}$ and $\text{Proj}_{\mathcal{Q}}Q_{c,\rho}^\pi-\Phi w_{c,\rho}^\pi \in\mathcal{Q}$). Rearrange the previous inequality and we have
\begin{align}\label{eq:5}
	\|Q_{c,\rho}^\pi-\Phi w_{c,\rho}^\pi\|_{\mathcal{K}_{SA}}&\leq \frac{1}{\sqrt{1-\gamma_c^2}}\|Q_{c,\rho}^\pi-\text{Proj}_{\mathcal{Q}}Q_{c,\rho}^\pi\|_{\mathcal{K}_{SA}}.
\end{align}
Substituting Eqs. (\ref{eq:169}) and (\ref{eq:5}) into the RHS of Eq. (\ref{eq:70}) and we finally obtain
\begin{align*}
	\|Q^\pi-\Phi w_{c,\rho}^\pi\|_{\mathcal{K}_{SA}}
	\leq\;& \frac{\gamma\max_{s\in\mathcal{S}}\sum_{a\in\mathcal{A}}|\pi(a|s)-\pi_b(a|s)\rho(s,a)|}{(1-\gamma)(1-\gamma D_{\rho,\max})}\\
	&+\frac{1}{\sqrt{1-\gamma_c^2}}\|Q_{c,\rho}^\pi-\text{Proj}_{\mathcal{Q}}Q_{c,\rho}^\pi\|_{\mathcal{K}_{SA}}.
\end{align*}

\section{The High Variance in \cite{chen2021NACLFA}}\label{ap:compare}

Consider Theorem 2.1 of \cite{chen2021NACLFA}. The constant $c_2$ on the second term is proportional to $\sum_{i=0}^{n-1}(\gamma \max_{s,a}\frac{\pi(a|s)}{\pi_b(a|s)})^i$ (which appears as $f(\gamma \zeta_\pi)$ using the notation of \cite{chen2021NACLFA}). When $\frac{\pi(a|s)}{\pi_b(a|s)}>1/\gamma$ (which can usually happen in practice), the parameter $c_2$ grows exponentially fast with respect to the bootstrapping parameter $n$. Moreover, since $n$ needs to be chosen large enough for the results in \cite{chen2021NACLFA} to hold, the variance term on the finite-sample bound of the $n$-step off-policy TD-learning algorithm with linear function approximation is exponentially large.

\end{document}